\definecolor{mydarkblue}{rgb}{0,0.08,0.45}
\begin{document} 
\title{On Structured Prediction Theory with Calibrated Convex Surrogate Losses}

\author{
    Anton Osokin \\
    INRIA/ENS\thanks{DI \'{E}cole normale sup\'{e}rieure, CNRS, PSL Research University}, Paris, France\\
    HSE\thanks{National Research University Higher School of Economics}, Moscow, Russia
    \And
    Francis Bach\\
    INRIA/ENS\footnotemark[1], Paris, France
    \And
    Simon Lacoste-Julien\\
    MILA and DIRO \\
     Universit\'{e} de Montr\'{e}al, Canada
}

\maketitle

\begin{abstract}
We provide novel theoretical insights on structured prediction in the context of \emph{efficient} convex surrogate loss minimization with consistency guarantees.
For any task loss, we construct a convex surrogate that can be optimized via stochastic gradient descent and we prove tight bounds on the so-called ``calibration function'' relating the excess surrogate risk to the actual risk.
In contrast to prior related work, we carefully monitor the effect of the exponential number of classes in the learning guarantees as well as on the optimization complexity.
As an interesting consequence, we formalize the intuition that some task losses make learning harder than others, and that the classical 0-1 loss is ill-suited for structured prediction.
\end{abstract}

\setcounter{footnote}{0}
\section{Introduction}
\label{sec:intro}
Structured prediction is a subfield of machine learning aiming at making multiple interrelated predictions simultaneously. 
The desired outputs (labels) are typically organized in some structured object such as a sequence, a graph, an image, etc.
Tasks of this type appear in many practical domains such as computer vision~\citep{nowozin2011structured}, natural language processing~\citep{smith2011linguistic} and bioinformatics~\citep{durbin1998bio}.

The structured prediction setup has at least two typical properties differentiating it from the classical binary classification problems extensively studied in learning theory:
\begin{enumerate}[topsep=-\parskip,noitemsep,itemindent=0.45cm,leftmargin=0.0cm]
    \item Exponential number of classes: this brings both additional computational and statistical challenges.
    By \emph{exponential}, we mean exponentially large in the size of the natural dimension of output, e.g., the number of all possible sequences is exponential w.r.t. the sequence length.
    \item Cost-sensitive learning: in typical applications, prediction mistakes are \emph{not} all equally costly. The prediction error is usually measured with a highly-structured task-specific loss function, e.g., Hamming distance between sequences of multi-label variables or mean average precision for ranking.
\end{enumerate}

\vspace{\parskip}Despite many algorithmic advances to tackle structured prediction problems~\citep{bakir2007predicting,nowozin2014structpred}, there have been relatively few papers devoted to its theoretical understanding. Notable recent exceptions that made significant progress include~\citet{cortes16} and~\citet{london16pac} (see references therein) which proposed data-dependent generalization error 
bounds in terms of popular empirical convex surrogate losses such as the structured hinge loss~\citep{taskar03,taskar2005learning,tsochantaridis05}.
A question not addressed by these works is whether their algorithms are \emph{consistent}: does minimizing their convex bounds with infinite data lead to the minimization of the task loss as well?
Alternatively, the structured probit and ramp losses are consistent~\cite{mcallester07,mcallester11}, but non-convex and thus it is hard to obtain computational guarantees for them.
In this paper, we aim at getting the property of consistency for surrogate losses that can be \emph{efficiently} minimized with guarantees, and thus we consider \emph{convex} surrogate losses.

The consistency of convex surrogates is well understood in the case of binary classification~\citep{zhang2004annals,bartlett06convexity,steinwart07} and there is significant progress in the case of multi-class 0-1 loss~\citep{zhang04,tewari07} and general multi-class loss functions~\citep{pires2013riskbounds,ramaswamy16calibrDim,williamson16}.
A large body of work specifically focuses on the related tasks of ranking~\cite{duchi10,calauzenes12,ramaswamy13rankSurrogates} and ordinal regression~\citep{pedregosa15ordivalreg}.\vspace{-0.5mm}

\textbf{Contributions.} In this paper, we study consistent convex surrogate losses specifically in the context of an exponential number of classes. 
We argue that even while being consistent, a convex surrogate might not allow efficient learning.
As a concrete example, \citet{ciliberto16} recently proposed a consistent approach to structured prediction, but the constant in their generalization error bound can be exponentially large as we explain in Section~\ref{sec:relatedworks}.
There are two possible sources of difficulties from the optimization perspective: to reach adequate accuracy on the \emph{task} loss, one might need to optimize a surrogate loss to exponentially small accuracy; or to reach adequate accuracy on the \emph{surrogate} loss, one might need an exponential number of algorithm steps because of exponentially large constants in the convergence rate.
We propose a theoretical framework that jointly tackles these two aspects and allows to judge the feasibility of efficient learning.
In particular, we construct a \emph{calibration function}~\cite{steinwart07}, i.e., a function setting the relationship between accuracy on the surrogate and task losses, and normalize it by the means of convergence rate of an optimization algorithm.\vspace{-0.5mm}

Aiming for the simplest possible application of our framework, we propose a family of convex surrogates that are consistent for any given task loss and can be optimized using stochastic gradient descent.
For a special case of our family (quadratic surrogate), we provide a complete analysis including general lower and upper bounds on the calibration function for any task loss, with exact values for the 0-1, block 0-1 and Hamming losses.
We observe that to have a tractable learning algorithm, one needs both a structured loss (not the 0-1 loss) and appropriate constraints on the predictor, e.g., in the form of linear constraints for the score vector functions.
Our framework also indicates that in some cases it might be beneficial to use non-consistent surrogates.
In particular, a non-consistent surrogate might allow optimization only up to specific accuracy, but exponentially faster than a consistent one.\vspace{-0.5mm}

We introduce the structured prediction setting suitable for studying consistency in Sections~\ref{sec:notation} and~\ref{sec:consistency}. We analyze the calibration function for the quadratic surrogate loss in Section~\ref{sec:quadrSurr}. We review the related works in Section~\ref{sec:relatedworks} and conclude in Section~\ref{sec:conclusion}. 

\vspace{-1.5mm}
\section{Structured prediction setup}
\label{sec:notation}
\vspace{-1mm}
In structured prediction, the goal is to predict
a structured output~$\outputvarv \in \outputdomain$
(such as a sequence, a graph, an image) given an input $\inputvarv \in \inputdomain$.
The quality of prediction is measured by a task-dependent \emph{loss function}~$\lossmatrix(\hat{\outputvarv},\outputvarv \mid \inputvarv) \geq 0$ specifying the cost for predicting~$\hat{\outputvarv}$ when the correct output is~$\outputvarv$.
In this paper, we consider the case when the number of possible predictions and the number of possible labels are both finite.
For simplicity,\footnote{Our analysis is generalizable to rectangular losses, e.g., ranking losses studied by~\citet{ramaswamy13rankSurrogates}.
} we also assume that the sets of possible predictions and correct outputs always coincide and do not depend on~$\inputvarv$.
We refer to this set as the set of labels~$\outputdomain$, denote its cardinality by~$\outputvarcard$, and map its elements to $1,\dots,\outputvarcard$.
In this setting, assuming that the loss function depends only on~$\hat{\outputvarv}$ and~$\outputvarv$, but not on~$\inputvarv$ directly, the loss is defined by a loss matrix~$\lossmatrix \in\R^{\outputvarcard \times \outputvarcard}$.
We assume that all the elements of the matrix~$\lossmatrix$ are non-negative and will use~$\Lmax$ to denote the maximal element.
Compared to multi-class classification, $\outputvarcard$ is typically exponentially large in the size of the natural dimension of~$\outputvarv$, e.g., contains all possible sequences of symbols from a finite alphabet.

Following standard practices in structured prediction~\citep{collins2002,taskar03},
we define the prediction model by a \emph{score function} $\scorefunc: \inputdomain \to \R^{\outputvarcard}$ specifying a score $\scorefunc_\outputvarv(\inputvarv)$ for each possible output~$\outputvarv \in \outputdomain$.
The final prediction is done by selecting a label with the maximal value of the score\vspace{-0.5mm}
\begin{equation}
\label{eq:predictor}
\predictor(\scorefunc(\inputvarv)) := \argmax_{\hat{\outputvarv} \in \outputdomain} \scorefunc_{\hat{\outputvarv}}(\inputvarv),
\end{equation}
with some fixed strategy to resolve ties.
To simplify the analysis, we assume that among the labels with maximal scores, the predictor always picks the one with the smallest index.

The goal of prediction-based machine learning consists in finding a predictor that works well on the unseen test set, i.e., data points coming from the same distribution~$\data$ as the one generating the training data.
One way to formalize this is to minimize the generalization error, often referred to as the actual (or population) \emph{risk} based on the loss~$\lossmatrix$,
\begin{equation}
\label{eq:risk}
\risk_\lossmatrix(\scorefunc) := \E_{(\inputvarv, \outputvarv) \sim \data} \;\lossmatrix\bigl( \predictor(\scorefunc(\inputvarv)), \outputvarv \bigr).
\end{equation}
Minimizing the actual risk~\eqref{eq:risk} is usually hard.
The standard approach is to minimize a \emph{surrogate risk}, which is a different objective easier to optimize, e.g., convex.
We define a surrogate loss as a function $\surrogateloss: \R^\outputvarcard \times \outputdomain \to \R$ depending on a score vector $\scorev = \scorefunc(\inputvarv) \in \R^\outputvarcard$ and a target label~$\outputvarv \in \outputdomain$ as input arguments.
We denote the $\outputvarv$-th component of~$\scorev$ with~$\score_\outputvarv$.
The surrogate risk (the $\surrogateloss$-risk) is defined as
\begin{equation}
\label{eq:surrogateRisk}
\risk_\surrogateloss(\scorefunc) := \E_{(\inputvarv, \outputvarv) \sim \data} \;\surrogateloss( \scorefunc(\inputvarv), \outputvarv ),
\end{equation}
where the expectation is taken w.r.t.\ the data-generating distribution~$\data$.
To make the minimization of~\eqref{eq:surrogateRisk} well-defined, we always assume that the surrogate loss~$\surrogateloss$ is bounded from below and continuous.

Examples of common surrogate losses include the structured hinge-loss~\citep{taskar03,tsochantaridis05}
$
\surrogateloss_{\text{SSVM}}( \scorev, \outputvarv )
:=
\max_{\hat{\outputvarv} \in \outputdomain} \bigl( \score_{\hat{\outputvarv}} + \lossmatrix( \hat{\outputvarv}, \outputvarv ) \bigr) - \score_\outputvarv,
$
the log loss (maximum likelihood learning) used, e.g., in conditional random fields~\citep{lafferty01crf},
$
\surrogateloss_{\text{log}}(\scorev, \outputvarv )
:=
 \log (\sum_{\hat{\outputvarv} \in \outputdomain} \exp \score_{\hat{\outputvarv}} ) - \score_\outputvarv,
$
and their hybrids~\citep{pletscher10,gimpel10,hazan10,shi2015hybrid}.

In terms of task losses, we consider the unstructured \emph{0-1 loss} $\lossmatrix_{01}(\hat{\outputvarv}, \outputvarv) := [\hat{\outputvarv} \neq \outputvarv]$,\footnote{\label{footnote:Iverson}Here we use the Iverson bracket notation, i.e., $[A] := 1$ if a logical expression~$A$ is true, and zero otherwise.} and the two following structured losses: \emph{block 0-1 loss} with $\numblocks$~equal blocks of labels $\lossmatrix_{01,\numblocks}(\hat{\outputvarv}, \outputvarv) := [\text{$\hat{\outputvarv}$ and $\outputvarv$ are not in the same block}]$; and (normalized) \emph{Hamming loss} between tuples of $\hamminglen$~binary variables~$\outputvar_\hammingindex$: $\lossmatrix_{\hamming,\hamminglen}(\hat{\outputvarv}, \outputvarv)
:=
\frac{1}{\hamminglen}\sum\nolimits_{\hammingindex=1}^\hamminglen [\hat{\outputvar}_\hammingindex \neq \outputvar_\hammingindex]
$.
To illustrate some aspects of our analysis, we also look at the \emph{mixed loss}~$\lossmatrix_{01,\numblocks,\consbreakpoint}$: a convex combination of the 0-1 and block 0-1 losses, defined as $\lossmatrix_{01,\numblocks,\consbreakpoint} := \consbreakpoint \lossmatrix_{01} + (1 - \consbreakpoint) \lossmatrix_{01,\numblocks}$ for some $\eta \in [0,1]$.

\section{Consistency for structured prediction}
\label{sec:consistency}
\subsection{Calibration function}
We now formalize the connection between the actual risk~$\risk_\lossmatrix$ and the surrogate $\surrogateloss$-risk $\risk_\surrogateloss$ via the so-called \emph{calibration function}, see Definition~\ref{def:calibrationFunc} below~\citep{bartlett06convexity,zhang04,steinwart07,duchi10,pires2013riskbounds}.
As it is standard for this kind of analysis, the setup is \emph{non-parametric}, i.e. it does not take into account the dependency of scores on input variables~$\inputvarv$.
For now, we assume that a family of score functions~$\scorefuncset_\scoresubset$ consists of all vector-valued Borel measurable functions $\scorefunc: \inputdomain \to \scoresubset$ where $\scoresubset \subseteq \R^\outputvarcard$ is a subspace of allowed score vectors, which will play an important role in our analysis.
This setting is equivalent to a pointwise analysis, i.e, looking at the different input $\inputvarv$ independently.
We bring the dependency on the input back into the analysis in Section~\ref{sec:connection} where we assume a specific family of score functions.

Let $\data_{\inputdomain}$ represent the marginal distribution for $\data$ on $\inputvarv$ and $\P(\cdot \mid \inputvarv)$ denote its conditional given $\inputvarv$. 
We can now rewrite the risk~$\risk_\lossmatrix$ and $\surrogateloss$-risk $\risk_\surrogateloss$ as
\begin{equation*}
\risk_\lossmatrix(\scorefunc)
 =
\E_{\inputvarv \sim \data_{\inputdomain}} \; \lossweighted(\scorefunc(\inputvarv), \P(\cdot \mid \inputvarv)),
\quad
\risk_\surrogateloss(\scorefunc)
=
\E_{\inputvarv \sim \data_{\inputdomain}}\;\surrogateweighted(\scorefunc(\inputvarv), \P(\cdot \mid \inputvarv)),
\end{equation*}
where the conditional risk~$\lossweighted$ and the conditional~$\surrogateloss$-risk~$\surrogateweighted$ depend on a vector of scores~$\scorev$ and a conditional distribution on the set of output labels~$\qv$ as
\begin{equation*}
\lossweighted(\scorev, \qv)
:=
\sum\nolimits_{c=1}^\outputvarcard q_c \lossmatrix( \predictor(\scorev), c ),
\quad
\surrogateweighted(\scorev, \qv)
:=
\sum\nolimits_{c=1}^\outputvarcard q_c \surrogateloss( \scorev, c ).
\end{equation*}
The \emph{calibration function}~$\calibrationfunc_{\surrogateloss,\lossmatrix,\scoresubset}$ between the surrogate loss~$\surrogateloss$ and the task loss~$\lossmatrix$ relates the excess surrogate risk with the actual excess risk via the \emph{excess risk bound}:
\begin{equation}
\label{eq:excessRiskBound}
\calibrationfunc_{\surrogateloss,\lossmatrix,\scoresubset}(\excess\lossweighted(\scorev, \qv)) \leq \excess\surrogateweighted(\scorev, \qv), \; \forall \scorev \in \scoresubset, \; \forall \qv \in \simplex_\outputvarcard,
\end{equation}
where 
$
\excess\surrogateweighted(\scorev, \qv)
=
\surrogateweighted(\scorev, \qv) - \inf_{\hat{\scorev} \in \scoresubset} \surrogateweighted(\hat{\scorev}, \qv)
$,
$
\excess\lossweighted(\scorev, \qv)
=
\lossweighted(\scorev, \qv) - \inf_{\hat{\scorev} \in \scoresubset} \lossweighted(\hat{\scorev}, \qv)
$%
\ are the excess risks and $\simplex_\outputvarcard$ denotes the probability simplex on $k$ elements.

In other words, to find a vector $\scorev$ that yields an excess risk smaller than $\eps$, we need to optimize the $\surrogateloss$-risk up to $\calibrationfunc_{\surrogateloss,\lossmatrix,\scoresubset}(\eps)$ accuracy (in the worst case).
We make this statement precise in Theorem~\ref{th:lossConnection} below, and now proceed to the formal definition of the calibration function.

\begin{definition}[Calibration function]
\label{def:calibrationFunc}
For a task loss~$\lossmatrix$, a surrogate loss~$\surrogateloss$, a set of feasible scores~$\scoresubset$, the \emph{calibration function}~$\calibrationfunc_{\surrogateloss,\lossmatrix,\scoresubset}(\eps)$ (defined for $\eps \geq 0$) equals the infimum excess of the conditional surrogate risk when the excess of the conditional actual risk is at least~$\eps$:
\begin{align}
\label{eq:calibrationfunc}
\calibrationfunc_{\surrogateloss,\lossmatrix,\scoresubset}(\eps)
:=
\;\;\;\;
\inf_{\mathclap{\scorev \in \scoresubset, \;\qv \in \simplex_\outputvarcard}} \;&\;\;\;\;\excess\surrogateweighted(\scorev, \qv) \\
\label{eq:calibrationfunc:epsConstr}
\text{\textup{s.t.}} \:&\;\;\;\;\excess\lossweighted(\scorev, \qv) \geq \eps.
\end{align}
We set $\calibrationfunc_{\surrogateloss,\lossmatrix,\scoresubset}(\eps)$ to $+\infty$ when the feasible set is empty.
\end{definition}
By construction, $\calibrationfunc_{\surrogateloss,\lossmatrix,\scoresubset}$ is non-decreasing on $[0, +\infty)$, $\calibrationfunc_{\surrogateloss,\lossmatrix,\scoresubset}(\eps) \geq 0$, the inequality~\eqref{eq:excessRiskBound} holds, and $\calibrationfunc_{\surrogateloss,\lossmatrix,\scoresubset}(0) = 0$.
Note that $\calibrationfunc_{\surrogateloss,\lossmatrix,\scoresubset}$ can be non-convex and even non-continuous (see examples in Figure~\ref{fig:exampleTranferFunctions}). Also, note that large values of $\calibrationfunc_{\surrogateloss,\lossmatrix,\scoresubset}(\eps)$ are better.

\begin{figure}
    \begin{center}
        \begin{tabular}{c@{\qquad\qquad}c}
            \includegraphics[trim = 0mm 0mm 0mm 0mm, clip, width=0.3\columnwidth]{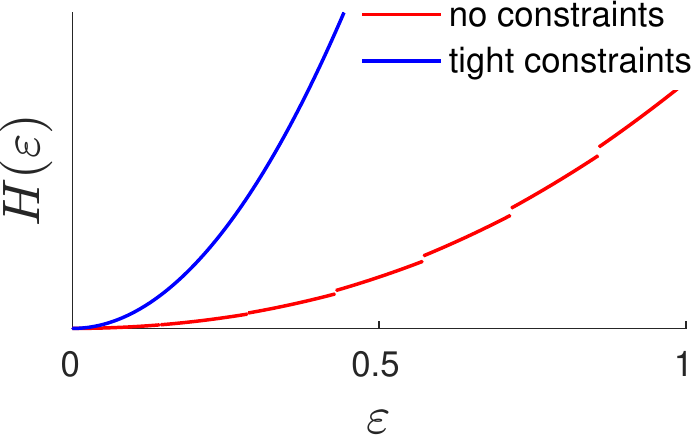} 
            &
            \includegraphics[trim = 0mm 0mm 0mm 0mm, clip, width=0.3\columnwidth]{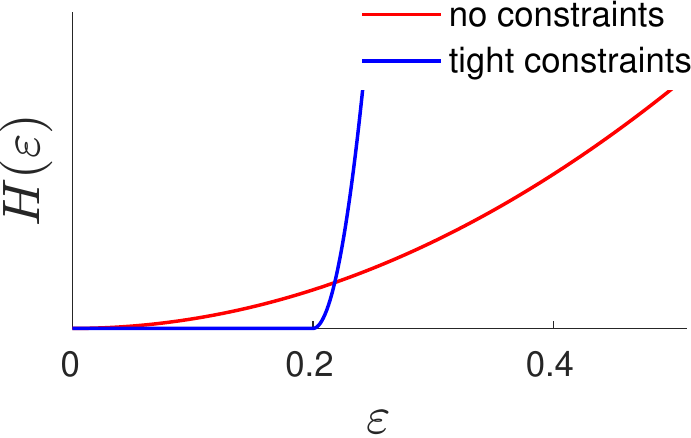} \\
            {\footnotesize (a): Hamming loss $\lossmatrix_{\hamming,\hamminglen}$} & {\footnotesize (b): Mixed loss $\lossmatrix_{01,\numblocks,0.4}$}\\[-1mm]
        \end{tabular}
    \end{center}
    \caption{\label{fig:exampleTranferFunctions} Calibration functions for the quadratic surrogate $\surrogatelossquad$~\eqref{eq:quadrLoss} defined in Section~\ref{sec:quadrSurr} and two different task losses.
        (a)~-- the calibration functions for the Hamming loss $\lossmatrix_{\hamming,\hamminglen}$ when used without constraints on the scores, $\scoresubset = \R^\outputvarcard$ (in red), and with the tight constraints implying consistency, $\scoresubset= \colspace(\lossmatrix_{\hamming,\hamminglen})$ (in blue).
        The red curve can grow exponentially slower than the blue one. 
        (b)~-- the calibration functions for the mixed loss $\lossmatrix_{01,\numblocks,\consbreakpoint}$ with $\consbreakpoint=0.4$ (see Section~\ref{sec:notation} for the definition) when used without constraints on the scores (red) and with tight constraints for the block 0-1 loss (blue).
        The blue curve represents level-$0.2$ consistency.
        The calibration function equals zero for~$\eps \leq \consbreakpoint/2$, but grows exponentially faster than the red curve representing a consistent approach and thus could be better for small $\consbreakpoint$.
        More details on the calibration functions in this figure are given in Section~\ref{sec:quadrSurr}.}
\end{figure}

\subsection{Notion of consistency}
We use the calibration function $\calibrationfunc_{\surrogateloss,\lossmatrix,\scoresubset}$ to set a connection between optimizing the surrogate and task losses by Theorem~\ref{th:lossConnection}, which is similar to Theorem~3 of \citet{zhang04}.
\begin{theorem}[Calibration connection]
\label{th:lossConnection}
Let $\calibrationfunc_{\surrogateloss,\lossmatrix,\scoresubset}$ be the calibration function between the surrogate loss~$\surrogateloss$ and the task loss~$\lossmatrix$ with feasible set of scores~$\scoresubset \subseteq \R^\outputvarcard$. Let $\check{\calibrationfunc}_{\surrogateloss,\lossmatrix,\scoresubset}$ be a convex non-decreasing lower bound of the calibration function.
Assume that~$\surrogateloss$ is continuous and bounded from below.
Then, for any $\eps > 0$ with finite $\check{\calibrationfunc}_{\surrogateloss,\lossmatrix,\scoresubset}(\eps)$ and any $\scorefunc \in \scorefuncset_\scoresubset$, we have
\begin{equation}
\label{eq:theoremConnection:1}
\risk_\surrogateloss(\scorefunc) < \risk_{\surrogateloss,\scoresubset}^* + \check{\calibrationfunc}_{\surrogateloss,\lossmatrix,\scoresubset}(\eps) 
\;\;
\Rightarrow
\;\;
\risk_\lossmatrix(\scorefunc) < \risk_{\lossmatrix,\scoresubset}^* + \eps,
\end{equation}
where $\risk_{\surrogateloss,\scoresubset}^* := \inf_{\scorefunc \in \scorefuncset_\scoresubset} \risk_\surrogateloss(\scorefunc)$ and $\risk_{\lossmatrix,\scoresubset}^*:= \inf_{\scorefunc \in \scorefuncset_\scoresubset} \risk_\lossmatrix (\scorefunc)$. 
\end{theorem}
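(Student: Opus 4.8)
The plan is to lift the pointwise excess-risk bound~\eqref{eq:excessRiskBound}, which is built into Definition~\ref{def:calibrationFunc}, to a statement about the integrated risks, following the template of Theorem~3 of \citet{zhang04}. Fix $\scorefunc \in \scorefuncset_\scoresubset$ and introduce the two nonnegative measurable functions
\[
g(\inputvarv) := \excess\surrogateweighted(\scorefunc(\inputvarv), \P(\cdot\mid\inputvarv)), \qquad
h(\inputvarv) := \excess\lossweighted(\scorefunc(\inputvarv), \P(\cdot\mid\inputvarv)),
\]
where $0 \le h \le \Lmax$. Since $\scorefunc(\inputvarv)\in\scoresubset$ and $\P(\cdot\mid\inputvarv)\in\simplex_\outputvarcard$ for $\data_{\inputdomain}$-almost every $\inputvarv$, applying~\eqref{eq:excessRiskBound} at $\scorev=\scorefunc(\inputvarv)$, $\qv=\P(\cdot\mid\inputvarv)$ gives $\calibrationfunc_{\surrogateloss,\lossmatrix,\scoresubset}(h(\inputvarv)) \le g(\inputvarv)$, hence also $\check{\calibrationfunc}_{\surrogateloss,\lossmatrix,\scoresubset}(h(\inputvarv)) \le g(\inputvarv)$ almost surely because $\check{\calibrationfunc}_{\surrogateloss,\lossmatrix,\scoresubset}\le\calibrationfunc_{\surrogateloss,\lossmatrix,\scoresubset}$.

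Next I would express the two excess risks as $\E_{\inputvarv\sim\data_{\inputdomain}}[g]$ and $\E_{\inputvarv\sim\data_{\inputdomain}}[h]$, which amounts to the identities $\risk_{\surrogateloss,\scoresubset}^* = \E_{\inputvarv\sim\data_{\inputdomain}}[\inf_{\scorev\in\scoresubset}\surrogateweighted(\scorev,\P(\cdot\mid\inputvarv))]$ and likewise for $\lossmatrix$. The ``$\ge$'' direction is trivial; the ``$\le$'' direction uses crucially that $\scorefuncset_\scoresubset$ is the \emph{full} class of Borel-measurable maps $\inputdomain\to\scoresubset$, so that for every $\delta>0$ there is a measurable $\scorefunc_\delta$ with $\surrogateweighted(\scorefunc_\delta(\inputvarv),\P(\cdot\mid\inputvarv)) \le \inf_{\scorev}\surrogateweighted(\scorev,\P(\cdot\mid\inputvarv)) + \delta$ for a.e.\ $\inputvarv$; integrating and letting $\delta\to0$ gives the claim, and boundedness of $\surrogateloss$ from below makes $\risk_{\surrogateloss,\scoresubset}^*>-\infty$ so the subtraction is well defined. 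The existence of this measurable near-minimizer is the one genuinely technical point: it follows from a standard measurable-selection argument using continuity of $\scorev\mapsto\surrogateweighted(\scorev,\qv)$ (the role of the continuity hypothesis on $\surrogateloss$) and measurability of $\inputvarv\mapsto\P(\cdot\mid\inputvarv)$. For $\lossmatrix$ it is easier, since $\scorev\mapsto\lossweighted(\scorev,\qv)$ takes at most $\outputvarcard$ values, so $\inf_{\scorev\in\scoresubset}\lossweighted(\scorev,\qv)$ is a minimum of finitely many linear functions of $\qv$ and a measurable minimizer is immediate.

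Finally I would chain the inequalities. Assume $\risk_\surrogateloss(\scorefunc) < \risk_{\surrogateloss,\scoresubset}^* + \check{\calibrationfunc}_{\surrogateloss,\lossmatrix,\scoresubset}(\eps)$ with $\check{\calibrationfunc}_{\surrogateloss,\lossmatrix,\scoresubset}(\eps)<\infty$, i.e.\ $\E[g] < \check{\calibrationfunc}_{\surrogateloss,\lossmatrix,\scoresubset}(\eps)$ (in particular $g$ is integrable). Using convexity of $\check{\calibrationfunc}_{\surrogateloss,\lossmatrix,\scoresubset}$ via Jensen's inequality, then the pointwise bound, then the hypothesis:
\[
\check{\calibrationfunc}_{\surrogateloss,\lossmatrix,\scoresubset}\!\bigl(\E[h]\bigr) \;\le\; \E\bigl[\check{\calibrationfunc}_{\surrogateloss,\lossmatrix,\scoresubset}(h)\bigr] \;\le\; \E[g] \;<\; \check{\calibrationfunc}_{\surrogateloss,\lossmatrix,\scoresubset}(\eps).
\]
Because $\check{\calibrationfunc}_{\surrogateloss,\lossmatrix,\scoresubset}$ is non-decreasing, $\E[h]\ge\eps$ would imply $\check{\calibrationfunc}_{\surrogateloss,\lossmatrix,\scoresubset}(\E[h])\ge\check{\calibrationfunc}_{\surrogateloss,\lossmatrix,\scoresubset}(\eps)$, contradicting the strict inequality; hence $\E[h]<\eps$, i.e.\ $\risk_\lossmatrix(\scorefunc) < \risk_{\lossmatrix,\scoresubset}^* + \eps$, which is exactly~\eqref{eq:theoremConnection:1}. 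In the degenerate case where the feasible set of Definition~\ref{def:calibrationFunc} at level $\eps$ is empty, $h<\eps$ holds pointwise and the conclusion is immediate. Thus the only real obstacle is the interchange of infimum and expectation in the second paragraph; everything else is Jensen plus monotonicity.
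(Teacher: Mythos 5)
Your proof is correct and follows essentially the same route as the paper's: apply the pointwise excess-risk bound with $\scorev=\scorefunc(\inputvarv)$, $\qv=\P(\cdot\mid\inputvarv)$, take expectations, use Jensen via convexity of $\check{\calibrationfunc}_{\surrogateloss,\lossmatrix,\scoresubset}$, and conclude by monotonicity. In fact you are more careful than the paper itself: the paper silently uses the identity $\E_{\inputvarv}[\inf_{\scorev\in\scoresubset}\surrogateweighted(\scorev,\P(\cdot\mid\inputvarv))]=\risk_{\surrogateloss,\scoresubset}^*$, while you correctly isolate the interchange of infimum and expectation as the one nontrivial measurability step and explain why the full Borel class $\scorefuncset_\scoresubset$ and continuity of $\surrogateloss$ make it go through; you also note the vacuous case where the feasibility set at level $\eps$ is empty.
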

\begin{proof}
    We take the expectation of~\eqref{eq:excessRiskBound} w.r.t.\ $\inputvarv$, where the second argument of $\lossweighted$ is set to the conditional distribution $\P(\cdot \mid \inputvarv)$.
    Then, we apply Jensen's inequality (since $\check{\calibrationfunc}_{\surrogateloss,\lossmatrix,\scoresubset}$ is convex) to get
    \begin{equation}
    \label{eq:theoremConnection:proof}
    \check{\calibrationfunc}_{\surrogateloss,\lossmatrix,\scoresubset}(\risk_\lossmatrix(\scorefunc) - \risk_{\lossmatrix,\scoresubset}^*)
    \leq
    \risk_\surrogateloss(\scorefunc) - \risk_{\surrogateloss,\scoresubset}^*
    <
    \check{\calibrationfunc}_{\surrogateloss,\lossmatrix,\scoresubset}(\eps),
    \end{equation}
    which implies~\eqref{eq:theoremConnection:1} by monotonicity of $\check{\calibrationfunc}_{\surrogateloss,\lossmatrix,\scoresubset}$.
\end{proof}

A suitable convex non-decreasing lower bound~$\check{\calibrationfunc}_{\surrogateloss,\lossmatrix,\scoresubset}(\eps)$ required by 
Theorem~\ref{th:lossConnection} always exists, e.g., the zero constant. However, in this case Theorem~\ref{th:lossConnection} is not informative, because the l.h.s.\ of~\eqref{eq:theoremConnection:1} is never true.
\citet[Proposition~25]{zhang04} claims that~$\check{\calibrationfunc}_{\surrogateloss,\lossmatrix,\scoresubset}$ defined as the lower convex envelope of the calibration function~$\calibrationfunc_{\surrogateloss,\lossmatrix,\scoresubset}$ satisfies $\check{\calibrationfunc}_{\surrogateloss,\lossmatrix,\scoresubset}(\eps) > 0$, $\forall \eps > 0$, if $\calibrationfunc_{\surrogateloss,\lossmatrix,\scoresubset}(\eps) > 0$, $\forall \eps > 0$, and, e.g., the set of labels is finite.
This statement implies that an informative $\check{\calibrationfunc}_{\surrogateloss,\lossmatrix,\scoresubset}$ always exists and allows to characterize consistency through properties of the calibration function~$\calibrationfunc_{\surrogateloss,\lossmatrix,\scoresubset}$.

We now define a notion of \emph{level-$\consbreakpoint$ consistency}, which is more general than consistency.
\begin{definition}[level-$\consbreakpoint$ consistency]
    \label{def:consistency}
    A surrogate loss~$\surrogateloss$ is \emph{consistent up to level~$\consbreakpoint \geq 0$} w.r.t.\ a task loss~$\lossmatrix$ and a set of scores~$\scoresubset$ if and only if the calibration function satisfies $\calibrationfunc_{\surrogateloss,\lossmatrix,\scoresubset}(\eps) > 0$ for all $\eps > \consbreakpoint$
    and there exists $\hat\eps > \consbreakpoint$ such that $\calibrationfunc_{\surrogateloss,\lossmatrix,\scoresubset}(\hat\eps)$ is finite.
\end{definition}
Looking solely at (standard level-$0$) consistency vs. inconsistency might be too coarse to capture practical properties related to optimization accuracy (see, e.g., \citep{long2013consistency}).
For example, if $\calibrationfunc_{\surrogateloss,\lossmatrix,\scoresubset}(\eps) = 0$ only for very small values of~$\eps$, then the method can still optimize the actual risk up to a certain level which might be good enough in practice, especially if it means that it can be optimized faster.
Examples of calibration functions for consistent and inconsistent surrogate losses are shown in Figure~\ref{fig:exampleTranferFunctions}.

\textbf{Other notions of consistency.}
Definition~\ref{def:consistency} with $\consbreakpoint = 0$ and $\scoresubset = \R^\outputvarcard$ results in the standard setting often appearing in the literature. In particular, in this case Theorem~\ref{th:lossConnection} implies Fisher consistency as formulated, e.g., by~\citet{pedregosa15ordivalreg} for general losses and~\citet{lin04} for binary classification.
This setting is also closely related to many definitions of consistency used in the literature.
For example, for a bounded from below and continuous surrogate, it is equivalent to infinite-sample consistency~\citep{zhang04}, classification calibration~\citep{tewari07}, edge-consistency~\citep{duchi10}, $(\lossmatrix, \R^\outputvarcard)$-calibration~\citep{ramaswamy16calibrDim}, prediction calibration~\citep{williamson16}.
See~\citep[Appendix~A]{zhang04} for the detailed discussion.

\textbf{Role of $\scoresubset$.} Let the \emph{approximation error} for the restricted set of scores $\scoresubset$ be defined as $\risk_{\lossmatrix,\scoresubset}^* - \risk_{\lossmatrix}^* := \inf_{\scorefunc \in \scorefuncset_\scoresubset} \risk_\lossmatrix (\scorefunc) - \inf_{\scorefunc} \risk_\lossmatrix (\scorefunc)$. %
For any conditional distribution~$\qv$, the score vector $\scorev := -L\qv$ will yield an optimal prediction. Thus the condition $\colspace(\lossmatrix) \subseteq \scoresubset$ is sufficient for $\scoresubset$ to have zero approximation error for any distribution~$\data$, and for our $0$-consistency condition to imply the standard Fisher consistency with respect to $\lossmatrix$. In the following, we will see that a restricted $\scoresubset$ can both play a role for computational efficiency as well as statistical efficiency (thus losses with smaller $\colspace(\lossmatrix)$ might be easier to work with).

\subsection{Connection to optimization accuracy and statistical efficiency}
\label{sec:connection}
The scale of a calibration function is not intrinsically well-defined: we could multiply the surrogate function by a scalar and it would multiply the calibration function by the same scalar, without changing the optimization problem.
Intuitively, we would like the surrogate loss to be of order~$1$.
If with this scale the calibration function is exponentially small (has a~$\nicefrac{1}{\outputvarcard}$ factor), then we have strong evidence that the stochastic optimization will be difficult (and thus learning will be slow).

To formalize this intuition, we add to the picture the \emph{complexity} of optimizing the surrogate loss with a \emph{stochastic approximation} algorithm.
By using a scale-invariant convergence rate, we provide a natural normalization of the calibration function.
The following two observations are central to the theoretical insights provided in our work:
\begin{enumerate}[itemindent=0.45cm,leftmargin=0.0cm,topsep=\parsep]
    \item \textbf{Scale.} For a properly scaled surrogate loss, the \emph{scale} of the calibration function is a good indication of whether a stochastic approximation algorithm will take a large number of iterations (in the worst case) to obtain guarantees of small excess of the actual risk (and vice-versa, a large coefficient indicates a small number of iterations). The actual verification requires computing the normalization quantities given in Theorem~\ref{th:calibrationSGD:kernels} below.
    \item \textbf{Statistics.} The bound on the number of iterations directly relates to the number of training examples that would be needed to learn, if we see each iteration of the stochastic approximation algorithm as using one training example to optimize the expected surrogate.
\end{enumerate}

To analyze the statistical convergence of surrogate risk optimization, we have to specify the set of score functions that we work with.
We assume that the structure on input~$\inputvarv \in \inputdomain$ is defined by a positive definite kernel~$\kernelmatrix: \inputdomain \times \inputdomain \to \R$.
We denote the corresponding reproducing kernel Hilbert space (RKHS) by~$\hilbertspace$ and its explicit feature map by~$\featuremap(\inputvarv) \in \hilbertspace$. By the reproducing property, we have $\langle\score, \featuremap(\inputvarv) \rangle_{\hilbertspace} = \score(\inputvarv)$ for all $\inputvarv \in \inputdomain$, $\score \in \hilbertspace$, where $\langle\cdot, \cdot \rangle_{\hilbertspace}$ is the inner product in the RKHS.
We define the subspace of allowed scores~$\scoresubset \subseteq \R^\outputvarcard$ via the span of the columns of a matrix $\scorematrix \in \R^{\outputvarcard \times \scoresubspacedim}$.
The matrix~$\scorematrix$ explicitly defines the structure of the score function.
With this notation, we will assume that the score function is of the form $\scorefunc(\inputvarv) = \scorematrix \parammatrix \featuremap(\inputvarv)$, where $\parammatrix: \hilbertspace \to \R^\scoresubspacedim$ is a linear operator to be learned (a matrix if $\hilbertspace$ is of finite dimension) that represents a collection of~$\scoresubspacedim$ elements in~$\hilbertspace$, transforming $\featuremap(\inputvarv)$ to a vector in~$\R^\scoresubspacedim$ by applying the RKHS inner product~$\scoresubspacedim$ times.\footnote{Note that if $\rank(F) =\scoresubspacedim$, our setup is equivalent to assuming a \emph{joint kernel}~\citep{tsochantaridis05} in the product form: $K_{\text{joint}}((\inputvarv,c),(\inputvarv',c')) := K(\inputvarv,\inputvarv') F(c,:) F(c',:)^\transpose$, where $F(c,:)$ is the row $c$ for matrix $F$.}
Note that for structured losses, we usually have $\scoresubspacedim \ll \outputvarcard$.
The set of all score functions is thus obtained by varying~$\parammatrix$ in this definition and is denoted by~$\scorefuncset_{\scorematrix, \hilbertspace}$.
As a concrete example of a score family~$\scorefuncset_{\scorematrix, \hilbertspace}$ for structured prediction, consider the standard sequence model with unary and pairwise potentials.
In this case, the dimension~$\scoresubspacedim$ equals~$\hamminglen s + (\hamminglen-1) s^2$, where~$\hamminglen$ is the sequence length and $s$ is the number of labels of each variable.
The columns of the matrix~$\scorematrix$ consist of $2\hamminglen - 1$ groups (one for each unary and pairwise potential).
Each row of~$\scorematrix$ has exactly one entry equal to one in each column group (with zeros elsewhere).

In this setting, we use the online projected averaged stochastic subgradient descent ASGD\footnote{See, e.g., \cite{orabona2014simultaneous} for the formal setup of kernel ASGD.} (stochastic w.r.t.\ data $(\inputvarv^{(\iter)}, \outputvarv^{(\iter)}) \sim \data$) to minimize the surrogate risk directly~\cite{bousquet2008tradeoffs}. The $\iter$-th update consists in
\begin{equation}
\label{eq:rkhsSgdStep}
\parammatrix^{(\iter)} := \proj_{D} \bigl[\parammatrix^{(\iter-1)} - \gamma^{(\iter)} \scorematrix^\transpose \gradient \surrogateloss \featuremap(\inputvarv^{(\iter)})^\transpose  \bigr],
\end{equation}
where $\scorematrix^\transpose \gradient \surrogateloss \featuremap(\inputvarv^{(\iter)})^\transpose : \hilbertspace \to \R^\scoresubspacedim$ is the stochastic functional gradient, $\gamma^{(\iter)}$ is the step size and $\proj_{D}$ is the projection on the ball of radius~$D$ w.r.t.\ the Hilbert–Schmidt norm\footnote{The Hilbert–Schmidt norm of a linear operator $A$ is defined as~$\|A\|_{HS} = \sqrt{\trace A^\ddagger A }$ where $A^\ddagger$ is the adjoint operator. In the case of finite dimension, the Hilbert–Schmidt norm coincides with the Frobenius matrix norm.}\!\!. The vector $\gradient \surrogateloss \in \R^\outputvarcard$ is a regular gradient of the sampled surrogate $\surrogateloss( \scorefunc(\inputvarv^{(\iter)}), \outputvarv^{(\iter)} )$ w.r.t.\ the scores, $\gradient \surrogateloss = \gradient_\scorev \surrogateloss( \scorev, \outputvarv^{(\iter)} )|_{\scorev = \scorefunc(\inputvarv^{(\iter)})}$.
We wrote the above update using an explicit feature map $\featuremap$ for notational simplicity, but kernel ASGD can also be implemented without it by using the kernel trick.
The convergence properties of ASGD in RKHS are analogous to the finite-dimensional ASGD because they rely on dimension-free quantities.
To use a simple convergence analysis, we follow~\citet{ciliberto16} and make the following simplifying assumption:
\begin{assumption}[Well-specified optimization w.r.t. the function class~$\scorefuncset_{\scorematrix, \hilbertspace}$]
    \label{th:well_specification}
    The distribution~$\data$ is such that $\risk_{\surrogateloss,\scoresubset}^* := \inf_{\scorefunc \in \scorefuncset_\scoresubset} \risk_\surrogateloss(\scorefunc)$ has some global minimum~$\scorefunc^*$ that also belongs to~$\scorefuncset_{\scorematrix, \hilbertspace}$.
\end{assumption}
Assumption~\ref{th:well_specification} simply means that each row of
$\parammatrix^*$ defining~$\scorefunc^*$ belongs to the RKHS~$\hilbertspace$ implying a finite norm $\| \parammatrix^* \|_{HS}$.
Assumption~\ref{th:well_specification} can be relaxed if the kernel~$K$ is universal, but then the convergence analysis becomes much more complicated~\cite{orabona2014simultaneous}.

\begin{theorem}[Convergence rate]
\label{th:rkhsSgdConvergence}
Under Assumption~\ref{th:well_specification} and assuming that (i) the functions $\surrogateloss( \scorev, \outputvarv )$ are  bounded from below and convex w.r.t.\ $\scorev \in \R^{\outputvarcard}$ for all $\outputvarv \in \outputdomain$; (ii) the expected square of the norm of the stochastic gradient is bounded, $\E_{(\inputvarv, \outputvarv) \sim \data} \| \scorematrix^\transpose \gradient \surrogateloss \featuremap(\inputvarv)^\transpose\|_{HS}^2 \leq M^2$ and (iii) $\| \parammatrix^* \|_{HS} \leq D$, then running the ASGD algorithm~\eqref{eq:rkhsSgdStep} with the constant step-size~$\gamma := \frac{2D}{M \sqrt{\maxiter}}$ for $\maxiter$~steps admits the following expected suboptimality for the averaged iterate $\bar{\scorefunc}^{(\maxiter)}$:
\begin{equation}
\label{eq:rkhsSgdRate}
\E[\risk_\surrogateloss(\bar{\scorefunc}^{(\maxiter)})] -  \risk_{\surrogateloss,\scoresubset}^*
\leq
\frac{2D M}{\sqrt{\maxiter}}
\quad
\text{where}\;\;
\bar{\scorefunc}^{(\maxiter)} := \frac{1}{\maxiter} \sum\nolimits_{\iter=1}^{\maxiter} \!\scorematrix\parammatrix^{(\iter)} \featuremap(\inputvarv^{(\iter)})^\transpose.
\end{equation}
\end{theorem}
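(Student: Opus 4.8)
The plan is to recognize the update~\eqref{eq:rkhsSgdStep} as projected stochastic subgradient descent applied to the convex functional $G(\parammatrix) := \risk_\surrogateloss\bigl(\scorematrix\parammatrix\featuremap(\cdot)\bigr) = \E_{(\inputvarv,\outputvarv)\sim\data}\,\surrogateloss\bigl(\scorematrix\parammatrix\featuremap(\inputvarv),\outputvarv\bigr)$ of the Hilbert--Schmidt operator $\parammatrix:\hilbertspace\to\R^\scoresubspacedim$, restricted to the ball $\{\parammatrix:\|\parammatrix\|_{HS}\le D\}$, and then to run the classical $O(1/\sqrt{\maxiter})$ averaged-iterate analysis. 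First I would set up the reduction. For each fixed $(\inputvarv,\outputvarv)$ the map $\parammatrix\mapsto\scorematrix\parammatrix\featuremap(\inputvarv)$ is linear and bounded, so precomposing the convex function $\surrogateloss(\cdot,\outputvarv)$ of assumption~(i) with it yields a convex function of $\parammatrix$, hence $G$ is convex; by the subgradient chain rule the rank-one operator $g^{(\iter)} := \scorematrix^\transpose\gradient\surrogateloss\,\featuremap(\inputvarv^{(\iter)})^\transpose$ appearing in~\eqref{eq:rkhsSgdStep} is a subgradient, in the $\|\cdot\|_{HS}$ geometry, of $\parammatrix\mapsto\surrogateloss\bigl(\scorematrix\parammatrix\featuremap(\inputvarv^{(\iter)}),\outputvarv^{(\iter)}\bigr)$ at the current iterate $\parammatrix^{(\iter-1)}$. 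Assumption~\ref{th:well_specification} together with~(iii) guarantees that a global minimizer of $\risk_\surrogateloss$ over $\scorefuncset_\scoresubset$ is realized as $\scorematrix\parammatrix^*\featuremap(\cdot)$ for some operator $\parammatrix^*$ with $\|\parammatrix^*\|_{HS}\le D$, so $G(\parammatrix^*)=\risk_{\surrogateloss,\scoresubset}^*$ and the optimum is feasible for the projected recursion.

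Next I would carry out the one-step estimate. Since $\proj_D$ is nonexpansive and $\parammatrix^*$ lies in the ball,
\[
\|\parammatrix^{(\iter)}-\parammatrix^*\|_{HS}^2 \le \|\parammatrix^{(\iter-1)}-\parammatrix^*\|_{HS}^2 - 2\gamma\,\langle g^{(\iter)},\,\parammatrix^{(\iter-1)}-\parammatrix^*\rangle_{HS} + \gamma^2\,\|g^{(\iter)}\|_{HS}^2 .
\]
Applying the pointwise subgradient inequality for the $\iter$-th sampled surrogate gives $\langle g^{(\iter)},\parammatrix^{(\iter-1)}-\parammatrix^*\rangle_{HS} \ge \surrogateloss\bigl(\scorematrix\parammatrix^{(\iter-1)}\featuremap(\inputvarv^{(\iter)}),\outputvarv^{(\iter)}\bigr) - \surrogateloss\bigl(\scorematrix\parammatrix^*\featuremap(\inputvarv^{(\iter)}),\outputvarv^{(\iter)}\bigr)$; taking conditional expectation given the past (the fresh draw is independent, $\parammatrix^{(\iter-1)}$ measurable, $\parammatrix^*$ deterministic) this lower bound becomes $G(\parammatrix^{(\iter-1)})-\risk_{\surrogateloss,\scoresubset}^*$, while assumption~(ii) bounds $\E[\|g^{(\iter)}\|_{HS}^2\mid\text{past}]\le M^2$. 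Rearranging, taking total expectation, summing the telescoping bound over $\iter=1,\dots,\maxiter$, dropping the nonnegative trailing term and using $\|\parammatrix^{(0)}-\parammatrix^*\|_{HS}\le 2D$ (both the initialization and the optimum lie in the ball of radius $D$), I obtain $\frac{1}{\maxiter}\sum_{\iter=1}^{\maxiter}\bigl(\E[G(\parammatrix^{(\iter-1)})]-\risk_{\surrogateloss,\scoresubset}^*\bigr) \le \frac{2D^2}{\gamma\maxiter} + \frac{\gamma M^2}{2}$. Finally, Jensen's inequality for the convex functional $G$ (its value at the average of $\parammatrix^{(1)},\dots,\parammatrix^{(\maxiter)}$ being $\risk_\surrogateloss(\bar\scorefunc^{(\maxiter)})$, up to the harmless index shift by one) moves the averaging outside, and substituting the constant step size $\gamma=\frac{2D}{M\sqrt{\maxiter}}$ makes each of the two terms equal to $\frac{DM}{\sqrt{\maxiter}}$, which is~\eqref{eq:rkhsSgdRate}.

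The one-step expansion and the telescoping sum are routine; the step requiring genuine care is making the argument valid in the (possibly) infinite-dimensional RKHS setting, i.e.\ checking that subdifferential calculus, the pointwise subgradient inequality and its exchange with expectation all carry over to Hilbert--Schmidt-operator-valued maps, and that the constants are truly dimension-free --- which they are, since only $\|\parammatrix^*\|_{HS}$ and the second moment of the stochastic gradient enter. These measure-theoretic details are exactly what the kernel-ASGD references such as~\citet{orabona2014simultaneous} supply; modulo them the proof is the standard averaged-SGD argument. A secondary subtlety is that assumption~(ii) must be read as a bound that holds at every feasible operator (equivalently, along the whole trajectory), which is what licenses $\E[\|g^{(\iter)}\|_{HS}^2\mid\text{past}]\le M^2$ at each step.
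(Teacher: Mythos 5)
Your proposal is correct and is precisely the argument the paper intends: the paper gives no explicit proof of Theorem~\ref{th:rkhsSgdConvergence}, stating only that it is "a straight-forward extension of classical results" \citep{nemirovski09,orabona2014simultaneous}, and your write-up is exactly that classical projected averaged-SGD analysis (convexity of the composed functional, the Hilbert--Schmidt adjoint giving the rank-one stochastic subgradient, the telescoping one-step bound, Jensen, and the step-size substitution yielding $2DM/\sqrt{\maxiter}$). Your closing remarks correctly identify the only points needing care --- the dimension-free carry-over to the RKHS setting and reading assumption~(ii) as holding along the whole feasible trajectory --- which is what the cited kernel-ASGD references supply.
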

Theorem~\ref{th:rkhsSgdConvergence} is a straight-forward extension of classical results~\cite{nemirovski09,orabona2014simultaneous}.

By combining the convergence rate of Theorem~\ref{th:rkhsSgdConvergence} with Theorem~\ref{th:lossConnection} that connects the surrogate and actual risks, we get Theorem~\ref{th:calibrationSGD:kernels} which explicitly gives the number of iterations required to achieve $\eps$~accuracy on the expected \emph{population risk} (see App.~\ref{sec:Thm6Proof} for the proof).
Note that since ASGD is applied in an online fashion, Theorem~\ref{th:calibrationSGD:kernels} also serves as the sample complexity bound, i.e., says how many samples are needed to achieve $\eps$ target accuracy (compared to the best prediction rule if $\scoresubset$ has zero approximation error).
\begin{theorem}[Learning complexity]
    \label{th:calibrationSGD:kernels}
    Under the assumptions of Theorem~\ref{th:rkhsSgdConvergence}, for any~$\eps > 0$, the random (w.r.t.\ the observed training set) output~$\bar{\scorefunc}^{(\maxiter)} \in \scorefuncset_{\scorematrix, \hilbertspace}$ of the ASGD algorithm after
    \begin{equation}
    \label{eq:theoremSgd:2}
    \maxiter > \maxiter^* := \frac{4 D^2 M^2}{\vphantom{(\big(\bigr)}\check{\calibrationfunc}_{\surrogateloss,\lossmatrix,\scoresubset}^2(\eps) }
    \end{equation}
    iterations has the expected excess risk bounded with~$\eps$, i.e.,
    $
    \E[\risk_\lossmatrix(\bar{\scorefunc}^{(\maxiter)})] < \risk_{\lossmatrix,\scoresubset}^* + \eps.
    $
\end{theorem}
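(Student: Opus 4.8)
The plan is to compose the two results already at hand: the scale‑invariant ASGD convergence rate of Theorem~\ref{th:rkhsSgdConvergence}, which controls the \emph{surrogate} suboptimality in expectation, and the calibration connection of Theorem~\ref{th:lossConnection} (more precisely, the inequality~\eqref{eq:theoremConnection:proof} obtained in its proof), which converts surrogate suboptimality into \emph{task‑loss} suboptimality. The only genuinely new point is that the ASGD output $\bar{\scorefunc}^{(\maxiter)}$ is random (it depends on the sampled examples $(\inputvarv^{(\iter)},\outputvarv^{(\iter)})$), whereas Theorem~\ref{th:lossConnection} is stated for a fixed score function; this is handled by one extra application of Jensen's inequality on top of the one already used in~\eqref{eq:theoremConnection:proof}. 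Throughout I may assume $\check{\calibrationfunc}_{\surrogateloss,\lossmatrix,\scoresubset}(\eps)<+\infty$ (otherwise $\maxiter^\ast=0$ and the same chain of inequalities goes through with $+\infty$ on the right).

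First I would invoke Theorem~\ref{th:rkhsSgdConvergence}: since its hypotheses are assumed, running ASGD with the stated constant step size for $\maxiter$ steps gives $\E[\risk_\surrogateloss(\bar{\scorefunc}^{(\maxiter)})]-\risk_{\surrogateloss,\scoresubset}^\ast \leq \tfrac{2DM}{\sqrt{\maxiter}}$, where $\bar{\scorefunc}^{(\maxiter)}\in\scorefuncset_{\scorematrix,\hilbertspace}\subseteq\scorefuncset_\scoresubset$ because each averaged iterate $\scorematrix\parammatrix^{(\iter)}\featuremap(\cdot)^\transpose$ takes values in $\colspace(\scorematrix)=\scoresubset$; note also that under Assumption~\ref{th:well_specification} the reference value $\risk_{\surrogateloss,\scoresubset}^\ast$ is exactly the one appearing in Theorem~\ref{th:lossConnection}. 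Plugging in $\maxiter>\maxiter^\ast=4D^2M^2/\check{\calibrationfunc}_{\surrogateloss,\lossmatrix,\scoresubset}^2(\eps)$ and using $\sqrt{\maxiter^\ast}=2DM/\check{\calibrationfunc}_{\surrogateloss,\lossmatrix,\scoresubset}(\eps)$ yields $\tfrac{2DM}{\sqrt{\maxiter}}<\tfrac{2DM}{\sqrt{\maxiter^\ast}}=\check{\calibrationfunc}_{\surrogateloss,\lossmatrix,\scoresubset}(\eps)$, hence $\E[\risk_\surrogateloss(\bar{\scorefunc}^{(\maxiter)})]-\risk_{\surrogateloss,\scoresubset}^\ast<\check{\calibrationfunc}_{\surrogateloss,\lossmatrix,\scoresubset}(\eps)$.

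Next I would apply the inequality~\eqref{eq:theoremConnection:proof} from the proof of Theorem~\ref{th:lossConnection} pathwise, i.e.\ for each realization of the training sample, to the (then fixed) function $\bar{\scorefunc}^{(\maxiter)}\in\scorefuncset_\scoresubset$: this gives $\check{\calibrationfunc}_{\surrogateloss,\lossmatrix,\scoresubset}\bigl(\risk_\lossmatrix(\bar{\scorefunc}^{(\maxiter)})-\risk_{\lossmatrix,\scoresubset}^\ast\bigr)\leq \risk_\surrogateloss(\bar{\scorefunc}^{(\maxiter)})-\risk_{\surrogateloss,\scoresubset}^\ast$. Taking the expectation over the training sample and applying Jensen's inequality once more (using that $\check{\calibrationfunc}_{\surrogateloss,\lossmatrix,\scoresubset}$ is convex) to pull the expectation inside, I obtain $\check{\calibrationfunc}_{\surrogateloss,\lossmatrix,\scoresubset}\bigl(\E[\risk_\lossmatrix(\bar{\scorefunc}^{(\maxiter)})]-\risk_{\lossmatrix,\scoresubset}^\ast\bigr)\leq \E[\risk_\surrogateloss(\bar{\scorefunc}^{(\maxiter)})]-\risk_{\surrogateloss,\scoresubset}^\ast<\check{\calibrationfunc}_{\surrogateloss,\lossmatrix,\scoresubset}(\eps)$. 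Finally, since $\check{\calibrationfunc}_{\surrogateloss,\lossmatrix,\scoresubset}$ is non‑decreasing, the strict inequality $\check{\calibrationfunc}_{\surrogateloss,\lossmatrix,\scoresubset}(a)<\check{\calibrationfunc}_{\surrogateloss,\lossmatrix,\scoresubset}(\eps)$ forces $a<\eps$ (if $a\ge\eps$ then $\check{\calibrationfunc}_{\surrogateloss,\lossmatrix,\scoresubset}(a)\ge\check{\calibrationfunc}_{\surrogateloss,\lossmatrix,\scoresubset}(\eps)$, a contradiction), so $\E[\risk_\lossmatrix(\bar{\scorefunc}^{(\maxiter)})]-\risk_{\lossmatrix,\scoresubset}^\ast<\eps$, which is the claim.

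I do not anticipate a serious obstacle, as the statement is essentially a bookkeeping composition. The one point requiring care is the passage from the calibration bound — a deterministic, pointwise statement (per input $\inputvarv$ and per score function) — to the fully averaged setting: it involves two nested layers of expectation, one over the test input $\inputvarv\sim\data_\inputdomain$ (already folded into~\eqref{eq:theoremConnection:proof}) and one over the random training sample, and both interchanges go the right way only because $\check{\calibrationfunc}_{\surrogateloss,\lossmatrix,\scoresubset}$ is convex. One must also make sure that $\bar{\scorefunc}^{(\maxiter)}$ indeed lies in $\scorefuncset_\scoresubset$ (so that~\eqref{eq:excessRiskBound}/\eqref{eq:theoremConnection:proof} applies) and that Assumption~\ref{th:well_specification} makes $\risk_{\surrogateloss,\scoresubset}^\ast$ the common reference value in both theorems being combined.
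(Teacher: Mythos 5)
Your proposal is correct and follows essentially the same route as the paper's own proof: apply the ASGD rate to get the expected surrogate suboptimality below $\check{\calibrationfunc}_{\surrogateloss,\lossmatrix,\scoresubset}(\eps)$, then take the expectation of the calibration bound over the random training sample with one more application of Jensen's inequality (valid by convexity of $\check{\calibrationfunc}_{\surrogateloss,\lossmatrix,\scoresubset}$), and conclude by monotonicity. Your explicit separation of the two nested expectations (over the test input and over the training sample) is a slightly more detailed bookkeeping of the same argument.
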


\section{Calibration function analysis for quadratic surrogate}
\label{sec:quadrSurr}
A major challenge to applying Theorem~\ref{th:calibrationSGD:kernels} is the computation of the calibration function 
$\calibrationfunc_{\surrogateloss,\lossmatrix,\scoresubset}$.
In App.~\ref{sec:specialLosses}, we present a generalization to arbitrary multi-class losses of a surrogate loss class from~\citet[Section 4.4.2]{zhang04} that is consistent for any task loss~$\lossmatrix$. Here, we consider the simplest example of this family, called the \emph{quadratic surrogate}~$\surrogatelossquad$, which has the advantage that we can bound or even compute exactly its calibration function.
We define the quadratic surrogate as
\begin{equation}
\label{eq:quadrLoss}
\surrogatelossquad( \scorev, \outputvarv )
:=
\frac{1}{2\outputvarcard} \| \scorev + \lossmatrix(:, \outputvarv)\|_2^2 = \frac{1}{2k} \sum_{c=1}^k (f_c^2 + 2f_c L(c,\outputvarv)  + L(c,\outputvarv)^2).
\end{equation}
One simple sufficient condition for the surrogate~\eqref{eq:quadrLoss} to be consistent and also to have zero approximation error is that $\scoresubset$ fully contains~$\colspace(\lossmatrix)$.
To make the dependence on the score subspace explicit, we parameterize it with a matrix~$\scorematrix \in \R^{\outputvarcard \times \scoresubspacedim}$ with the number of columns~$\scoresubspacedim$ typically being much smaller than the number of labels~$\outputvarcard$.
With this notation, we have $\scoresubset = \colspace(\scorematrix) = \{\scorematrix\scoreparamv \mid \scoreparamv\in \R^\scoresubspacedim\}$, and the dimensionality of~$\scoresubset$ equals the rank of~$\scorematrix$, which is at most~$\scoresubspacedim$.\footnote{Evaluating $\surrogatelossquad$ requires computing $\scorematrix^\transpose \scorematrix$ and $\scorematrix^\transpose \lossmatrix(:, \outputvarv)$ for which direct computation is intractable when $\outputvarcard$ is exponential, but which can be done in closed form for the structured losses we consider (the Hamming and block 0-1 loss).
More generally, these operations require suitable inference algorithms. See also App.~\ref{sec:tranferAndSgd}.}

For the quadratic surrogate~\eqref{eq:quadrLoss}, the excess of the expected surrogate takes a simple form:
\begin{equation}
\label{eq:quadSurrogateExcessSimple}
\excess\surrogateweightedquad(\scorematrix\scoreparamv, \qv) = \frac{1}{2\outputvarcard}\| \scorematrix\scoreparamv + \lossmatrix \qv \|_2^2.
\end{equation}
Equation~\eqref{eq:quadSurrogateExcessSimple} holds under the assumption that the subspace~$\scoresubset$ contains the column space of the loss matrix~$\colspace(\lossmatrix)$, which also means that the set $\scoresubset$ contains the optimal prediction for any~$\qv$ (see Lemma~\ref{th:quadSurrogateExcess} in App.~\ref{sec:techLemmas} for the proof).
Importantly, the function~$\excess\surrogateweightedquad(\scorematrix\scoreparamv, \qv)$ is jointly convex in the conditional probability~$\qv$ and parameters~$\scoreparamv$, which simplifies its analysis.

\textbf{Lower bound on the calibration function.}
We now present our main technical result: a lower bound on the calibration function for the surrogate loss $\surrogatelossquad$~\eqref{eq:quadrLoss}. This lower bound characterizes the easiness of learning with this surrogate given the scaling intuition mentioned in Section~\ref{sec:connection}.
The proof of Theorem~\ref{th:lowerBoundcalibrationFunction} is given in App.~\ref{sec:boundscalibrationFunction:lower}.

\begin{theorem}[Lower bound on $\calibrationfunc_{\surrogatelossquad}$]
    \label{th:lowerBoundcalibrationFunction}
    For any task loss~$\lossmatrix$, its quadratic surrogate~$\surrogatelossquad$, and a score subspace~$\scoresubset$ containing the column space of $\lossmatrix$, the calibration function can be lower bounded:
    \begin{equation}
    \label{eq:lowerBoundcalibrationFunction}
    \calibrationfunc_{\surrogatelossquad,\lossmatrix,\scoresubset}(\eps)
    \geq
    \frac{\eps^2}{2\outputvarcard \max_{i\neq j}\|\proj_{\scoresubset} \Delta_{ij}\|_2^2}
    \geq 
    \frac{\eps^2}{4\outputvarcard},
    \end{equation}
    where $\proj_{\scoresubset}$ is the orthogonal projection on the subspace~$\scoresubset$ and $\Delta_{ij} = \unit_i - \unit_j \in \R^\outputvarcard$ with $\unit_c$ being the $c$-th basis vector of the standard basis in~$\R^\outputvarcard$.
\end{theorem}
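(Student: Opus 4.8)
The plan is to reduce everything to the closed form~\eqref{eq:quadSurrogateExcessSimple} for the excess surrogate risk and to a one-line geometric observation: when the prediction is $\eps$-suboptimal, a specific rank-one direction must have large inner product with the residual $\scorev + \lossmatrix\qv$, and that residual lives in $\scoresubset$.

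\textbf{Step 1 (rewrite the constraint).} Fix a feasible pair, i.e.\ $\scorev \in \scoresubset$ and $\qv \in \simplex_\outputvarcard$ with $\excess\lossweighted(\scorev, \qv) \geq \eps$; if no such pair exists, $\calibrationfunc_{\surrogatelossquad,\lossmatrix,\scoresubset}(\eps) = +\infty$ and~\eqref{eq:lowerBoundcalibrationFunction} holds trivially. Let $\hat{y} := \predictor(\scorev)$ and let $y^\star$ attain $\min_i (\lossmatrix\qv)_i$. By definition $\lossweighted(\scorev, \qv) = \sum_c q_c \lossmatrix(\hat{y}, c) = (\lossmatrix\qv)_{\hat{y}}$, and since $\colspace(\lossmatrix) \subseteq \scoresubset$ the vector $-\lossmatrix\qv$ lies in $\scoresubset$ and yields an optimal prediction, so $\inf_{\hat\scorev \in \scoresubset}\lossweighted(\hat\scorev, \qv) = \min_i (\lossmatrix\qv)_i = (\lossmatrix\qv)_{y^\star}$. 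Hence the constraint becomes simply $(\lossmatrix\qv)_{\hat{y}} - (\lossmatrix\qv)_{y^\star} \geq \eps$.

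\textbf{Step 2 (the test direction) and Step 3 (project + Cauchy--Schwarz).} I would pair the residual $\scorev + \lossmatrix\qv$ with $\Delta_{y^\star \hat{y}} = \unit_{y^\star} - \unit_{\hat{y}}$. Expanding and using (a) $\scorev_{\hat{y}} \geq \scorev_{y^\star}$ because $\hat{y}$ maximizes the score, and (b) the constraint from Step 1,
\[
\langle \scorev + \lossmatrix\qv,\ \Delta_{y^\star \hat{y}}\rangle
= \bigl(\scorev_{y^\star} - \scorev_{\hat{y}}\bigr) + \bigl((\lossmatrix\qv)_{y^\star} - (\lossmatrix\qv)_{\hat{y}}\bigr)
\leq 0 - \eps = -\eps ,
\]
so $|\langle \scorev + \lossmatrix\qv, \Delta_{y^\star \hat{y}}\rangle| \geq \eps$. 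Since $\scorev \in \scoresubset$ and $\lossmatrix\qv \in \colspace(\lossmatrix) \subseteq \scoresubset$, the residual lies in $\scoresubset$, hence $\langle \scorev + \lossmatrix\qv, \Delta_{y^\star \hat{y}}\rangle = \langle \scorev + \lossmatrix\qv, \proj_{\scoresubset}\Delta_{y^\star \hat{y}}\rangle$; Cauchy--Schwarz then gives $\eps \leq \|\scorev + \lossmatrix\qv\|_2\, \|\proj_{\scoresubset}\Delta_{y^\star \hat{y}}\|_2 \leq \|\scorev + \lossmatrix\qv\|_2 \max_{i\neq j}\|\proj_{\scoresubset}\Delta_{ij}\|_2$. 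Squaring, dividing by $2\outputvarcard$, and invoking~\eqref{eq:quadSurrogateExcessSimple} gives $\excess\surrogateweightedquad(\scorev, \qv) \geq \frac{\eps^2}{2\outputvarcard \max_{i\neq j}\|\proj_{\scoresubset}\Delta_{ij}\|_2^2}$; taking the infimum over all feasible $(\scorev, \qv)$ yields the first inequality of~\eqref{eq:lowerBoundcalibrationFunction}, and the second follows from $\|\proj_{\scoresubset}\Delta_{ij}\|_2 \leq \|\Delta_{ij}\|_2 = \sqrt{2}$.

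\textbf{Expected difficulty.} The computation is elementary once Step 2 is in place; the only genuine idea is choosing the rank-one direction $\Delta_{y^\star \hat{y}}$ and observing that the score-optimality of $\hat{y}$ and its $\eps$-suboptimality for $\lossmatrix\qv$ contribute with the same sign, so that they add rather than cancel. A minor case to keep in mind is $\max_{i\neq j}\|\proj_{\scoresubset}\Delta_{ij}\|_2 = 0$: this forces every column of $\lossmatrix$ to be constant, whence $\excess\lossweighted \equiv 0$, the feasible set is empty for $\eps > 0$, and both sides of~\eqref{eq:lowerBoundcalibrationFunction} are $+\infty$, so the bound still holds under the convention $\eps^2/0 = +\infty$.
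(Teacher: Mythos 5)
Your proof is correct, and it takes a genuinely different route from the paper's. The paper first invokes its symmetry-breaking lemma (Lemma~\ref{th:breakingSymmetries}) to decompose $\calibrationfunc_{\surrogatelossquad,\lossmatrix,\scoresubset}$ into subproblems $\calibrationfunc_{ij}$ indexed by a predicted label $j$ and an optimal label $i$, then relaxes each subproblem to two linear constraints ($\Delta_{ij}^\transpose\lossmatrix\qv \leq -\eps$ and $\Delta_{ij}^\transpose\scorev \leq 0$) and solves the relaxed convex program exactly via KKT conditions, obtaining $\frac{\eps^2}{2\outputvarcard\|\proj_\scoresubset\Delta_{ij}\|_2^2}$ as the precise optimum of the relaxation. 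You instead argue pointwise: for any feasible $(\scorev,\qv)$ you let the pair $(y^\star,\hat y)$ be determined by the point itself, observe that the two defining properties of feasibility (score-optimality of $\hat y$ and $\eps$-suboptimality of $\hat y$ for $\lossmatrix\qv$) push $\langle \scorev+\lossmatrix\qv,\Delta_{y^\star\hat y}\rangle$ below $-\eps$ with the same sign, and then conclude by noting the residual lies in $\scoresubset$ and applying Cauchy--Schwarz together with~\eqref{eq:quadSurrogateExcessSimple}. Your argument is shorter and avoids both the decomposition lemma and the KKT computation; it also handles the degenerate case $\proj_\scoresubset\Delta_{ij}=0$ cleanly (the paper buries this in a footnote with the convention $1/0=\infty$). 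What the paper's heavier machinery buys is that the KKT solution is exact for the relaxed problem, which is reused later to establish tightness of the bound for specific losses and to state the refinement restricting to $i,j\in\predictor(\scoresubset)$; your approach yields the same inequality (and could be restricted to $\hat y\in\predictor(\scoresubset)$ just as easily, since $\hat y=\predictor(\scorev)$ for $\scorev\in\scoresubset$) but does not by itself exhibit the minimizing $(\scorev,\qv)$.
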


\textbf{Lower bound for specific losses.} We now discuss the meaning of the bound~\eqref{eq:lowerBoundcalibrationFunction} for some specific losses (the detailed derivations are given in App.~\ref{sec:bounds:interpretations}).
For the 0-1, block 0-1 and Hamming losses ($\lossmatrix_{01}$, $\lossmatrix_{01,\numblocks}$ and $\lossmatrix_{\hamming,\hamminglen}$, respectively) with the smallest possible score subspaces~$\scoresubset$, the bound~\eqref{eq:lowerBoundcalibrationFunction} gives ~$\frac{\eps^2}{4\outputvarcard}$, $\frac{\eps^2}{4\numblocks}$ and $\frac{\epsilon^2}{8\hamminglen}$, respectively.
All these bounds are tight (see App.~\ref{sec:exactTranferProofs}).
However, if~$\scoresubset = \R^\outputvarcard$ the bound~\eqref{eq:lowerBoundcalibrationFunction} is not tight for the block 0-1 and mixed losses (see also App.~\ref{sec:exactTranferProofs}).
In particular, the bound~\eqref{eq:lowerBoundcalibrationFunction} cannot detect level-$\consbreakpoint$ consistency for $\consbreakpoint > 0$ (see Def.~\ref{def:consistency}) and does not change when the loss changes, but the score subspace stays the same.

\textbf{Upper bound on the calibration function.}
Theorem~\ref{th:upperBoundcalibrationFunction} below gives an upper bound on the calibration function holding for unconstrained scores, i.e, $\scoresubset = \R^{\outputvarcard}$ (see the proof in App.~\ref{sec:boundscalibrationFunction:upper}).
This result shows that without some appropriate constraints on the scores, efficient learning is not guaranteed (in the worst case) because of the~$\nicefrac{1}{\outputvarcard}$ scaling of the calibration function.
\begin{theorem}[Upper bound on $\calibrationfunc_{\surrogatelossquad}$]
    \label{th:upperBoundcalibrationFunction}
    If a loss matrix~$\lossmatrix$ with $\Lmax > 0$ defines a pseudometric\footnote{\label{footnote:pseudometric}A pseudometric is a function $d(a,b)$ satisfying the following axioms: $d(x, y) \geq 0$, $d(x, x) = 0$ (but possibly $d(x,y)=0$ for some $x\neq y$), $d(x, y) = d(y, x)$, $d(x, z) \leq d(x, y) + d(y, z)$.} on labels and there are no constraints on the scores, i.e., $\scoresubset = \R^{\outputvarcard}$, then the calibration function for the quadratic surrogate~$\surrogatelossquad$ can be upper bounded:
    $
    \calibrationfunc_{\surrogatelossquad,\lossmatrix,\R^{\outputvarcard}}(\eps)
    \leq  \frac{\eps^2}{2\outputvarcard}, \quad 0 \leq \eps \leq \Lmax.
    $
\end{theorem}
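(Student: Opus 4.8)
Since the calibration function is defined as an infimum over feasible pairs $(\scorev,\qv)$, to upper-bound it I will exhibit an explicit family of feasible points whose objective value tends to $\eps^2/(2\outputvarcard)$. The key simplification is that $\scoresubset=\R^{\outputvarcard}$ trivially contains $\colspace(\lossmatrix)$, so formula~\eqref{eq:quadSurrogateExcessSimple} applies and $\excess\surrogateweightedquad(\scorev,\qv)=\frac{1}{2\outputvarcard}\|\scorev+\lossmatrix\qv\|_2^2$; in particular, for a fixed $\qv$ the unconstrained surrogate minimizer is $-\lossmatrix\qv$, and its argmax coordinate is an optimal task prediction for $\qv$. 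The plan is to choose $\qv$ concentrated on two labels $a\neq b$ at maximal loss $\lossmatrix(a,b)=\Lmax$, namely $\qv=\beta\unit_a+(1-\beta)\unit_b$ with $\beta=\tfrac12(1-\eps/\Lmax)\in[0,\tfrac12]$, and then to perturb the surrogate-optimal score $-\lossmatrix\qv$ by a small bump along $\unit_a$: this flips the prediction to the suboptimal label $a$ at a controlled excess task risk of exactly $\eps$, while costing only $O(\eps^2/\outputvarcard)$ in excess surrogate risk.

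First I would compute, using the pseudometric axioms, that $(\lossmatrix\qv)_a=\beta\lossmatrix(a,a)+(1-\beta)\lossmatrix(a,b)=(1-\beta)\Lmax$ and $(\lossmatrix\qv)_b=\beta\lossmatrix(b,a)=\beta\Lmax$, and — crucially — that $(\lossmatrix\qv)_c\ge\beta\Lmax$ for every label $c$ via the triangle inequality and symmetry, since $\lossmatrix(c,a)\ge\Lmax-\lossmatrix(c,b)$ gives $(\lossmatrix\qv)_c\ge\beta\Lmax+(1-2\beta)\lossmatrix(c,b)\ge\beta\Lmax$. Hence the optimal conditional task risk for $\qv$ equals $\beta\Lmax$, attained at $b$. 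Next, for $\delta>0$ I set $\scorev_\delta:=-\lossmatrix\qv+(\eps+\delta)\unit_a$; since $(1-2\beta)\Lmax=\eps$, its $a$-th coordinate equals $-\beta\Lmax+\delta$, which strictly exceeds every other coordinate $-(\lossmatrix\qv)_c\le-\beta\Lmax$, so $\predictor(\scorev_\delta)=a$ regardless of the tie-breaking rule. Then $\excess\lossweighted(\scorev_\delta,\qv)=(\lossmatrix\qv)_a-\beta\Lmax=(1-2\beta)\Lmax=\eps$, so the constraint~\eqref{eq:calibrationfunc:epsConstr} holds, and by~\eqref{eq:quadSurrogateExcessSimple} the surrogate excess is $\frac{1}{2\outputvarcard}\|(\eps+\delta)\unit_a\|_2^2=\frac{(\eps+\delta)^2}{2\outputvarcard}$. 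Letting $\delta\downarrow0$ yields $\calibrationfunc_{\surrogatelossquad,\lossmatrix,\R^{\outputvarcard}}(\eps)\le\frac{\eps^2}{2\outputvarcard}$; the boundary case $\eps=0$ is immediate since $\calibrationfunc_{\surrogatelossquad,\lossmatrix,\R^{\outputvarcard}}(0)=0$.

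I expect the only genuinely load-bearing step to be verifying that the optimal task prediction for this particular $\qv$ is $b$ and nothing else — this is exactly where the pseudometric (triangle inequality) hypothesis enters: if some third label had strictly smaller conditional risk than $b$, the cheapest score perturbation that flips the prediction to a suboptimal label would have norm larger than $\eps$, and the resulting bound would be weaker than $\eps^2/(2\outputvarcard)$. A secondary point of care is the fixed smallest-index tie-breaking convention, which is why I use a strict bump $\eps+\delta$ with $\delta>0$ and pass to the limit rather than setting $\delta=0$ directly.
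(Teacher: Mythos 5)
Your proposal is correct and follows essentially the same route as the paper's proof: the same two-point conditional distribution $\bigl(\tfrac12+\tfrac{\eps}{2\Lmax},\,\tfrac12-\tfrac{\eps}{2\Lmax}\bigr)$ on a pair of labels at maximal loss, the same use of the triangle inequality to certify that one of the two labels is the globally optimal prediction, and the same score vector $-\lossmatrix\qv$ perturbed by $\eps$ along the suboptimal label's coordinate, yielding a surrogate excess of $\eps^2/(2\outputvarcard)$. The only difference is cosmetic: you resolve the tie-breaking issue with a strict $\delta$-bump and a limit $\delta\downarrow 0$, whereas the paper places the witness exactly on the tie and relies on Lemma~\ref{th:breakingSymmetries} (closure of the prediction regions) to justify it.
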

From our lower bound in Theorem~\ref{th:lowerBoundcalibrationFunction} (which guarantees consistency), the natural constraint on the score is $\scoresubset = \colspace(\lossmatrix)$, with the dimension of this space giving an indication of the intrinsic ``difficulty'' of a loss.
Computations for the lower bounds in some specific cases (see App.~\ref{sec:bounds:interpretations} for details) show that the 0-1 loss is ``hard'' while the block 0-1 loss and the Hamming loss are ``easy''.
Note that in all these cases the lower bound~\eqref{eq:lowerBoundcalibrationFunction} is tight, see the discussion below.

\textbf{Exact calibration functions.}
Note that the bounds proven in Theorems~\ref{th:lowerBoundcalibrationFunction} and~\ref{th:upperBoundcalibrationFunction} imply that, in the case of no constraints on the scores~$\scoresubset = \R^\outputvarcard$, for the 0-1, block 0-1 and Hamming losses, we have
\begin{equation}
\label{eq:lowerUpperBounds}
\frac{\eps^2}{4\outputvarcard} \leq \calibrationfunc_{\surrogatelossquad,\lossmatrix,\R^\outputvarcard}(\eps) \leq \frac{\eps^2}{2\outputvarcard},
\end{equation}
where $\lossmatrix$ is the matrix defining a loss.
For completeness, in App.~\ref{sec:exactTranferProofs}, we compute the exact calibration functions for the 0-1 and block 0-1 losses.
Note that the calibration function for the \textbf{0-1 loss} equals the lower bound, illustrating the worst-case scenario. To get some intuition, an example of a conditional distribution~$\qv$ that gives the (worst case) value to the calibration function (for several losses) is $\q_i = \frac{1}{2} + \frac{\eps}{2}$, $\q_j = \frac{1}{2} - \frac{\eps}{2}$ and $\q_c=0$ for $c \not\in \{i,j\}$. See the proof of Proposition~\ref{th:calibrationFunction:01loss} in App.~\ref{sec:exactTranferProofs:01loss}.

In what follows, we provide the calibration functions in the cases with constraints on the scores.
For the \textbf{block 0-1 loss} with~$\numblocks$ equal blocks and under constraints that the scores within blocks are equal, the calibration function equals (see Proposition~\ref{th:calibrationFunction:block01loss:hardConstr} of  App.~\ref{sec:exactTranferProofs:block01loss})
\begin{equation}
\label{eq:calibrationFunction:block01loss:hardConstr}
\calibrationfunc_{\surrogatelossquad,\lossmatrix_{01,\numblocks},\scoresubset_{01,\numblocks}}(\eps)
=
\frac{\eps^2}{4\numblocks},
\quad 0 \leq \eps \leq 1.
\end{equation}

For the \textbf{Hamming loss} defined over $\hamminglen$~binary variables and under constraints implying separable scores, the calibration function equals (see Proposition~\ref{th:calibrationFunction:hammingLoss:hardConstr} in App.~\ref{sec:exactTranferProofs:hamming})
\begin{equation}
\label{eq:calibrationFunction:hammingloss:hardConstr}
\calibrationfunc_{\surrogatelossquad,\lossmatrix_{\hamming,\hamminglen},\scoresubset_{\hamming,\hamminglen}}(\eps)
=
\frac{\eps^2}{8\hamminglen},
\; 0 \leq \eps \leq 1.
\end{equation}

The calibration functions~\eqref{eq:calibrationFunction:block01loss:hardConstr} and~\eqref{eq:calibrationFunction:hammingloss:hardConstr} depend on the quantities representing the actual complexities of the loss (the number of blocks~$\numblocks$ and the length of the sequence~$\hamminglen$) and can be exponentially larger than the upper bound for the unconstrained case.

In the case of \textbf{mixed 0-1 and block 0-1 loss},
if the scores~$\scorev$ are constrained to be equal inside the blocks, i.e., belong to the subspace~$\scoresubset_{01,\numblocks} = \colspace(\lossmatrix_{01,\numblocks}) \subsetneq \R^k$, then the calibration function is equal to~$0$ for $\eps \leq \frac{\consbreakpoint}{2}$, implying inconsistency (and also note that the approximation error can be as big as~$\consbreakpoint$ for $\scoresubset_{01,\numblocks}$).
However, for $\eps > \frac{\consbreakpoint}{2}$, the calibration function is of the order
$
\frac{1}{\numblocks} (\eps-\frac{\consbreakpoint}{2})^2.
$
See Figure~\ref{fig:exampleTranferFunctions}b for the illustration of this calibration function and Proposition~\ref{th:calibrationFunction:mixedLoss:hardConstr} of App.~\ref{sec:exactTranferProofs:mixedloss} for the exact formulation and the proof.
Note that while the calibration function for the constrained case is inconsistent, its value can be exponentially larger than the one for the unconstrained case for $\eps$ big enough and when the blocks are exponentially large (see Proposition~\ref{th:calibrationFunction:mixedLoss} of App.~\ref{sec:exactTranferProofs:mixedloss}).

\textbf{Computation of the SGD constants.}
Applying the learning complexity Theorem~\ref{th:calibrationSGD:kernels} requires to compute the quantity $DM$ where $D$ bounds the norm of the optimal solution and $M$ bounds the expected square of the norm of the stochastic gradient.
In App.~\ref{sec:tranferAndSgd}, we provide a way to bound this quantity for our quadratic surrogate~\eqref{eq:quadrLoss} under the simplifying assumption that each conditional~$\q_c(\inputvarv)$ (seen as function of $\inputvarv$) belongs to the RKHS~$\hilbertspace$ (which implies Assumption~\ref{th:well_specification}).
In particular, we get
\begin{equation}
\label{eq:sgdConstant}
DM
=
\Lmax^2 \xi(\condnum(\scorematrix) \sqrt{\scoresubspacedim} \rkhsbound  \qbound), \quad \xi(z) = z^2 + z,
\end{equation}
where $\condnum(\scorematrix)$ is the condition number of the matrix~$\scorematrix$, $\rkhsbound$ is an upper bound on the RKHS norm of object feature maps $\|\featuremap(\inputvarv)\|_\hilbertspace$. We define $\qbound$ as an upper bound on $\sum_{c=1}^{\outputvarcard} \|\q_c\|_\hilbertspace$ (can be seen as the generalization of the inequality~$\sum_{c=1}^{\outputvarcard} \q_c \leq 1$ for probabilities).
The constants $\rkhsbound$ and $\qbound$ depend on the data, the constant~$\Lmax$ depends on the loss, $\scoresubspacedim$ and $\condnum(\scorematrix)$ depend on the choice of matrix~$\scorematrix$.

We compute the constant $DM$ for the specific losses that we considered in App.~\ref{sec:tranferAndSgd:constants}.
For the 0-1, block 0-1 and Hamming losses, we have $DM = \bigO(\outputvarcard)$, $DM = \bigO(\numblocks)$ and $DM = \bigO(\log_2^3 \outputvarcard)$, respectively.
These computations indicate that the quadratic surrogate allows efficient learning for structured block 0-1 and Hamming losses, but that the convergence could be slow in the worst case for the 0-1 loss.

\section{Related works}
\label{sec:relatedworks}
\textbf{Consistency for multi-class problems.}
Building on significant progress for the case of binary classification, see, e.g.~\cite{bartlett06convexity}, there has been a lot of interest in the multi-class case.
\citet{zhang04} and \citet{tewari07} analyze the consistency of many existing surrogates for the 0-1 loss.
\citet{gao2011multilabel} focus on  multi-label classification.
\citet{narasimhan15} provide a consistent algorithm for arbitrary multi-class loss defined by a function of the confusion matrix.
Recently, \citet{ramaswamy16calibrDim} introduce the notion of convex calibrated dimension, as the minimal dimensionality of the score vector that is required for consistency.
In particular, they showed that for the Hamming loss on~$\hamminglen$ binary variables, this dimension is at most~$\hamminglen$.
In our analysis, we use scores of rank~$(\hamminglen+1)$, see~\eqref{eq:hammingLoss} in App.~\ref{sec:bounds:interpretations}, yielding a similar result.

The task of ranking has attracted a lot of attention and \cite{duchi10,buffoni2011learning,calauzenes12,ramaswamy13rankSurrogates} analyze different families of surrogate and task losses proving their (in-)consistency.
In this line of work, \citet{ramaswamy13rankSurrogates} propose a quadratic surrogate for an arbitrary low rank loss which is related to our quadratic surrogate~\eqref{eq:quadrLoss}.
They also prove that several important ranking losses, i.e.,
precision@q, 
expected rank utility, 
mean average precision and
pairwise disagreement, are of low-rank.
We conjecture that our approach is compatible with these losses and leave precise connections as future work.

\textbf{Structured SVM (SSVM) and friends.}
SSVM~\citep{taskar03,taskar2005learning,tsochantaridis05} is one of the most used convex surrogates for tasks with structured outputs, thus, its consistency has been a question of great interest.
It is known that Crammer-Singer multi-class SVM~\citep{crammer01}, which SSVM is built on, is not consistent for 0-1 loss unless there is a majority class with probability at least $\frac12$~\citep{zhang04,mcallester07}.
However, it is consistent for the ``abstain'' and ordinal losses in the case of~$3$ classes~\cite{ramaswamy16calibrDim}.
Structured ramp loss and probit surrogates are closely related to SSVM and are consistent~\citep{mcallester07,chapelle2009tighter,mcallester11,keshet14}, but not convex.

Recently, \citet{dogan2016svm} categorized different versions of multi-class SVM and analyzed them from Fisher and universal consistency point of views.
In particular, they highlight differences between Fisher and universal consistency and give examples of surrogates that are Fisher consistent, but not universally consistent and vice versa.
They also highlight that the Crammer-Singer SVM is neither Fisher, not universally consistent even with a careful choice of regularizer.

\textbf{Quadratic surrogates for structured prediction.}
\citet{ciliberto16} and \citet{brouard2016input} consider minimizing~$\sum_{i=1}^n \|g(\inputvarv_i) - \featuremap_o(\outputvarv_i)\|_\hilbertspace^2$ aiming to match the RKHS embedding of inputs~$g: \inputdomain \to  \hilbertspace$ to the feature maps of outputs~$\featuremap_o: \outputdomain \to \hilbertspace$.
In their frameworks, the task loss is not considered at the learning stage, but only at the prediction stage.
Our quadratic surrogate~\eqref{eq:quadrLoss} depends on the loss directly.
The empirical risk defined by both their and our objectives can be minimized analytically with the help of the kernel trick and, moreover, the resulting predictors are identical.
However, performing such computation in the case of large dataset can be intractable and the generalization properties have to be taken care of, e.g., by the means of regularization.
In the large-scale scenario, it is more natural to apply stochastic optimization (e.g., kernel ASGD) that directly minimizes the population risk and has better dependency on the dataset size.
When combined with stochastic optimization, the two approaches lead to different behavior.
In our framework, we need to estimate~$\scoresubspacedim = \rank(\lossmatrix)$ scalar functions, but the alternative needs to estimate~$\outputvarcard$ functions (if, e.g., $\featuremap_o(\outputvarv) = \unit_\outputvarv \in \R^\outputvarcard$), which results in significant differences for low-rank losses, such as block 0-1 and Hamming.

\textbf{Calibration functions.}
\citet{bartlett06convexity} and \citet{steinwart07} provide calibration functions for most existing surrogates for binary classification.
All these functions differ in term of shape, but are roughly similar in terms of constants.
\citet{pedregosa15ordivalreg} generalize these results to the case of ordinal regression.
However, their calibration functions have at best a $\nicefrac{1}{\outputvarcard}$ factor if the surrogate is normalized w.r.t.\ the number of classes.
The task of ranking has been of significant interest.
However, most of the literature \citep[e.g.,][]{clemencon2008ranking,cossock2008ranking,kotlowski2011bipartite,agarwal2014regretbounds},
only focuses on calibration functions (in the form of regret bounds) for bipartite ranking, which is more akin to cost-sensitive binary classification.

\citet{pires2013riskbounds} generalize the theoretical framework developed by~\citet{steinwart07}  and present results for the multi-class SVM of~\citet{lee2004multicategory} (the score vectors are constrained to sum to zero) that can be built for any task loss of interest.
Their surrogate $\surrogateloss$ is of the form $\sum_{c \in \outputdomain} \lossmatrix(c, \outputvarv) a( \score_c )$ where $\sum_{c \in \outputdomain} \score_c = 0$ and $a(\score)$ is some convex function with all subgradients at zero being positive.
The recent work by~\citet{pires2016calfuncs} refines the results, but specifically for the case of 0-1 loss.
In this line of work, the surrogate is typically not normalized by~$\outputvarcard$, and if normalized the calibration functions have the constant $\nicefrac{1}{\outputvarcard}$ appearing.

Finally, \citet{ciliberto16} provide the calibration function for their quadratic surrogate.
Assuming that the loss can be represented as
$
\lossmatrix(\hat{\outputvarv}, \outputvarv) = \langle V \featuremap_o(\hat{\outputvarv}), \featuremap_o(\outputvarv) \rangle_{\mathcal{H}_\outputdomain}
$, $\hat{\outputvarv}, \outputvarv \in \outputdomain$
(this assumption can always be satisfied in the case of a finite number of labels, by taking $V$ as the loss matrix~$\lossmatrix$ and $\featuremap_o(\outputvarv) := \unit_\outputvarv \in \R^\outputvarcard$ where $\unit_\outputvarv$ is the $\outputvarv$-th vector of the standard basis in~$\R^{\outputvarcard}$).
In their Theorem~2, they provide an excess risk bound leading to a lower bound on the corresponding calibration function
$
\calibrationfunc_{\surrogateloss,\lossmatrix,\R^{\outputvarcard}}(\eps) \geq \frac{\eps^2}{c_\Delta^2}
$
where a constant $c_\Delta = \|V\|_2 \max_{\outputvarv \in \outputdomain} \|\featuremap_o(\outputvarv)\|$ simply equals the spectral norm of the loss matrix for the finite-dimensional construction provided above.
However, the spectral norm of the loss matrix is exponentially large even for highly structured losses such as the block 0-1 and Hamming losses, i.e.,
$\|\lossmatrix_{01,\numblocks}\|_2 = \outputvarcard - \frac{\outputvarcard}{\numblocks}$,
$\|\lossmatrix_{\hamming,\hamminglen}\|_2 = \frac{\outputvarcard}{2}$.
This conclusion puts the objective of~\citet{ciliberto16} in line with ours when no constraints are put on the scores.

\section{Conclusion}
\label{sec:conclusion}
In this paper, we studied the consistency of convex surrogate losses specifically in the context of structured prediction.
We analyzed calibration functions and proposed an optimization-based normalization aiming to connect consistency with the existence of efficient learning algorithms.
Finally, we instantiated all components of our framework for several losses by computing the calibration functions and the constants coming from the normalization.
By carefully monitoring exponential constants, we highlighted the difference between tractable and intractable task losses.

These were first steps in advancing our theoretical understanding of consistent structured prediction. Further steps include analyzing more losses such as the low-rank ranking losses studied by~\citet{ramaswamy13rankSurrogates} and, instead of considering constraints on the scores, one could instead put constraints on the set of distributions to investigate the effect on the calibration function.  

\subsubsection*{Acknowledgements}
We would like to thank Pascal Germain for useful discussions.
This work was partly supported by the ERC grant Activia (no. 307574), the NSERC Discovery Grant RGPIN-2017-06936 and the MSR-INRIA Joint Center.

\bibliographystyle{icml2017}
{ %
\bibliography{references}}

\clearpage
\appendix

\newcommand{\toptitlebar}{
    \hrule height 4pt
    \vskip 0.25in
    \vskip -\parskip%
}
\newcommand{\bottomtitlebar}{
    \vskip 0.29in
    \vskip -\parskip
    \hrule height 1pt
    \vskip 0.09in%
}
\toptitlebar
\begin{center}
{\LARGE\bf Supplementary Material (Appendix)}
\\[4mm]
{\LARGE\bf On Structured Prediction Theory with Calibrated }
\\
{\LARGE\bf Convex Surrogate Losses }
\end{center}
\bottomtitlebar

\section*{Outline} 
\begin{description}[itemsep=0mm,topsep=0cm,leftmargin=*,font=\normalfont]
    \renewcommand\labelitemi{--}
    \item[Section~\ref{sec:Thm6Proof}:] Proof of learning complexity Theorem~\ref{th:calibrationSGD:kernels}.
    \item[Section~\ref{sec:techLemmas}:] Technical lemmas useful for the proofs.
    \item[Section~\ref{sec:specialLosses}:] Discussion and consistency results on a family of surrogate losses.
    \item[Section~\ref{sec:boundscalibrationFunction}:] Bounds on the calibration functions.
    \begin{description}[itemsep=0mm,topsep=-0.05cm,itemindent=0.0cm,leftmargin=0.6cm,font=\normalfont]
    \item[Section~\ref{sec:boundscalibrationFunction:lower}:] Theorem~\ref{th:lowerBoundcalibrationFunction}~-- a lower bound.
    \item[Section~\ref{sec:boundscalibrationFunction:upper}:]  Theorem~\ref{th:upperBoundcalibrationFunction}~-- an upper bound.
    \item[Section~\ref{sec:bounds:interpretations}:] Computation of the bounds for specific task losses.
    \end{description}
    \item[Section~\ref{sec:exactTranferProofs}:] Computations of the exact calibration functions for the quadratic surrogate.
    \begin{description}[itemsep=0mm,topsep=-0.05cm,itemindent=0.0cm,leftmargin=0.6cm,font=\normalfont]
        \item[Section~\ref{sec:exactTranferProofs:01loss}:] 0-1 loss.
        \item[Section~\ref{sec:exactTranferProofs:block01loss}:] Block 0-1 loss.
        \item[Section~\ref{sec:exactTranferProofs:hamming}:] Hamming loss.
        \item[Section~\ref{sec:exactTranferProofs:mixedloss}:] Mixed 0-1 and block 0-1 loss.
    \end{description}
    \item[Section~\ref{sec:tranferAndSgd}:] Computing constants appearing in the SGD rate.
    \item[Section~\ref{sec:hammingLossProps}:] Properties of the basis of the Hamming loss.
\end{description}

\section{Learning complexity theorem} \label{sec:Thm6Proof}

\begin{reptheorem}{th:calibrationSGD:kernels}[Learning complexity]
    \label{th:rep:calibrationSGD:kernels}
 Under the assumptions of Theorem~\ref{th:rkhsSgdConvergence}, for any~$\eps > 0$, the random (w.r.t.\ the observed training set) output~$\bar{\scorefunc}^{(\maxiter)} \in \scorefuncset_{\scorematrix, \hilbertspace}$ of the ASGD algorithm after
    \begin{equation}
    \maxiter > \maxiter^* := \frac{4 D^2 M^2}{\vphantom{(\big(\bigr)}\check{\calibrationfunc}_{\surrogateloss,\lossmatrix,\scoresubset}^2(\eps) }
    \end{equation}
    iterations has the expected excess risk bounded with~$\eps$, i.e.,
    $
    \E[\risk_\lossmatrix(\bar{\scorefunc}^{(\maxiter)})] < \risk_{\lossmatrix,\scoresubset}^* + \eps.
    $
\end{reptheorem}
\begin{proof}
	By~\eqref{eq:rkhsSgdRate} from Theorem~\ref{th:rkhsSgdConvergence}, $\maxiter$ steps of the algorithm, in expectation, result in~$\check{\calibrationfunc}_{\surrogateloss,\lossmatrix,\scoresubset}(\eps)$ accuracy on the surrogate risk, i.e., $\E[\risk_\surrogateloss(\bar{\scorefunc}^{(\maxiter)})] -  \risk_{\surrogateloss,\scoresubset}^* < \check{\calibrationfunc}_{\surrogateloss,\lossmatrix,\scoresubset}(\eps)$.
    We now generalize the proof of Theorem~\ref{th:lossConnection} to the case of expectation w.r.t.~$\bar{\scorefunc}^{(\maxiter)}$ depending on the random samples used by the ASGD algorithm.
    We take the expectation of~\eqref{eq:excessRiskBound} w.r.t.\ $\bar{\scorefunc}^{(\maxiter)}$ substituted as~$\scorefunc$ and use Jensen's inequality (by convexity of~$\check{\calibrationfunc}_{\surrogateloss,\lossmatrix,\scoresubset}$) to get
    $\E[\risk_\surrogateloss(\bar{\scorefunc}^{(\maxiter)})] -  \risk_{\surrogateloss,\scoresubset}^* \geq \E[\check{\calibrationfunc}_{\surrogateloss,\lossmatrix,\scoresubset}(\risk_\lossmatrix(\bar{\scorefunc}^{(\maxiter)}) - \risk_{\lossmatrix,\scoresubset}^*)] \geq \check{\calibrationfunc}_{\surrogateloss,\lossmatrix,\scoresubset}(\E[\risk_\lossmatrix(\bar{\scorefunc}^{(\maxiter)})] - \risk_{\lossmatrix,\scoresubset}^*)$.
    Finally, monotonicity of~$\check{\calibrationfunc}_{\surrogateloss,\lossmatrix,\scoresubset}$ implies~$\E[\risk_\lossmatrix(\bar{\scorefunc}^{(\maxiter)})] - \risk_{\lossmatrix,\scoresubset}^* < \eps$.
\end{proof}

\section{Technical lemmas}
\label{sec:techLemmas}
In this section, we prove two technical lemmas that simplify the proofs of the main theoretical claims of the paper.

Lemma~\ref{th:quadSurrogateExcess} computes the excess of the weighted surrogate risk~$\excess\surrogateweighted$ for the quadratic loss~$\surrogatelossquad$~\eqref{eq:quadrLoss}, which is central to our analysis presented in Section~\ref{sec:quadrSurr}.
The key property of this result is that the excess~$\excess\surrogateweighted$ is jointly convex w.r.t.\ the parameters~$\scoreparamv$ and conditional distribution~$\qv$, which simplifies further analysis.

Lemma~\ref{th:breakingSymmetries} allows to cope with the combinatorial aspect of the computation of the calibration function.
In particular, when the excess of the weighted surrogate risk is convex, Lemma~\ref{th:breakingSymmetries} reduces the computation of the calibration function to a set of convex optimization problems, which often can be solved analytically.
For symmetric losses, such as the 0-1, block 0-1 and Hamming losses, Lemma~\ref{th:breakingSymmetries} also provides ``symmetry breaking'', meaning that many of the obtained convex optimization problems are identical up to a permutation of labels.
\begin{lemma}
    \label{th:quadSurrogateExcess}
    Consider the quadratic surrogate $\surrogatelossquad$~\eqref{eq:quadrLoss} defined for a task loss~$\lossmatrix$.
    Let a subspace of scores~$\scoresubset \subseteq \R^\outputvarcard$ be parametrized by $\scoreparamv\in \R^\scoresubspacedim$, i.e., $\scorev = \scorematrix\scoreparamv \in \scoresubset$ with $\scorematrix \in \R^{\outputvarcard \times \scoresubspacedim}$, and assume that $\colspace(\lossmatrix) \subseteq \scoresubset$.
    Then, the excess of the weighted surrogate loss can be expressed as
    \begin{equation*}
    \excess\surrogateweightedquad(\scorematrix\scoreparamv, \qv) := 
    \surrogateweightedquad(\scorematrix\scoreparamv, \qv)-\inf_{\scoreparamv' \in \R^\scoresubspacedim} \surrogateweightedquad(\scorematrix\scoreparamv', \qv) =
    \frac{1}{2\outputvarcard}\| \scorematrix\scoreparamv + \lossmatrix \qv \|_2^2.
    \end{equation*}
\end{lemma}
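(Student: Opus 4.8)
The plan is to compute the conditional surrogate risk $\surrogateweightedquad(\scorematrix\scoreparamv, \qv)$ in closed form, observe that it is a convex quadratic in $\scoreparamv$, minimize it explicitly, and subtract. First I would expand the definition: since $\surrogatelossquad(\scorev,\outputvarv) = \frac{1}{2\outputvarcard}\|\scorev + \lossmatrix(:,\outputvarv)\|_2^2$, averaging over $\outputvarv \sim \qv$ gives $\surrogateweightedquad(\scorev,\qv) = \frac{1}{2\outputvarcard}\sum_c q_c \|\scorev + \lossmatrix(:,c)\|_2^2$. Expanding the square and using $\sum_c q_c = 1$, this becomes $\frac{1}{2\outputvarcard}\bigl(\|\scorev\|_2^2 + 2\langle \scorev, \sum_c q_c \lossmatrix(:,c)\rangle + \sum_c q_c \|\lossmatrix(:,c)\|_2^2\bigr)$. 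Recognizing $\sum_c q_c \lossmatrix(:,c) = \lossmatrix\qv$ and completing the square in $\scorev$, we get $\surrogateweightedquad(\scorev,\qv) = \frac{1}{2\outputvarcard}\bigl(\|\scorev + \lossmatrix\qv\|_2^2 + \text{const}(\qv)\bigr)$, where $\text{const}(\qv) = \sum_c q_c\|\lossmatrix(:,c)\|_2^2 - \|\lossmatrix\qv\|_2^2$ does not depend on $\scorev$.

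Next I would carry out the minimization over $\scorev = \scorematrix\scoreparamv \in \scoresubset$. Since $\colspace(\lossmatrix)\subseteq\scoresubset$, the vector $-\lossmatrix\qv$ lies in $\scoresubset$, so it is attainable as $\scorematrix\scoreparamv'$ for some $\scoreparamv'$; at that point $\|\scorematrix\scoreparamv' + \lossmatrix\qv\|_2^2 = 0$, which is the global minimum of the nonnegative quadratic term. Hence $\inf_{\scoreparamv'\in\R^\scoresubspacedim}\surrogateweightedquad(\scorematrix\scoreparamv',\qv) = \frac{1}{2\outputvarcard}\text{const}(\qv)$. Subtracting, the $\text{const}(\qv)$ terms cancel and we obtain $\excess\surrogateweightedquad(\scorematrix\scoreparamv,\qv) = \frac{1}{2\outputvarcard}\|\scorematrix\scoreparamv + \lossmatrix\qv\|_2^2$, as claimed. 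Joint convexity in $(\scoreparamv,\qv)$ is then immediate, since $\scorematrix\scoreparamv + \lossmatrix\qv$ is linear in $(\scoreparamv,\qv)$ and the squared Euclidean norm is convex.

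This lemma is essentially a computation rather than a deep argument, so there is no serious obstacle; the only point requiring a little care is the use of the hypothesis $\colspace(\lossmatrix)\subseteq\scoresubset$, which is exactly what guarantees the infimum of the quadratic term is actually $0$ (and attained), so that the $\qv$-dependent constant cancels cleanly. Without that assumption the infimum would instead involve the projection of $-\lossmatrix\qv$ onto $\scoresubset$, and the excess would pick up a $\|\proj_{\scoresubset}(\cdot)\|$ term; I would flag that the stated clean form relies on the containment hypothesis.
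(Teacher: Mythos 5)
Your proof is correct and follows essentially the same route as the paper's: both reduce to the observation that $\surrogateweightedquad(\cdot,\qv)$ is a convex quadratic whose $\scorev$-dependent part is $\frac{1}{2\outputvarcard}\|\scorev+\lossmatrix\qv\|_2^2$, and both use $\colspace(\lossmatrix)\subseteq\scoresubset$ at exactly the same point to make the minimum attainable so the $\qv$-dependent constant cancels. The only (cosmetic) difference is that you complete the square directly in $\scorev$, whereas the paper works in the $\scoreparamv$-coordinates, exhibits the minimizer $\scoreparamv^*=-(\scorematrix^\transpose\scorematrix)^\pinv\scorematrix^\transpose\lossmatrix\qv$, and invokes the projection identity $\proj_{\scorematrix}\lossmatrix=\lossmatrix$; your closing remark about what fails without the containment hypothesis matches how the paper later handles that case (Proposition~\ref{th:calibrationFunction:mixedLoss:hardConstr}).
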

\begin{proof}
    By using the definition of the quadratic surrogate~$\surrogatelossquad$~\eqref{eq:quadrLoss}, we have
    \begin{align*}
    \surrogateweighted(\scorev(\scoreparamv), \qv) &= \frac{1}{2\outputvarcard}(\scoreparamv^\transpose \scorematrix^\transpose \scorematrix \scoreparamv + 2 \scoreparamv^\transpose \scorematrix^\transpose \lossmatrix \qv) + r(\qv), \\
    \scoreparamv^* &:= \argmin\nolimits_\scoreparamv \surrogateweighted(\scorev(\scoreparamv), \qv) =  -(\scorematrix^\transpose \scorematrix)^\pinv \scorematrix^\transpose \lossmatrix \qv, \\
    \excess\surrogateweighted(\scorev(\scoreparamv), \qv) &= \frac{1}{2\outputvarcard}(\scoreparamv^\transpose \scorematrix^\transpose \scorematrix \scoreparamv + 2 \scoreparamv^\transpose \scorematrix^\transpose \lossmatrix \qv \\
    &\qquad\qquad\qquad+ \qv^\transpose \lossmatrix^\transpose \scorematrix (\scorematrix^\transpose \scorematrix)^\pinv \scorematrix^\transpose \lossmatrix \qv),
    \end{align*}
    where $r(\qv)$ denotes the quantity independent of parameters~$\scoreparamv$.
    Note that $\proj_{\scorematrix} := \scorematrix (\scorematrix^\transpose \scorematrix)^\pinv \scorematrix^\transpose$ is the orthogonal projection on the subspace~$\colspace(\scorematrix)$, so if $\colspace(\lossmatrix) \subseteq \colspace(\scorematrix)$ we have 
    $
    \proj_{\scorematrix} \lossmatrix = \lossmatrix,
    $
    which finishes the proof.
\end{proof}

\begin{lemma}
    \label{th:breakingSymmetries}
    In the case of a finite number~$\outputvarcard$ of labels, for any task loss~$\lossmatrix$, a surrogate loss~$\surrogateloss$ that is continuous and bounded from below, and a set of scores~$\scoresubset$, the calibration function can be written as 
    \begin{equation}
    \label{eq:calibrationfunc:decomposition}
    \calibrationfunc_{\surrogateloss,\lossmatrix,\scoresubset}(\eps) = \min_{\substack{i, j \in \predictor(\scoresubset) \\ i\neq j}} \calibrationfunc_{ij}(\eps),
    \end{equation}
    where the set $\predictor(\scoresubset) \subseteq \outputdomain$ is defined as the set of labels that the predictor can predict for some feasible scores and $\calibrationfunc_{ij}$ is defined via minimization of the same objective as~\eqref{eq:calibrationfunc}, but w.r.t.\ a smaller domain:
    \begin{align}
    \label{eq:calibrationfunc:generalBreakingSymmetries}
    \calibrationfunc_{ij}(\eps)
    =
    \inf_{\scorev, \qv} \;& \excess\surrogateweighted(\scorev, \qv), \\
    \notag %
    \text{s.t.} \;& \lossweighted_i(\qv) \leq\lossweighted_j(\qv) - \eps,\\
    \notag %
    &\lossweighted_i(\qv) \leq \lossweighted_c(\qv),\;\; \forall c \in \predictor(\scoresubset), \\
    \notag %
    &\score_j \geq \score_c,\;\:\: \forall c \in \predictor(\scoresubset), \\
    \notag %
    & \scorev \in \scoresubset,\\
    \notag
    & \qv \in \simplex_\outputvarcard.
    \end{align}
    Here $\lossweighted_c(\qv) := (\lossmatrix \qv)_c$ is the expected loss if predicting label~$c$.
    Index~$i$ represents a label with the smallest expected loss while index~$j$ represents a label with the largest score.
\end{lemma}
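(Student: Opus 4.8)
The plan is to prove the identity~\eqref{eq:calibrationfunc:decomposition} via two inequalities: $\calibrationfunc_{\surrogateloss,\lossmatrix,\scoresubset}(\eps)\ge\min_{i\neq j}\calibrationfunc_{ij}(\eps)$, and $\calibrationfunc_{\surrogateloss,\lossmatrix,\scoresubset}(\eps)\le\calibrationfunc_{ij}(\eps)$ for every admissible pair $i\neq j$ in $\predictor(\scoresubset)$; I restrict to $\eps>0$, since for $\eps=0$ both sides vanish. The first step is a reformulation removing the $\argmax$ from the loss: writing $\lossweighted_c(\qv):=(\lossmatrix\qv)_c$, one has $\lossweighted(\scorev,\qv)=\lossweighted_{\predictor(\scorev)}(\qv)$, and since $\predictor(\hat\scorev)$ runs exactly over the finite set $\predictor(\scoresubset)$ as $\hat\scorev$ runs over $\scoresubset$, we get $\inf_{\hat\scorev\in\scoresubset}\lossweighted(\hat\scorev,\qv)=\min_{c\in\predictor(\scoresubset)}\lossweighted_c(\qv)$ and hence $\excess\lossweighted(\scorev,\qv)=\lossweighted_{\predictor(\scorev)}(\qv)-\min_{c\in\predictor(\scoresubset)}\lossweighted_c(\qv)$. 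Two elementary facts will be used throughout: the $\predictor(\scorev)$-th coordinate of $\scorev$ equals $\max_{c\in\outputdomain}\score_c$ by definition of the predictor; and $\scorev\mapsto\excess\surrogateweighted(\scorev,\qv)$ is continuous, because the infimum subtracted in its definition does not depend on $\scorev$.

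For ``$\ge$'', I would take an arbitrary $(\scorev,\qv)$ feasible for~\eqref{eq:calibrationfunc}, set $j:=\predictor(\scorev)\in\predictor(\scoresubset)$, and pick $i$ minimizing $\lossweighted_c(\qv)$ over $c\in\predictor(\scoresubset)$. Feasibility gives $\lossweighted_j(\qv)-\lossweighted_i(\qv)=\excess\lossweighted(\scorev,\qv)\ge\eps>0$, so $\lossweighted_i(\qv)\le\lossweighted_j(\qv)-\eps$ and $i\neq j$; $\lossweighted_i(\qv)\le\lossweighted_c(\qv)$ for all $c\in\predictor(\scoresubset)$ by the choice of $i$; and $\score_j=\max_c\score_c\ge\score_c$ for all $c$. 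Hence $(\scorev,\qv)$ is feasible for the $\calibrationfunc_{ij}$ program~\eqref{eq:calibrationfunc:generalBreakingSymmetries}, so $\excess\surrogateweighted(\scorev,\qv)\ge\calibrationfunc_{ij}(\eps)\ge\min_{i'\neq j'}\calibrationfunc_{i'j'}(\eps)$, and taking the infimum over feasible $(\scorev,\qv)$ gives the claim. (If $\predictor(\scoresubset)$ is a singleton, no pair $i\neq j$ exists and $\excess\lossweighted\equiv 0$, so both sides equal $+\infty$ by the convention following Definition~\ref{def:calibrationFunc}; and since $\outputdomain$ is finite, the ``$\min$'' over pairs in~\eqref{eq:calibrationfunc:decomposition} is well defined.)

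For ``$\le$'', I would fix $i\neq j$ in $\predictor(\scoresubset)$ and $(\scorev,\qv)$ feasible for~\eqref{eq:calibrationfunc:generalBreakingSymmetries}, and show $\excess\surrogateweighted(\scorev,\qv)\ge\calibrationfunc_{\surrogateloss,\lossmatrix,\scoresubset}(\eps)$. The constraints give $\lossweighted_i(\qv)=\min_{c\in\predictor(\scoresubset)}\lossweighted_c(\qv)$ (since $i\in\predictor(\scoresubset)$) and $\score_j=\max_{c\in\outputdomain}\score_c$ (combine $\score_j\ge\score_c$ for $c\in\predictor(\scoresubset)$ with $\predictor(\scorev)\in\predictor(\scoresubset)$). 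If $\predictor(\scorev)=j$ then $\excess\lossweighted(\scorev,\qv)=\lossweighted_j(\qv)-\lossweighted_i(\qv)\ge\eps$, so $(\scorev,\qv)$ is feasible for~\eqref{eq:calibrationfunc} and we are done. The hard part is the remaining case, where $j$ is a maximizer of $\scorev$ but the tie is broken against it, so $\predictor(\scorev)\neq j$ and $(\scorev,\qv)$ need not satisfy $\excess\lossweighted\ge\eps$ --- the $\calibrationfunc_{ij}$ program does not literally force $j$ to be the predicted label. I would resolve this by a perturbation: since $j\in\predictor(\scoresubset)$, pick $\scorev^{\circ}\in\scoresubset$ with $\predictor(\scorev^{\circ})=j$ and set $\scorev_t:=(1-t)\scorev+t\scorev^{\circ}\in\scoresubset$ for $t\in(0,1)$ (legitimate because $\scoresubset$ is a subspace, hence convex). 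A short case analysis on coordinates --- distinguishing $\score_c<\score_j$ from $\score_c=\score_j$ and using that $j$ is the least-index maximizer of $\scorev^{\circ}$ --- shows $(\scorev_t)_j=\max_c(\scorev_t)_c$ with $j$ the least-index coordinate attaining this maximum, so $\predictor(\scorev_t)=j$; by the previous case $(\scorev_t,\qv)$ is then feasible for~\eqref{eq:calibrationfunc}, so $\excess\surrogateweighted(\scorev_t,\qv)\ge\calibrationfunc_{\surrogateloss,\lossmatrix,\scoresubset}(\eps)$, and letting $t\to0^{+}$ with continuity of $\scorev\mapsto\excess\surrogateweighted(\scorev,\qv)$ yields the same bound for $(\scorev,\qv)$. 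Taking the infimum over feasible $(\scorev,\qv)$ gives $\calibrationfunc_{ij}(\eps)\ge\calibrationfunc_{\surrogateloss,\lossmatrix,\scoresubset}(\eps)$; minimizing over the finitely many pairs finishes the argument. The symmetry-breaking remark then follows because any loss-preserving permutation of the labels maps program~\eqref{eq:calibrationfunc:generalBreakingSymmetries} for a pair $(i,j)$ to the one for the permuted pair, so for the 0-1, block 0-1 and Hamming losses only a handful of inequivalent pairs need to be analysed.
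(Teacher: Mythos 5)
Your proof is correct and follows essentially the same route as the paper's: both rest on rewriting $\excess\lossweighted$ via $\min_{c\in\predictor(\scoresubset)}\lossweighted_c(\qv)$, decomposing by the pair (predicted label, loss-minimizing label), and then using continuity of $\scorev\mapsto\excess\surrogateweighted(\scorev,\qv)$ to handle the tie-breaking boundary where $\score_j=\max_c\score_c$ but $\predictor(\scorev)\neq j$. Your explicit interpolation $\scorev_t=(1-t)\scorev+t\scorev^{\circ}$ is in fact the density argument showing that $\scoresubset_j$ is dense in $\overline{\scoresubset}_j$, a step the paper asserts without detail when it replaces $\scoresubset_j$ by its closure using Lemma~27 of \citet{zhang04}, so your write-up is if anything slightly more complete on that point.
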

\begin{proof}
    We use the notation $\scoresubset_j$ to define the set of score vectors~$\scorev$ where the predictor $\predictor(\scorev)$ takes a value~$j$, i.e., $\scoresubset_j := \{\scorev \in \scoresubset \mid \predictor(\scorev) = j\}$.
    The union of the sets $\scoresubset_j$, $j \in \predictor(\scoresubset)$, equals the whole set~$\scoresubset$.
    It is possible that sets $\scoresubset_j$ do not fully contain their boundary because of the usage of a particular tie-breaking strategy, but their closure can be expressed as $\overline{\scoresubset}_j := \{\scorev \in \scoresubset \mid \score_j \geq  \score_c, \forall c \in \predictor(\scoresubset)\}$.
    
    If $\scorev \in \scoresubset_j$, i.e. $j=\predictor(\scorev)$, then the feasible set of  probability vectors~$\qv$ for which a label~$i$ is one of the best possible predictions (i.e. $\excess\lossweighted(\scorev, \qv) = \ell_j(\qv) - \ell_i(\qv)  \geq \eps$) is $$\simplex_{\outputvarcard,i,j,\eps} := \{ \qv \in \simplex_{\outputvarcard} \mid \lossweighted_i(\qv) \leq \lossweighted_c(\qv), \forall c \in \predictor(\scoresubset); \ell_j(\qv) - \ell_i(\qv) \geq \eps\},$$ because $\inf_{\scorev' \in \scoresubset} \lossweighted(\scorev', \qv) = \min_{c \in \predictor(\scoresubset)} \lossweighted_c(\qv)$.
    
    The union of the sets~$\{\scoresubset_j \times \simplex_{\outputvarcard,i,j,\eps}\}_{i, j \in \predictor(\scoresubset)}$ thus exactly equals the feasibility set of the optimization problem~\eqref{eq:calibrationfunc}-\eqref{eq:calibrationfunc:epsConstr} (note that this is not true for the union of the sets~$\{\overline{\scoresubset}_j \times \simplex_{\outputvarcard,i,j,\eps}\}_{i, j \in \predictor(\scoresubset)}$, which can be strictly larger), thus we can rewrite the definition of the calibration function as follows:
    \begin{equation}
    \label{eq:calibrationfunc:proof}
    \calibrationfunc_{\surrogateloss,\lossmatrix,\scoresubset}(\eps) = \min_{\substack{i, j \in \predictor(\scoresubset) \\ i\neq j}} \inf_{\substack{\scorev \in \scoresubset_j,\\ \qv \in \simplex_{\outputvarcard,i,j,\eps}}} \excess\surrogateweighted(\scorev, \qv).
    \end{equation}
    To finish the proof, we use Lemma~27 of~\citep{zhang04} claiming that the function~$\excess\surrogateweighted(\scorev, \qv)$ is continuous w.r.t.\ both~$\qv$ and~$\scorev$, which allows us to substitute sets $\scoresubset_j$ in~\eqref{eq:calibrationfunc:proof} with their closures~$\overline{\scoresubset}_j$ without changing the value of the infimum.
\end{proof}

\section{Consistent surrogate losses}
\label{sec:specialLosses}
An ideal surrogate should not only be consistent, but also allow efficient optimization, by, e.g., being convex and allowing fast computation of stochastic gradients.
In this paper, we study a generalization to arbitrary multi-class losses of a surrogate loss class from~\citet[Section~4.4.2]{zhang04}\footnote{\citet{zhang04} refers to this surrogate as ``decoupled unconstrained background discriminative surrogate''. Note the~$\nicefrac{1}{\outputvarcard}$ scaling to make $\surrogateloss_{a,b}$ of order~$1$.}
that satisfies these requirements:
\begin{equation}
\label{eq:abLoss}
\surrogateloss_{a,b}( \scorev, \outputvarv  )
:=
\frac{1}{\outputvarcard}\sum\nolimits_{c = 1}^\outputvarcard \bigl( \lossmatrix(c, \outputvarv) a( \score_c ) + b( \score_c ) \bigr),
\end{equation}
where~$a, b : \R \to \R$ are convex functions.
A generic method to minimize this surrogate is to use any version of the SGD algorithm, while computing the stochastic gradient by sampling~$\outputvarv$ from the data generating distribution and a label~$c$ uniformly.
In the case of the quadratic surrogate~$\surrogatelossquad$, we proposed instead in the main paper to compute the sum over~$c$ analytically instead of sampling~$c$.

Extending the argument from~\citet{zhang04}, we show that the surrogates of the form~\eqref{eq:abLoss} are consistent w.r.t.\ a task loss~$\lossmatrix$ under some sufficient assumptions formalized in Theorem~\ref{th:abConsist}.
\begin{theorem}[Sufficient conditions for consistency]
    \label{th:abConsist}
    The surrogate loss~$\surrogateloss_{a,b}$ is consistent w.r.t.\ a task loss~$\lossmatrix$, i.e., $\calibrationfunc_{\surrogateloss_{a,b},\lossmatrix,\R^\outputvarcard}(\eps) > 0$ for any $\eps > 0$, under the following conditions on the functions~$a(\score)$ and $b(\score)$:
    \begin{enumerate}[noitemsep,topsep=-2mm]
        \item\label{item:thCond:smooth} The functions~$a$ and $b$ are convex and differentiable.
        \item\label{item:thCond:bounded} The function $ca(f) + b(f)$ is bounded from below and has a unique global minimizer (finite or infinite) for all $c \in [0, \Lmax]$.
        \item\label{item:thCond:second}  The functions~$a(f)$ and~$\frac{b'(f)}{a'(f)}$ are strictly increasing.
    \end{enumerate}
\end{theorem}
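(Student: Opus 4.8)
The crucial structural fact is that the conditional $\surrogateloss_{a,b}$-risk decouples across the coordinates of the score vector: since $\sum_{\outputvarv\in\outputdomain} q_\outputvarv \lossmatrix(c,\outputvarv) = (\lossmatrix\qv)_c = \lossweighted_c(\qv)$, we have
\[
\surrogateweighted(\scorev, \qv)
=
\frac{1}{\outputvarcard}\sum_{c=1}^{\outputvarcard}\bigl(\lossweighted_c(\qv)\, a(\score_c) + b(\score_c)\bigr),
\]
so minimization over $\scorev \in \R^{\outputvarcard}$ is separable. Because $\lossweighted_c(\qv) = (\lossmatrix\qv)_c \in [0,\Lmax]$ for all $\qv \in \simplex_\outputvarcard$, Condition~\ref{item:thCond:bounded} tells us that each scalar function $\phi_t(f) := t\,a(f) + b(f)$, $t\in[0,\Lmax]$, is bounded below and has a unique (possibly $\pm\infty$) minimizer $\score^\star(t) := \argmin_{f} \phi_t(f)$, and hence
\[
\excess\surrogateweighted(\scorev,\qv) = \frac{1}{\outputvarcard}\sum_{c=1}^{\outputvarcard}\Bigl(\phi_{\lossweighted_c(\qv)}(\score_c) - \inf_{f}\phi_{\lossweighted_c(\qv)}(f)\Bigr)
\]
is a sum of nonnegative terms. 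I would next record the two qualitative properties that drive the argument. By Condition~\ref{item:thCond:smooth} the optimality condition for an interior minimizer of $\phi_t$ reads $t\,a'(f)+b'(f)=0$, i.e.\ $t = -b'(f)/a'(f)$; since $a$ is strictly increasing (so $a'>0$ on the relevant range) and $b'/a'$ is strictly increasing (Condition~\ref{item:thCond:second}), the map $t\mapsto \score^\star(t)$ is \emph{strictly decreasing} on $[0,\Lmax]$. Moreover, strict monotonicity of $b'/a'$ forbids $\phi_t$ from being affine on an interval, so whenever $\score^\star(t)=\pm\infty$ the function $\phi_t$ is strictly monotone; together with continuity this yields the variational fact: if $t_n\to\bar t$ in $[0,\Lmax]$ and $\phi_{t_n}(f_n)-\inf_f\phi_{t_n}(f)\to 0$, then $f_n\to\score^\star(\bar t)$ in $[-\infty,+\infty]$.

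Now I would apply Lemma~\ref{th:breakingSymmetries} with $\scoresubset=\R^{\outputvarcard}$, so that $\predictor(\scoresubset)=\outputdomain$ and $\calibrationfunc_{\surrogateloss_{a,b},\lossmatrix,\R^{\outputvarcard}}(\eps)=\min_{i\ne j}\calibrationfunc_{ij}(\eps)$ over the finite set of ordered pairs of labels; it therefore suffices to prove $\calibrationfunc_{ij}(\eps)>0$ for each fixed $i\ne j$ and $\eps>0$. Suppose not, and take a feasible sequence $(\scorev^{(n)},\qv^{(n)})$ for~\eqref{eq:calibrationfunc:generalBreakingSymmetries} with $\excess\surrogateweighted(\scorev^{(n)},\qv^{(n)})\to 0$. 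Using compactness of $\simplex_\outputvarcard$, pass to a subsequence with $\qv^{(n)}\to\bar\qv$; the constraints pass to the limit, so $\lossweighted_i(\bar\qv)\le\lossweighted_j(\bar\qv)-\eps$, hence $\lossweighted_i(\bar\qv)<\lossweighted_j(\bar\qv)$ with $\lossweighted_j(\bar\qv)\ge\eps>0$. Since the per-coordinate excesses are nonnegative and sum to a quantity tending to $0$, the $i$-th and $j$-th ones tend to $0$; as $\lossweighted_i(\qv^{(n)})\to\lossweighted_i(\bar\qv)$ and $\lossweighted_j(\qv^{(n)})\to\lossweighted_j(\bar\qv)$ by continuity of $\qv\mapsto\lossmatrix\qv$, the variational fact gives $\score_i^{(n)}\to\score^\star(\lossweighted_i(\bar\qv))$ and $\score_j^{(n)}\to\score^\star(\lossweighted_j(\bar\qv))$ in $[-\infty,+\infty]$. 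Strict monotonicity of $\score^\star$ yields $\score^\star(\lossweighted_i(\bar\qv))>\score^\star(\lossweighted_j(\bar\qv))$, the latter finite. But feasibility forces $\score_j^{(n)}\ge\score_i^{(n)}$ ($j$ attains the maximal score), and letting $n\to\infty$ gives $\score^\star(\lossweighted_j(\bar\qv))\ge\score^\star(\lossweighted_i(\bar\qv))$, a contradiction. Hence $\calibrationfunc_{ij}(\eps)>0$ for all $i\ne j$ and all $\eps>0$, so the minimum $\calibrationfunc_{\surrogateloss_{a,b},\lossmatrix,\R^{\outputvarcard}}(\eps)$ is strictly positive; exhibiting a feasible point with finite scores shows it is finite for some $\hat\eps>0$, so $\surrogateloss_{a,b}$ is consistent (Definition~\ref{def:consistency} with $\consbreakpoint=0$).

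I expect the genuine obstacle to be the second step: showing that vanishing per-coordinate surrogate excess really does force the corresponding score to converge to the a priori possibly infinite minimizer $\score^\star(\lossweighted_c(\bar\qv))$, and in particular ruling out the degenerate scenario in which all per-coordinate minimizers run off to $-\infty$. This is precisely where Conditions~\ref{item:thCond:bounded} and~\ref{item:thCond:second} must be combined --- boundedness below and uniqueness to pin down $\score^\star$, and strict monotonicity of $a$ and of $b'/a'$ to make each $\phi_t$ strictly monotone and $\score^\star$ strictly decreasing --- together with the key consequence of Lemma~\ref{th:breakingSymmetries} that the arg-max-score label $j$ has a strictly worse expected loss than the best label $i$. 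The remaining ingredients (the decoupling and the reduction to finitely many pairs) are routine given the machinery already developed in the paper.
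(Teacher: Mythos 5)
Your proposal follows essentially the same route as the paper's own proof: reduce to ordered pairs $(i,j)$ via Lemma~\ref{th:breakingSymmetries}, exploit the coordinate-wise decoupling and the first-order condition $\frac{b'(\score_c^*)}{a'(\score_c^*)} = -\lossweighted_c(\qv)$ together with Condition~3 to conclude that the optimal scores reverse the order of the expected losses, and derive a contradiction with the feasibility constraint that label $j$ carries the maximal score while $\lossweighted_i(\qv) \leq \lossweighted_j(\qv) - \eps$. The only real difference is bookkeeping: where you work with minimizing sequences and an asserted ``variational fact,'' the paper uses continuity of $\inf_{\scorev \in \overline{\scoresubset}_j}\excess\surrogateweighted(\scorev,\qv)$ (Lemma~27 of Zhang) plus compactness of $\simplex_{\outputvarcard,i,j,\eps}$ to attain the minimum at some $\qv^*$ and then invokes uniqueness of the global minimizer of $\surrogateweighted(\cdot,\qv^*)$; the obstacle you flag at the end (coordinate-wise minimizers escaping to $\pm\infty$) is likewise elided in the paper's proof, which simply sets the derivatives to zero.
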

\begin{proof}
    Consider an arbitrary conditional probability vector~$\qv \in \simplex_k$.
    Assumption~\ref{item:thCond:bounded} then implies that the global minimizer~$\scorev^*$ of the conditional surrogate risk~$\surrogateweighted(\scorev, \qv)$ w.r.t.\ $\scorev$ is unique.
    Assumption~\ref{item:thCond:smooth} allows us to set the derivatives to zero and obtain~$\frac{b'(f_c^*)}{a'(f_c^*)} = - \lossweighted_c(\qv)$ where $\lossweighted_c(\qv) := (\lossmatrix \qv)_c$.
    Assumption~\ref{item:thCond:second} then implies that~$f_j^* \geq f_i^*$ holds if and only if $\lossweighted_j(\qv) \leq \lossweighted_i(\qv)$.
    
    Now, we will prove by contradiction that $\calibrationfunc(\eps) := \calibrationfunc_{\surrogateloss_{a,b},\lossmatrix,\R^\outputvarcard}(\eps) > 0$ for any $\eps > 0$.
    Assume that for some~$\eps > 0$ we have $\calibrationfunc(\eps) = 0$.
    Lemma~\ref{th:breakingSymmetries} then implies that for some $i, j \in \outputdomain$, $i \neq j$, we have $\calibrationfunc_{ij}(\eps) = 0$.
    Note that the domain of~\eqref{eq:calibrationfunc:generalBreakingSymmetries} defining $\calibrationfunc_{ij}$ is separable w.r.t.\ $\qv$ and $\scorev$.
    We can now rewrite~\eqref{eq:calibrationfunc:generalBreakingSymmetries} as
    \[
    \calibrationfunc_{ij}(\eps)
    =\!\!\!
    \inf_{\qv \in \simplex_{\outputvarcard,i,j,\eps}}\!\!\! \excess\surrogateweighted^*(\qv), \;
    \text{where}\; \excess\surrogateweighted^*(\qv) := \!\inf_{\scorev \in \overline{\scoresubset}_j}\! \excess\surrogateweighted(\scorev,\qv),
    \]
    where $\simplex_{\outputvarcard,i,j,\eps}$ and $\overline{\scoresubset}_j$ are defined in the proof of Lemma~\ref{th:breakingSymmetries}.
    Lemma~27 of~\citep{zhang04} implies that the function~$\excess\surrogateweighted^*(\qv)$ is a continuous function of~$\qv$.
    Given that~$\simplex_{\outputvarcard,i,j,\eps}$ is a compact set, the infimum is achieved at some point~$\qv^* \in \simplex_{\outputvarcard,i,j,\eps}$.
    For this~$\qv^*$, the global minimum w.r.t.\ $\scorev$ exists (Assumption~\ref{item:thCond:bounded}).
    The uniqueness of the global minimum implies that we have~$\score_{j}^* = \max_{c \in \outputdomain} \score_c^*$.
    The argument at the beginning of this proof then implies $\lossweighted_j(\qv^*) \leq \lossweighted_i(\qv^*)$ which contradicts the inequality $\lossweighted_i(\qv^*) \leq \lossweighted_j(\qv^*) - \eps$ in the definition of~$\simplex_{\outputvarcard,i,j,\eps}$.
\end{proof}

Note that Theorem~\ref{th:abConsist} actually proves that the surrogate~$\surrogateloss_{a,b}$ is order-preserving~\citep{zhang04}, which is a stronger property than consistency.

Below, we give several examples of possible functions $a(\score)$, $b(\score)$ that satisfy the conditions in Theorem~\ref{th:abConsist} and their corresponding~$\score^*(\lossweighted):=\argmin_{\scorev \in \R^\outputvarcard}  \surrogateweighted(\scorev, \qv)$ when $\lossweighted := \lossmatrix \qv$:
\begin{enumerate} %
    \item If $a(\score) = \score$, $b(\score) = \frac{\score^2}{2}$ then~$\score^*(\lossweighted) = -\lossweighted$, leading to our quadratic surrogate~\eqref{eq:quadrLoss}.
    \item If $a(\score) = \frac{1}{\Lmax}(\exp(\score) - \exp(-\score))$, $b(\score) = \exp(-\score)$ then~$\score^*(\lossweighted) = \frac12\log(1-\frac{1}{\Lmax}\lossweighted) - \frac12\log(\frac{1}{\Lmax}\lossweighted)$.
    \item If $a(\score) = \frac{1}{\Lmax}\score$, $b(\score) = \log(1 + \exp(-\score))$ then~$\score^*(\lossweighted) = \log(1-\frac{1}{\Lmax}\lossweighted) - \log(\frac{1}{\Lmax}\lossweighted)$.
\end{enumerate}

In the case of binary classification, these surrogates reduce to $\ltwonorm$-, exponential, and logistic losses, respectively.

\section{Bounds on the calibration function}
\label{sec:boundscalibrationFunction}

\subsection{Lower bound}
\label{sec:boundscalibrationFunction:lower}
\begin{reptheorem}{th:lowerBoundcalibrationFunction}[Lower bound on $\calibrationfunc_{\surrogatelossquad}$]
    \label{th:rep:lowerBoundcalibrationFunction}
     For any task loss~$\lossmatrix$, its quadratic surrogate~$\surrogatelossquad$, and a score subspace~$\scoresubset$ containing the column space of $\lossmatrix$, the calibration function can be lower bounded:
     \begin{equation*}
     \calibrationfunc_{\surrogatelossquad,\lossmatrix,\scoresubset}(\eps)
     \geq
     \frac{\eps^2}{2\outputvarcard \max_{i\neq j}\|\proj_{\scoresubset} \Delta_{ij}\|_2^2}
     \geq 
     \frac{\eps^2}{4\outputvarcard},
     \end{equation*}
     where $\proj_{\scoresubset}$ is the orthogonal projection on the subspace~$\scoresubset$ and $\Delta_{ij} = \unit_i - \unit_j \in \R^k$ with $\unit_c$ being the $c$-th basis vector of the standard basis in~$\R^\outputvarcard$.
\end{reptheorem}
\begin{proof}
    First, let us assume that the score subspace $\scoresubset$ is defined as the column space of a matrix~$\scorematrix \in \R^{\outputvarcard \times \scoresubspacedim}$, i.e., $\scorev(\scoreparamv)=\scorematrix\scoreparamv$.
    Lemma~\ref{th:quadSurrogateExcess} gives us expression~\eqref{eq:quadSurrogateExcessSimple} for~$\excess\surrogateweightedquad(\scorematrix\scoreparamv, \qv)$, which is jointly convex w.r.t.\ a conditional probability vector~$\qv$ and parameters~$\scoreparamv$.
    
    The optimization problem~\eqref{eq:calibrationfunc}-\eqref{eq:calibrationfunc:epsConstr} is non-convex because the constraint~\eqref{eq:calibrationfunc:epsConstr} on the excess risk depends of the predictor function~$\predictor(\scorev)$, see Eq.~\eqref{eq:predictor}, containing the~$\argmax$ operation.
    However, if we constrain the predictor to output label $j$, i.e., $\score_j \geq \score_c$, $\forall c$, and the label delivering the smallest possible expected loss to be $i$, i.e., $(\lossmatrix \qv)_i \leq (\lossmatrix \qv)_c$, $\forall c$, the problem becomes convex because all the constraints are linear and the objective is convex.
     Lemma~\ref{th:breakingSymmetries} in App.~\ref{sec:techLemmas} allows to bound the calibration function with the minimization w.r.t.\ selected labels $i$ and $j$, $\calibrationfunc_{\surrogatelossquad,\lossmatrix,\scoresubset}(\eps) \geq \min\limits_{i \neq j} \calibrationfunc_{ij}(\eps)$,\footnote{To simplify the statement of Theorem~\ref{th:lowerBoundcalibrationFunction}, we removed the constraints~$i,j \in \predictor(\scoresubset)$ from Lemma~\ref{th:breakingSymmetries} which said that we should consider only the labels that can be predicted with some feasible scores. A potentially tighter lower bound can be obtained by keeping the~$i,j \in \predictor(\scoresubset)$ constraint.} where~$\calibrationfunc_{ij}(\eps)$ is defined as follows:
    \begin{align}
    \label{eq:calibrationfunc:breakingSymmetries}
    \calibrationfunc_{ij}(\eps)
    =
    \min_{\scoreparamv, \qv} \;& \frac{1}{2 \outputvarcard}\|\scorematrix \scoreparamv + \lossmatrix \qv\|_2^2, \\
    \notag %
    \text{s.t.} \;& (\lossmatrix \qv)_i \leq (\lossmatrix \qv)_j - \eps,\\
    \notag %
    &(\lossmatrix \qv)_i \leq (\lossmatrix \qv)_c,\;\; \forall c \in \predictor(\scoresubset) \\
    \notag %
    &(\scorematrix \scoreparamv)_j \geq (\scorematrix \scoreparamv)_c,\; \forall c \in \predictor(\scoresubset) \\
    \notag %
    & \qv \in \simplex_\outputvarcard.
    \end{align}

    To obtain a lower bound, we relax~\eqref{eq:calibrationfunc:breakingSymmetries} by removing some of the constraints and arrive at
    \begin{align}
    \label{eq:calibrationfunc:breakingSymmetries:stripped}
    \calibrationfunc_{ij}(\eps)
    \geq
    \min_{\scoreparamv, \qv} \;& \frac{1}{2 \outputvarcard}\|\scorematrix \scoreparamv + \lossmatrix \qv\|_2^2, \\
    \label{eq:calibrationfunc:breakingSymmetries:stripped:epsConstr}
    \text{s.t.} \;& \Delta_{ij}^\transpose \lossmatrix \qv \leq - \eps,\\
    \label{eq:calibrationfunc:breakingSymmetries:fConstr}
    &\Delta_{ij}^\transpose \scorematrix \scoreparamv \leq 0,
    \end{align}
    where 
    $\Delta_{ij}^\transpose \lossmatrix \qv = (\lossmatrix \qv)_i - (\lossmatrix \qv)_j$,
    $\Delta_{ij}^\transpose \scorematrix \scoreparamv = (\scorematrix \scoreparamv)_i - (\scorematrix \scoreparamv)_j$, 
    and $\Delta_{ij} = \unit_i - \unit_j \in \R^k$ with $\unit_c \in \R^k$ being a vector of all zeros with $1$ at position~$c$.

    The constraint~\eqref{eq:calibrationfunc:breakingSymmetries:stripped:epsConstr} can be readily substituted with equality
    \begin{equation}
    \label{eq:calibrationfunc:breakingSymmetries:stripped:epsConstr:equality}
    \Delta_{ij}^\transpose \lossmatrix \qv = - \eps,
    \end{equation}
    without changing the minimum because multiplication of both~$\qv$ and~$\scoreparamv$ by the constant $\frac{-\eps}{\Delta_{ij}^\transpose \lossmatrix \qv} \in (0,1]$ preserves feasibility and can only decrease the objective~\eqref{eq:calibrationfunc:breakingSymmetries:stripped}.
    
    We now explicitly solve the resulting constraint optimization problem via the KKT optimality conditions.
    The stationarity constraints give us
    \begin{align}
    \label{eq:calibrationfunc:proof:KKT:dtheta}
    \frac{1}{\outputvarcard} \scorematrix^\transpose (\scorematrix \scoreparamv + \lossmatrix \qv) + \mu \scorematrix^\transpose \Delta_{ij} = 0,\\
    \label{eq:calibrationfunc:proof:KKT:dq}
    \frac{1}{\outputvarcard} \lossmatrix^\transpose (\scorematrix \scoreparamv + \lossmatrix \qv) + \nu \lossmatrix^\transpose \Delta_{ij} = 0;
    \end{align}
    the complementary slackness gives $\mu \Delta_{ij}^\transpose \scorematrix \scoreparamv = 0$ and the feasibility constraints give~\eqref{eq:calibrationfunc:breakingSymmetries:stripped:epsConstr:equality}, \eqref{eq:calibrationfunc:breakingSymmetries:fConstr}, and $\mu \geq 0$.
    
    Equation~\eqref{eq:calibrationfunc:proof:KKT:dtheta} allows to compute
    \begin{equation}
    \label{eq:calibrationfunc:proof:theta}
    \scoreparamv = -(\scorematrix^\transpose \scorematrix)^\pinv(\outputvarcard \mu \scorematrix^\transpose \Delta_{ij} + \scorematrix^\transpose \lossmatrix \qv).
    \end{equation}
    By substituting \eqref{eq:calibrationfunc:proof:theta} into~\eqref{eq:calibrationfunc:proof:KKT:dq} and by using the identity (because $\lossmatrix \in \colspace(\scorematrix)$):
    \begin{equation}
    \label{eq:ProjFL}
    \proj_{\scorematrix} \lossmatrix = \scorematrix (\scorematrix^\transpose \scorematrix)^\pinv \scorematrix^\transpose \lossmatrix = \lossmatrix,
    \end{equation}
    we get
    $
    (\mu -\nu) \lossmatrix^\transpose \Delta_{ij} = 0.
    $
    If $\lossmatrix^\transpose \Delta_{ij} = 0$, the problem~\eqref{eq:calibrationfunc:breakingSymmetries:stripped}, \eqref{eq:calibrationfunc:breakingSymmetries:fConstr}, \eqref{eq:calibrationfunc:breakingSymmetries:stripped:epsConstr:equality} is infeasible for $\eps > 0$ implying $\calibrationfunc_{ij}(\eps) = +\infty$. Otherwise, we have $\mu = \nu$.
    
    From~\eqref{eq:calibrationfunc:proof:theta} and~\eqref{eq:ProjFL}, we also have that:
    \begin{equation}
    \label{eq:LowerBoundObjectiveVector}
    \scorematrix \scoreparamv + \lossmatrix \qv = -\outputvarcard \mu \proj_\scorematrix \Delta_{ij} .
    \end{equation}
    
    By plugging~\eqref{eq:calibrationfunc:proof:theta} into the complementary slackness condition and combining with~\eqref{eq:calibrationfunc:breakingSymmetries:stripped:epsConstr:equality}, we get
    \[
    \mu^2 \outputvarcard \| \proj_\scorematrix \Delta_{ij} \|_2^2 = \mu \eps
    \]
    implying that either $\mu = 0$ or $\mu \outputvarcard \| \proj_\scorematrix \Delta_{ij} \|_2^2 = \eps$.
    In the first case, Eq.~\eqref{eq:LowerBoundObjectiveVector} implies $\scorematrix \scoreparamv = -\lossmatrix \qv$ making satisfying both \eqref{eq:calibrationfunc:breakingSymmetries:stripped:epsConstr:equality} and~\eqref{eq:calibrationfunc:breakingSymmetries:fConstr} impossible. Thus, the later is satisfied implying that the objective~\eqref{eq:calibrationfunc:breakingSymmetries:stripped} is equal to\footnote{The possibility $\proj_\scorematrix \Delta_{ij}=0$ is also covered by this equation with the convention that $1/0=\infty$ (in this case, $\mu^*=\infty$).}
    \[
    \frac{1}{2 \outputvarcard}\|\scorematrix \scoreparamv + \lossmatrix \qv\|_2^2
    =
    \frac{\eps^2}{2 \outputvarcard \| \proj_\scorematrix \Delta_{ij} \|_2^2}.
    \]
    Finally, orthogonal projections contract the $\ltwonorm$-norm, thus $\|\proj_\scorematrix \Delta_{ij}\|_2^2 \leq 2$, which gives the second lower bound in the statement of the theorem and finishes the proof.
\end{proof}
 
\subsection{Upper bound}
\label{sec:boundscalibrationFunction:upper}
\begin{reptheorem}{th:upperBoundcalibrationFunction}[Upper bound on $\calibrationfunc_{\surrogatelossquad}$]
    \label{th:rep:upperBoundcalibrationFunction}
    If a loss matrix~$\lossmatrix$ with $\Lmax > 0$ defines a pseudometric\footnoteref{footnote:pseudometric} on labels and there are no constraints on the scores, i.e., $\scoresubset = \R^{\outputvarcard}$, then the calibration function for the quadratic surrogate~$\surrogatelossquad$ can be upper bounded:
    \begin{equation*}
    \calibrationfunc_{\surrogatelossquad,\lossmatrix,\scoresubset}(\eps)
    \leq  \frac{\eps^2}{2\outputvarcard}, \quad 0 \leq \eps \leq \Lmax.
    \end{equation*}
\end{reptheorem}
\begin{proof}
    After applying Lemmas~\ref{th:quadSurrogateExcess} and~\ref{th:breakingSymmetries}, we arrive at
    \begin{align}
    \label{eq:upperBoundcalibrationFunction:breakingSymmetries}
    \calibrationfunc_{ij}(\eps)
    =
    \inf_{\scorev, \qv} \;& \frac{1}{2\outputvarcard} \| \scorev + \lossmatrix\qv \|^2, \\
    \notag %
    \text{s.t.} \;& \lossweighted_i(\qv) \leq\lossweighted_j(\qv) - \eps,\\
    \notag %
    &\lossweighted_i(\qv) \leq \lossweighted_c(\qv),\;\; \forall c \in \outputdomain, \\
    \notag %
    &\score_j \geq \score_c,\;\:\: \forall c \in \outputdomain, \\
    \notag %
    & \scorev \in \R^\outputvarcard, \quad \qv \in \simplex_\outputvarcard.
    \end{align}
    We now consider labels $i$ and $j$ such that $\lossmatrix_{ij}=\Lmax > 0$ and the point $\q_i = \frac12 + \frac{\eps}{2\lossmatrix_{ij}}$, $\q_j = \frac12 - \frac{\eps}{2\lossmatrix_{ij}}$ (non-negative for $\eps \leq \Lmax$). We let $\q_c = 0$ for $c \not\in \{i,j\}$, $\score_j = \score_i = -\lossweighted_i(\qv)$ and $\score_c = -\lossweighted_c(\qv)$ for $c \not\in \{i,j\}$. We now show that this assignment is feasible. 
    
    We have $\lossweighted_j(\qv) = \q_i \lossmatrix_{ji} + \q_j \lossmatrix_{jj}=\q_i \lossmatrix_{ji} = \q_i \lossmatrix_{ij}$ by symmetry of $\lossmatrix$. Similarly, $\lossweighted_i(\qv) = \q_j \lossmatrix_{ij}$ and thus
    \[
    \lossweighted_j(\qv) - \lossweighted_i(\qv) = \lossmatrix_{ij} \frac{\eps}{\lossmatrix_{ij}} = \eps.
    \]
    We also have 
    \begin{equation*}
    \lossweighted_c(\qv) - \lossweighted_i(\qv) = \q_i \lossmatrix_{ci} + \q_j \lossmatrix_{cj} - \q_j \lossmatrix_{ij} \geq  \q_j (\lossmatrix_{ic} + \lossmatrix_{cj} - \lossmatrix_{ij}) \geq 0.
    \end{equation*}
    The first inequality uses  $\q_i \geq \q_j$ and the second inequality uses the fact that $\lossmatrix$ satisfies the triangle inequality (as a pseudometric).
    Finally, $\score_j - \score_c = - \lossweighted_i(\qv) + \lossweighted_c(\qv) \geq 0$.
    
    We thus have shown that the defined point is feasible, so we compute its objective value.
    We have
    \[
    \frac{1}{2\outputvarcard} \| \scorev + \lossmatrix\qv \|^2 = \frac{1}{2\outputvarcard}(\lossweighted_j(\qv) - \lossweighted_i(\qv))^2 = \frac{\eps^2}{2\outputvarcard},
    \]
    which completes the proof.
\end{proof}

\subsection{Computation of the lower bounds for specific task losses}
\label{sec:bounds:interpretations}
\textbf{0-1 loss.}
Let $\lossmatrix_{01}$ denote the loss matrix of the 0-1 loss, i.e., $\lossmatrix_{01}(i, j) := [i \neq j]$.\footnoteref{footnote:Iverson}
It is convenient to rewrite it with a matrix notation $\lossmatrix_{01} = \one_\outputvarcard \one_\outputvarcard^\transpose - \id_\outputvarcard$, where $\one_\outputvarcard \in \R^\outputvarcard$ is the vector of all ones and $\id_\outputvarcard \in \R^{\outputvarcard\times\outputvarcard}$ is the identity matrix.
We have $\rank(\lossmatrix_{01}) = \outputvarcard$ (for $\outputvarcard \geq 2$), thus $\colspace(\lossmatrix) = \R^\outputvarcard$.
By putting no constraints on the scores, we can easily apply Theorem~\ref{th:lowerBoundcalibrationFunction} and obtain the lower bound of~$\frac{\eps^2}{4\outputvarcard}$, which is shown to be tight in Proposition~\ref{th:calibrationFunction:01loss} of Section~\ref{sec:exactTranferProofs:01loss}.

\textbf{Block 0-1 loss.}
We use the symbol $\lossmatrix_{01,\numblocks}$ to denote the loss matrix of the block 0-1 loss with $\numblocks$~blocks, i.e., $\lossmatrix_{01,\numblocks}(i, j) := [\text{$i$ and $j$ are not in the same block}]$.
We use $\blocksize_v$ to denote the size of block~$v$, $v=1,...,\numblocks$, and then $\blocksize_1 +\dots + \blocksize_\numblocks = \outputvarcard$.
In the case when all the blocks are of equal sizes, we denote their size by~$\blocksize$ and have~$\outputvarcard = \numblocks \blocksize$.

With a matrix notation, we have $\lossmatrix_{01,\numblocks} = \one_\outputvarcard \one_\outputvarcard^\transpose - UU^\transpose$ where the columns of the matrix $U\in \R^{\outputvarcard \times \numblocks}$ are indicators of the blocks.
We have $\rank(\lossmatrix_{01,\numblocks}) = \numblocks$ and can simply define $\scoresubset_{01,\numblocks} := \colspace(\scorematrix_{01,\numblocks})$ with  $\scorematrix_{01,\numblocks} := U$.
If we assume that all the blocks have equal size, then we have $U^\transpose U = \blocksize \id_\numblocks$ and $\|\proj_{\scoresubset_{01,\numblocks}}\Delta_{ij}\|_2^2=\frac{2}{\blocksize}$ if labels~$i$ and~$j$ belong to different blocks, while $\proj_{\scoresubset_{01,\numblocks}}\Delta_{ij}=0$ if $i$ and~$j$ belong to the same block.  This leads to the lower bound~$\frac{\eps^2}{4\numblocks}$, which is shown to be tight in Proposition~\ref{th:calibrationFunction:block01loss:hardConstr} of Section~\ref{sec:exactTranferProofs:block01loss}.

\textbf{Hamming loss.}
Consider the (normalized) Hamming loss between tuples of $\hamminglen$~binary variables, where~$\hat{\outputvar}_\hammingindex$ and~$\outputvar_\hammingindex$ are the $\hammingindex$-th variables of a prediction~$\hat{\outputvarv}$ and a correct label~$\outputvarv$, respectively:
\begin{align}
\label{eq:hammingLoss}
\lossmatrix_{\hamming,\hamminglen}(\hat{\outputvarv}, \outputvarv)
&:=
\frac{1}{\hamminglen}\sum\nolimits_{\hammingindex=1}^\hamminglen [\hat{\outputvar}_\hammingindex \neq \outputvar_\hammingindex]
\\
\notag &=
\frac{1}{\hamminglen}\sum\nolimits_{\hammingindex=1}^\hamminglen ([\hat{\outputvar}_\hammingindex = 0] [\outputvar_\hammingindex = 1] + [\hat{\outputvar}_\hammingindex = 1] [\outputvar_\hammingindex = 0])
\\
\notag &=
\frac{1}{\hamminglen}\sum\nolimits_{\hammingindex=1}^\hamminglen (1-[\hat{\outputvar}_\hammingindex = 1]) [\outputvar_\hammingindex = 1] + [\hat{\outputvar}_\hammingindex = 1] [\outputvar_\hammingindex = 0])
\\
\notag &=
\frac{1}{\hamminglen}\sum\nolimits_{\hammingindex=1}^\hamminglen  [\outputvar_\hammingindex = 1] +  
\frac{1}{\hamminglen}\sum\nolimits_{\hammingindex=1}^\hamminglen   \left([\outputvar_\hammingindex = 0]-[\outputvar_\hammingindex = 1]\right) \,\, [\hat{\outputvar}_\hammingindex = 1]
\\
\notag &=
\alpha_0(\outputvarv) + \sum\nolimits_{\hammingindex=1}^\hamminglen \alpha_\hammingindex(\outputvarv) [\hat{\outputvar}_\hammingindex = 1],
\end{align}
The vectors $\alphav_\hammingindex(\cdot)$ depend only on the column index of the loss matrix.
The decomposition~\eqref{eq:hammingLoss} implies that 
$
\scoresubset_{\hamming,\hamminglen}
:=
\colspace(\scorematrix_{\hamming,\hamminglen})
$
equals to
$
\colspace(\lossmatrix_{\hamming,\hamminglen})
$
for
$
\scorematrix_{\hamming,\hamminglen} := [ \frac12 \one_{2^\hamminglen}, \hv^{(1)}, \dots, \hv^{(\hamminglen)} ],
$
$
(\hv^{(\hammingindex)})_{\hat{\outputvarv}} := [\hat{\outputvar}_\hammingindex = 1],
$
$\hammingindex = 1,\dots,\hamminglen$.
We also have that $\rank(\lossmatrix_{\hamming,\hamminglen}) = \rank(\scorematrix_{\hamming,\hamminglen}) = \hamminglen + 1$.

In Section~\ref{sec:hammingLossProps}, we show that $\max_{i\neq j}\|\proj_{\scoresubset_{\hamming,\hamminglen}} \Delta_{ij}\|_2^2 = \frac{4\hamminglen}{2^\hamminglen}$.
By plugging this identity into the lower bound~$\eqref{eq:lowerBoundcalibrationFunction}$, we get $\calibrationfunc_{\surrogatelossquad,\lossmatrix_{\hamming,\hamminglen},\scoresubset_{\hamming,\hamminglen}} \geq \frac{\epsilon^2}{8\hamminglen}$, which appears to be tight according to  Proposition~\ref{th:calibrationFunction:hammingLoss:hardConstr} of Section~\ref{sec:exactTranferProofs:hamming}.

\textbf{Non-tight cases.}
In the cases of the block 0-1 loss and the mixed 0-1 and block 0-1 loss   (Propositions~\ref{th:calibrationFunction:block01loss} and~\ref{th:calibrationFunction:mixedLoss}, respectively), we observe gaps between the lower bound~$\eqref{eq:lowerBoundcalibrationFunction}$ and the exact calibration functions, which shows the limitations of the bound.
In particular, it cannot detect level-$\consbreakpoint$ consistency for $\consbreakpoint > 0$ (see Definition~\ref{def:consistency}) and does not change when the loss changes, but the score subspace stays the same.

\section{Exact calibration functions for quadratic surrogate}
\label{sec:exactTranferProofs}
This section presents our derivations for the exact values of the calibration functions for different losses.
While doing these derivations, we have used numerical simulations and symbolic derivations to check for correctness.
Our numerical and symbolic tools are available online.\footnote{\url{https://github.com/aosokin/consistentSurrogates_derivations}}

\subsection{0-1 loss}
\label{sec:exactTranferProofs:01loss}
\begin{proposition}
    \label{th:calibrationFunction:01loss}
    Let $\lossmatrix_{01}$ be the 0-1 loss, i.e., $\lossmatrix_{01}(i, j) = [i \neq j]$.
    Then, the calibration function equals the following quadratic function w.r.t.\ $\eps$:
    \begin{equation}
    \label{eq:calibrationFunction:01loss}
    \calibrationfunc_{\surrogatelossquad,\lossmatrix_{01},\R^k}(\eps) = \frac{\eps^2}{4\outputvarcard},
    \quad 0 \leq \eps \leq 1.
    \end{equation}
\end{proposition}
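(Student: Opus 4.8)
The plan is to establish the exact value $\calibrationfunc_{\surrogatelossquad,\lossmatrix_{01},\R^k}(\eps) = \frac{\eps^2}{4k}$ by combining the general lower bound from Theorem~\ref{th:lowerBoundcalibrationFunction} with a matching upper bound coming from an explicit feasible point. Since $\lossmatrix_{01} = \one_k \one_k^\transpose - \id_k$ has $\colspace(\lossmatrix_{01}) = \R^k$ for $k \geq 2$, the hypothesis $\colspace(\lossmatrix) \subseteq \scoresubset$ is satisfied with $\scoresubset = \R^k$, and since $\proj_{\R^k}$ is the identity we have $\|\proj_{\scoresubset}\Delta_{ij}\|_2^2 = \|\Delta_{ij}\|_2^2 = 2$ for all $i \neq j$. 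Plugging this into~\eqref{eq:lowerBoundcalibrationFunction} gives directly $\calibrationfunc_{\surrogatelossquad,\lossmatrix_{01},\R^k}(\eps) \geq \frac{\eps^2}{4k}$, so only the reverse inequality remains.

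For the upper bound, the quickest route is to invoke Theorem~\ref{th:upperBoundcalibrationFunction}: the $0$-$1$ loss is a pseudometric (in fact a metric) on the labels with $\Lmax = 1 > 0$, so for $0 \leq \eps \leq 1$ we get $\calibrationfunc_{\surrogatelossquad,\lossmatrix_{01},\R^k}(\eps) \leq \frac{\eps^2}{2k}$ — but this is a factor of $2$ off from what we want, so it does not close the gap by itself. Instead I would exhibit the specific feasible point suggested in the main text: take any pair $i \neq j$, set $\q_i = \tfrac12 + \tfrac{\eps}{2}$, $\q_j = \tfrac12 - \tfrac{\eps}{2}$ (valid since $\eps \leq 1$), and $\q_c = 0$ otherwise. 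Following Lemmas~\ref{th:quadSurrogateExcess} and~\ref{th:breakingSymmetries}, we must choose a score vector $\scorev$ with $\predictor(\scorev) = j$ and with $i$ a loss-minimizing label, then evaluate $\frac{1}{2k}\|\scorev + \lossmatrix_{01}\qv\|_2^2$. The natural choice is $\scorev = -\lossmatrix_{01}\qv$ adjusted minimally to enforce the predictor/ordering constraints: here $\lossmatrix_{01}\qv$ has $i$-th entry $\q_j < \q_i$ which is its $j$-th entry, and all other entries equal to $1$, so $-\lossmatrix_{01}\qv$ already has its maximum at coordinate $i$, not $j$. One then sets $\score_i = \score_j = -\q_j$ (the common value $-\lossweighted_i(\qv)$) and $\score_c = -1$ for $c \notin\{i,j\}$, giving $\score_j \geq \score_c$ for all $c$ and keeping $i$ loss-optimal; the residual $\scorev + \lossmatrix_{01}\qv$ is then zero in every coordinate except coordinate $j$, where it equals $-\q_j + \q_i = \eps$, so the objective is $\frac{\eps^2}{2k}$.

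That computation still only yields $\frac{\eps^2}{2k}$, which means the main obstacle — and the real content of the proposition — is finding the genuinely optimal feasible point, which must do twice as well. The resolution is to split the "budget" $\eps$ symmetrically between the $i$ and $j$ coordinates rather than loading it all onto coordinate $j$: one perturbs both scores away from $-\lossmatrix_{01}\qv$ by $\tfrac{\eps}{2}$ in opposite directions, so that the residual vector has two nonzero entries each of magnitude $\tfrac{\eps}{2}$, giving $\frac{1}{2k}\cdot 2\cdot(\tfrac{\eps}{2})^2 = \frac{\eps^2}{4k}$. Concretely, I would set $\score_i = -\q_j - \tfrac{\eps}{2}$, $\score_j = -\q_j + \tfrac{\eps}{2}$ wait — one needs to recheck that $i$ stays loss-optimal (it does, as $\lossweighted_i = \q_j \le \q_i = \lossweighted_j$ and $\lossweighted_c = 1$ for $c\neq i,j$) and that $j$ is the argmax of the scores (it is, for $\eps$ small enough relative to the gap to the "$-1$" entries; for the full range $\eps\le 1$ one instead rescales the other coordinates, e.g. sets $\score_c$ sufficiently negative). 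Verifying that no feasible point can beat $\frac{\eps^2}{4k}$ is already guaranteed by the Theorem~\ref{th:lowerBoundcalibrationFunction} lower bound, so the two bounds meet and the proof is complete. The delicate bookkeeping — checking feasibility of the constraints in~\eqref{eq:calibrationfunc:generalBreakingSymmetries} across the whole range $0\le\eps\le 1$ and confirming the residual has exactly the claimed two-entry form — is the step I expect to require the most care, though each individual check is elementary.
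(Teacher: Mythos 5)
Your approach is essentially the paper's: the lower bound comes from Theorem~\ref{th:lowerBoundcalibrationFunction} with $\|\proj_{\R^\outputvarcard}\Delta_{ij}\|_2^2=2$, and the matching upper bound from the feasible point $\q_i=\tfrac12+\tfrac{\eps}{2}$, $\q_j=\tfrac12-\tfrac{\eps}{2}$, $\q_c=0$ and $\score_c=-1$ for $c\notin\{i,j\}$, with the two selected scores tied; the paper's proof additionally establishes that this point is \emph{optimal}, but its own footnote concedes that feasibility plus the Theorem~\ref{th:lowerBoundcalibrationFunction} bound already closes the argument, which is exactly the shortcut you take. One concrete slip: the assignment $\score_j=-\q_j+\tfrac{\eps}{2}$ you write down is wrong --- it gives $\score_j=-\tfrac12+\eps$, hence residual $\tfrac{3\eps}{2}$ at coordinate $j$ and objective $\tfrac{5\eps^2}{4\outputvarcard}$, which does not meet the lower bound. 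The symmetric split you describe in words corresponds to $\score_i=\score_j=-\tfrac12$, i.e.\ moving coordinate $i$ down from $-(\lossmatrix_{01}\qv)_i=-\q_j$ by $\tfrac{\eps}{2}$ and coordinate $j$ up from $-(\lossmatrix_{01}\qv)_j=-\q_i$ (not $-\q_j$) by $\tfrac{\eps}{2}$; this yields residuals $\mp\tfrac{\eps}{2}$ at $i,j$ and the claimed value $\tfrac{\eps^2}{4\outputvarcard}$. With $\score_i=\score_j$ the label $j$ is only a tied argmax, but this is exactly what the closure argument in Lemma~\ref{th:breakingSymmetries} licenses, and since $\score_c=-1<-\tfrac12$ the point is feasible on the whole range $0\le\eps\le1$ with no rescaling of the other coordinates needed.
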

Note that in the case of binary classification, the function~\eqref{eq:calibrationFunction:01loss} is equal to the calibration function for the least squares and truncated least squares surrogates~\citep{bartlett06convexity,steinwart07}.

\begin{proof}
First, Lemma~\ref{th:quadSurrogateExcess} with $\scoresubset = \R^{\outputvarcard}$ and $\scorematrix = \id_\outputvarcard$ gives us the expression
\begin{equation}
\label{eq:quadSurrogateExcessSimple:noConstrainst}
\excess\surrogateweightedquad(\scorematrix\scoreparamv, \qv) = \frac{1}{2\outputvarcard}\|\scorev + \lossmatrix \qv \|_2^2,
\end{equation}
with $\scorev = \scoreparamv \in \R^{\outputvarcard}$.

We now reduce the optimization problem~\eqref{eq:calibrationfunc}-\eqref{eq:calibrationfunc:epsConstr} to a convex one by using Lemma~\ref{th:breakingSymmetries} and by writing 
$
\calibrationfunc_{\surrogatelossquad,\lossmatrix_{01},\R^k}(\eps) = \min_{i \neq j \in \outputdomain}\calibrationfunc_{ij}(\eps)
$, which holds because $\predictor(\R^k) = \outputdomain$.
Because of the symmetries of the 0-1 loss, all the choices of~$i$ and~$j$ give the same (up to a permutation of labels) optimization problem to compute~$\calibrationfunc_{ij}(\eps)$.
The definition of the 0-1 loss implies $(\lossmatrix \qv)_c = 1-q_c$, which simplifies the excess of the expected task loss appearing in~\eqref{eq:calibrationfunc:epsConstr} to
$
\excess\lossweighted(\scorev, \qv) = (\lossmatrix \qv)_j - (\lossmatrix \qv)_i = \q_i - \q_j.
$
After putting all these together, we get
\begin{align}
\label{eq:calibrationfunc:01loss:breakingSymmetries}
\calibrationfunc_{ij}(\eps)
=
\min_{\scorev, \qv} \;& \frac{1}{2 \outputvarcard} \sum_{c=1}^{\outputvarcard} (\score_c + 1 - \q_c)^2, \\
\notag %
\text{s.t.} \;& \q_i \geq \q_j + \eps,\\
\notag %
&\q_i \geq \q_c,\;\; c = 1,\dots,\outputvarcard, \\
\notag %
&\score_j \geq \score_c,\; c = 1,\dots,\outputvarcard, \\
\notag %
& \sum_{c=1}^{\outputvarcard} \q_c = 1, \quad \q_c \geq 0.
\end{align}
We claim that there exists an optimal point of~\eqref{eq:calibrationfunc:01loss:breakingSymmetries}, $\scorev^*$, $\qv^*$, such that $\q^*_c=0$, $c\not\in\{i, j\}$, $\q^*_i = \frac12 + \frac{\eps}{2}$,  $\q^*_j = \frac12 - \frac{\eps}{2}$; $\score^*_c = -1$, $c\not\in\{i, j\}$, $\score^*:= \score^*_i = \score^*_j$.
Note that apart from the specific value of $\score^*$, this is the same point used to prove the upper bound of Theorem~\ref{th:upperBoundcalibrationFunction}.
After proving this, we will minimize the objective w.r.t.\ remaining scores at this point.\footnote{Note that just showing the feasibility of the assigned values~$\qv^*$ and~$\scorev^*$ give us an upper bound on the calibration function.
In the case of the 0-1 loss, it appears that this upper bound matches the lower bound provided by Theorem~\ref{th:lowerBoundcalibrationFunction}, so we do not need to prove optimality explicitly.
However, we still give this proof as a simple illustration of the proof technique as its structure will be re-used also for the cases when the bound of Theorem~\ref{th:lowerBoundcalibrationFunction} is not tight.}

First, if any $\q^*_c=\delta > 0$, $c\not\in\{i, j\}$, we can safely move this probability mass to $\q_i$ and $\q_j$ with the operation
\begin{align*}
&\q^*_c := \q^*_c -\delta = 0, \; &&\q^*_i := \q^*_i +\frac{\delta}{2}, \; &&\q^*_j := \q^*_j +\frac{\delta}{2},\\
&\score^*_c := \score^*_c -\delta, \; &&\score^*_i := \score^*_i +\frac{\delta}{2}, \; &&\score^*_j := \score^*_j +\frac{\delta}{2},
\end{align*}
which keeps all the constraints of~\eqref{eq:calibrationfunc:01loss:breakingSymmetries} feasible and does not change the objective value.

Second, all the scores~$\score^*_c$ have to belong to the segment~$[-1, 0]$ otherwise clipping them will decrease the objective.
With this, setting $\score^*_c := -1$, $c\not\in\{i, j\}$ can only decrease the objective and will not violate the constraints.

We now show that the equality $\q^*_i = \q^*_j + \eps$ can hold at the optimum.
Indeed, if $\q^*_i - \q^*_j = \delta' >  \eps$, the operation
\begin{align}
\label{eq:tfproofs:truncatingToEps}
\q_i^* &:= \q_i^* - \frac{\delta'-\eps}{2}, &
\q_{j}^* &:= \q_{j}^* + \frac{\delta'-\eps}{2}, \\
\notag\score_i^* &:= \score_i^* - \frac{\delta'-\eps}{2}, &
\score_j^* &:= \score_j^* + \frac{\delta'-\eps}{2}.
\end{align}
keeps the objective the same and maintains the feasibility constraints. So combining with $\q^*_i + \q^*_j = 1$, we can now conclude that $\q^*_i = \frac12 + \frac{\eps}{2}$,  $\q^*_j = \frac12 - \frac{\eps}{2}$ is an optimal point.

We now show that the equality $\score^*_i = \score^*_j$ can hold at the optimum. First, we know that the values $\score^*_i$ and $\score^*_j$ belong to the segment $[\q^*_j-1,\q^*_i-1]$, otherwise we can always truncate the values to the borders of the segment and get an improvement of the objective. Finally, since the inequality $\score^*_j \geq \score^*_i$ must hold, we conclude that $\score^*_i = \score^*_j := \score^*$ so that $\score^*_i$ is closest to its target $\q^*_i-1$ to minimize the objective. 

At the optimal point defined above, it remains to find the value~$\score^*$ delivering the minimum of the objective.
We can achieve this by computing 
\[
\calibrationfunc_{ij}(\eps) = \frac{1}{2\outputvarcard} \min_{\score \in [-1, 0]}  (f +\frac12 -\frac{\eps}{2})^2 + (f +\frac12 +\frac{\eps}{2})^2,
\]
which implies $f^* = -0.5$ and $\calibrationfunc_{\surrogatelossquad,\lossmatrix_{01},\R^k}(\eps) = \frac{\eps^2}{4\outputvarcard}$.
\end{proof}

\textbf{Remark.} We note that the conditional distribution used in the proof above, $\q_i = \frac12 + \frac{\eps}{2}$,  $\q_j = \frac12 - \frac{\eps}{2}$, $\q_c = 0$, $c\not\in\{i, j\}$, is somewhat unsatisfying from the perspective of explaining why learning the 0-1 loss might be difficult. Indeed, it looks like a gradient based learning algorithm that would start with all values $\score_c = -1$ would at the end only optimize over $\score_i$ and $\score_j$ as the gradient with respect to $\score_c$ for $c \notin \{i,j\}$ would stay at zero in~$\surrogatelossquad( \scorev, \outputvarv )$~\eqref{eq:quadrLoss} given that only $i$ or $j$ could appear in $\outputvarv$. 
From this observation, one could think that the calibration function perspective is misleading as SGD could have faster convergence rate than predicted by the worst case for this situation.
Fortunately, one can easily check that the point $\q_i = \frac13 + \frac{\eps}{2}$,  $\q_j = \frac13 - \frac{\eps}{2}$, $\q_c = \frac{1}{3(k-2)}$ for  $c\not\in\{i, j\}$, $\score_i = \score_j = \frac{1}{3}$ and $\score_c = -\lossweighted_c(\qv)$ for $c\not\in\{i, j\}$ is feasible for~\eqref{eq:calibrationfunc:01loss:breakingSymmetries} and yields the same optimal value of $\frac{\eps^2}{4\outputvarcard}$ for the objective, thus providing another example where the exponential multiclass nature is more readily apparent and cannot be fixed by some ``natural initialization'' of the learning algorithm.

\subsection{Block 0-1 loss}
\label{sec:exactTranferProofs:block01loss}
Recall that $\lossmatrix_{01,\numblocks}$ is the block 0-1 loss, i.e., $\lossmatrix_{01,\numblocks}(i, j) = [\text{$i$ and $j$ are not in the same block}]$.
We use $\numblocks$ to denote the total number of blocks and $\blocksize_v$ to denote the size of block~$v$, $v=1,...,\numblocks$.
In this section, we compute the calibration functions for the case of unconstrained scores (Proposition~\ref{th:calibrationFunction:block01loss}) and for the case of the scores belonging to the column span of the loss matrix (Proposition~\ref{th:calibrationFunction:block01loss:hardConstr}).
\begin{proposition}
    \label{th:calibrationFunction:block01loss}
    Without constraints on the scores, the calibration function for the block 0-1 loss equals the following quadratic function w.r.t.\ $\eps$:
    \[
    \calibrationfunc_{\surrogatelossquad,\lossmatrix_{01,\numblocks},\R^k}(\eps) = \frac{\eps^2}{4\outputvarcard} \min_{v=1,...,\numblocks}\frac{2 \blocksize_v}{\blocksize_v + 1} \leq \frac{\eps^2}{2\outputvarcard},
    \quad 0 \leq \eps \leq 1.
    \]
\end{proposition}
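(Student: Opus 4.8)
The plan is to mirror the proof of Proposition~\ref{th:calibrationFunction:01loss} for the $0$-$1$ loss, but carrying along the block sizes. By Lemma~\ref{th:quadSurrogateExcess} with $\scoresubset = \R^\outputvarcard$ and $\scorematrix = \id_\outputvarcard$ we have $\excess\surrogateweightedquad(\scorev,\qv) = \frac{1}{2\outputvarcard}\|\scorev + \lossmatrix_{01,\numblocks}\qv\|_2^2$, and by Lemma~\ref{th:breakingSymmetries} (using $\predictor(\R^\outputvarcard)=\outputdomain$) we get $\calibrationfunc_{\surrogatelossquad,\lossmatrix_{01,\numblocks},\R^\outputvarcard}(\eps) = \min_{i\neq j}\calibrationfunc_{ij}(\eps)$, where $\calibrationfunc_{ij}$ is the convex program~\eqref{eq:calibrationfunc:generalBreakingSymmetries}. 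Let $b(c)$ be the index of the block containing label~$c$ and $Q_v := \sum_{c:\,b(c)=v}\q_c$ the block mass; the key observation is that $(\lossmatrix_{01,\numblocks}\qv)_c = 1 - Q_{b(c)}$ depends on~$c$ only through its block. Hence $\calibrationfunc_{ij}(\eps) = +\infty$ whenever $b(i)=b(j)$, so fix $i,j$ with $v := b(i)\ne w := b(j)$; the constraints of~\eqref{eq:calibrationfunc:generalBreakingSymmetries} then read $Q_v \ge Q_w + \eps$, $Q_v \ge Q_u$ for all blocks~$u$, and $\score_j \ge \score_c$ for all~$c$, while the objective is $\frac{1}{2\outputvarcard}\sum_c(\score_c + 1 - Q_{b(c)})^2$.

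For the upper bound I would exhibit an explicit feasible point: $Q_v = \tfrac{1+\eps}{2}$, $Q_w = \tfrac{1-\eps}{2}$, $Q_u=0$ for the remaining blocks (the mass inside each block distributed arbitrarily, since only the block masses enter the problem); $\score_j = T^*$ with $T^* + 1 := \tfrac{\blocksize_v Q_v + Q_w}{\blocksize_v+1}$; and $\score_c = \min(Q_{b(c)}-1,\,T^*)$ for $c\ne j$. Feasibility follows from $\eps\le 1$ (non-negativity of $\qv$) and from $T^*+1$ being a strict convex combination of $Q_w$ and $Q_v$, hence $Q_w - 1 \le T^* < Q_v - 1$, which makes $\score_j$ the maximal score. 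A direct computation, using $Q_v - Q_w = \eps$, gives objective value $\tfrac{1}{2\outputvarcard}\big(\blocksize_v(T^*+1-Q_v)^2 + (T^*+1-Q_w)^2\big) = \tfrac{\blocksize_v\eps^2}{2\outputvarcard(\blocksize_v+1)}$.

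For the matching lower bound, take any feasible $(\scorev,\qv)$ and write $x := \score_j + 1$. Discarding the squared terms of all labels outside block~$v$ except that of~$j$, and using $\score_c\le\score_j$ for $c$ in block~$v$ (note $j\notin$ block~$v$), the objective is at least $\tfrac{1}{2\outputvarcard}\big(\blocksize_v\max(0,Q_v-x)^2 + (x-Q_w)^2\big)$. Minimizing this convex function of $x\in\R$ — over $x\le Q_v$ its minimizer is $x=\tfrac{\blocksize_v Q_v + Q_w}{\blocksize_v+1}\le Q_v$, and taking $x>Q_v$ only increases the expression — yields $\tfrac{1}{2\outputvarcard}\cdot\tfrac{\blocksize_v(Q_v-Q_w)^2}{\blocksize_v+1}\ge \tfrac{\blocksize_v\eps^2}{2\outputvarcard(\blocksize_v+1)}$ since $Q_v - Q_w\ge\eps$. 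Hence $\calibrationfunc_{ij}(\eps)=\tfrac{\blocksize_v\eps^2}{2\outputvarcard(\blocksize_v+1)}$, which depends only on $\blocksize_{b(i)}$; minimizing over $i\ne j$ (always possible, since $\numblocks\ge 2$) and using that $\blocksize\mapsto\tfrac{\blocksize}{\blocksize+1}$ is increasing gives $\calibrationfunc_{\surrogatelossquad,\lossmatrix_{01,\numblocks},\R^\outputvarcard}(\eps)=\tfrac{\eps^2}{4\outputvarcard}\min_v\tfrac{2\blocksize_v}{\blocksize_v+1}\le\tfrac{\eps^2}{2\outputvarcard}$, the last inequality because $\tfrac{\blocksize_v}{\blocksize_v+1}<1$.

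The main obstacle is the bookkeeping in the inner minimization over the scores. One must keep straight that $j$ lies in block~$w$ and not in block~$v$, so that the block of the optimal-prediction label~$i$ contributes $\blocksize_v$ identical squared residuals, all of which get clipped at $T=x-1$, whereas block~$w$ contributes $j$'s residual $(x-Q_w)^2$ plus $\blocksize_w-1$ residuals that can be made zero; and one must verify that the optimal~$T$ genuinely sits in the regime $Q_w-1\le T<Q_v-1$, so that exactly these residuals are active (the degenerate regimes $T\ge Q_v-1$ and $T< Q_w-1$ give strictly larger values, as one checks separately). It is precisely this asymmetry between the roles of $\blocksize_v$ and $\blocksize_w$ that produces the factor $\tfrac{2\blocksize_v}{\blocksize_v+1}$ rather than a symmetric expression.
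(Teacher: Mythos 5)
Your proof is correct and reaches the paper's exact value $\calibrationfunc_{ij}(\eps)=\frac{\blocksize_{\blockOfLabel{i}}\eps^2}{2\outputvarcard(\blocksize_{\blockOfLabel{i}}+1)}$ by the same overall reduction as the paper: Lemma~\ref{th:quadSurrogateExcess} for the form of $\excess\surrogateweightedquad$, Lemma~\ref{th:breakingSymmetries} for the decomposition into the convex programs $\calibrationfunc_{ij}$, the observation that $(\lossmatrix_{01,\numblocks}\qv)_c$ depends on $c$ only through $\Q_{\blockOfLabel{c}}$, and infeasibility when $\blockOfLabel{i}=\blockOfLabel{j}$. Where you diverge is in how you certify optimality of the inner problem. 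The paper argues constructively that an optimal point can be assumed to have all mass on two labels with $\q_i=\frac12+\frac{\eps}{2}$, $\q_j=\frac12-\frac{\eps}{2}$ and scores $-1$ outside the two selected blocks (via mass-transfer operations and truncation/clipping arguments of the form~\eqref{eq:tfproofs:truncatingToEps} and~\eqref{eq:calibrationfunc:block01loss:scoreInSelBlocks}), and then solves the resulting one-dimensional quadratic~\eqref{eq:block01loss:proofSquare:obj2}. You instead give a two-sided bound: an explicit feasible point for the upper bound, and for the lower bound you discard the nonnegative residuals of all labels except those in block~$\blockOfLabel{i}$ and label~$j$, use $\score_c\leq\score_j$ only on block~$\blockOfLabel{i}$, and minimize the resulting piecewise quadratic $\blocksize_v\max(0,\Q_v-x)^2+(x-\Q_w)^2$ over $x=\score_j+1\in\R$. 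This buys you a lower bound valid for \emph{every} feasible point without having to justify that the various mass-moving operations preserve optimality, which is the least rigorous part of the paper's argument; the price is that the relaxation must be checked to be tight, which your explicit feasible point does. Both routes correctly expose the asymmetry between $\blocksize_{\blockOfLabel{i}}$ and $\blocksize_{\blockOfLabel{j}}$ (the $\blocksize_w-1$ non-$j$ residuals in block~$\blockOfLabel{j}$ can be zeroed out, so only $\blocksize_v$ enters), and your final minimization over $i\neq j$ and the monotonicity of $\blocksize\mapsto\frac{\blocksize}{\blocksize+1}$ match the paper's conclusion.
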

Note that when $\blocksize_v = 1$ for some $v$, we have  $\calibrationfunc_{\surrogatelossquad,\lossmatrix_{01,\numblocks},\R^k}(\eps)$ matching to the $\frac{\eps^2}{4\outputvarcard}$ lower bound of Theorem~\ref{th:lowerBoundcalibrationFunction}.
When $\blocksize_v \to \infty$ for all blocks, we have $\calibrationfunc_{\surrogatelossquad,\lossmatrix_{01,\numblocks},\R^k}(\eps)$ matching to the $\frac{\eps^2}{2\outputvarcard}$ upper bound of Theorem~\ref{th:upperBoundcalibrationFunction}.

\begin{proof}
    This proof is of the same structure as the proof of Proposition~\ref{th:calibrationFunction:01loss} above.
    
    We use $\blockOfLabel{i} \in 1,\dots,\numblocks$ to denote the block to which label~$i$ belongs and $\labelsOfBlock{v}$ to denote the set of labels that belong to block~$v$.
    We also use $\Q_v$, $v\in 1,\dots, \numblocks$, as a shortcut to $\sum_{i \in \labelsOfBlock{v}} \q_i$, which is the total probability mass on block~$v$.
    
    We start by noting that the $i$-th component of the vector $(\lossmatrix_{01,\numblocks}) \qv$ equals $1-\Q_{\blockOfLabel{i}}$.
    By applying Lemmas~\ref{th:quadSurrogateExcess},~\ref{th:breakingSymmetries}, we get
    \begin{align}
    \label{eq:calibrationfunc:block01loss:breakingSymmetries}
    \calibrationfunc_{ij} (\eps) = \min_{\scorev, \qv} \quad & \frac{1}{2\outputvarcard} \sum_{v=1}^\numblocks \sum_{c \in \labelsOfBlock{v}} (\score_c + 1 - \Q_{\blockOfLabel{c}})^2, \\
    \label{eq:calibrationfunc:block01loss:breakingSymmetries:epsConstr}
    & \Q_{\blockOfLabel{i}} - \Q_{\blockOfLabel{j}} \geq \eps, \\
    \notag & \Q_{\blockOfLabel{i}} \geq \Q_u, \;u=1,\dots,\numblocks, \\
    \notag & \score_j \geq f_c, \;c=1,\dots,\outputvarcard, \\
    \notag & \sum\nolimits_{c=1}^{\outputvarcard} \q_c = 1, \quad \q_c \geq 0.
    \end{align}
    Analogously to Proposition~\ref{th:calibrationFunction:01loss}, we claim that there exists an optimal point of~\eqref{eq:calibrationfunc:block01loss:breakingSymmetries} such that $\q_c=0$, $c\not\in\{i,j\}$; $q_i = 0.5 + \frac{\eps}{2} = \Q_{\blockOfLabel{i}}$;  $\q_{j} = 0.5 - \frac{\eps}{2} = \Q_{\blockOfLabel{j}}$; $\score_c = -1$, $c \not\in \labelsOfBlock{ij} := \labelsOfBlock{\blockOfLabel{i}} \cup \labelsOfBlock{\blockOfLabel{j}}$.

    At first, note that if $\blockOfLabel{i} = \blockOfLabel{j}$, then the constraint~\eqref{eq:calibrationfunc:block01loss:breakingSymmetries:epsConstr} is never feasible, so we'll assume that $\blockOfLabel{i} \neq \blockOfLabel{j}$.
    
    We will now show that we can consider only configurations with all the probability mass on the two selected blocks.
    Consider some optimal point $\scorev^*$, $\qv^*$ and denote with $\delta = \sum_{c\in \outputdomain \setminus \labelsOfBlock{ij}} \q_c^*$ the probability mass on the unselected blocks.
    The operation
    \begin{align*}
    \score_c^* &:= \score_c^* + \frac{\delta}{2}, \; c \in \labelsOfBlock{ij},& 
    \score_c^* &:= -1, \; c\not\in \labelsOfBlock{ij} \\
    \q_i^* &:= \q_i^* + \frac{\delta}{2}, 
    \q_{j}^* := \q_{j}^* + \frac{\delta}{2}, &
    \q_c^* &:= 0, \; c\not\in \labelsOfBlock{ij} 
    \end{align*}
    can only decrease the objective of~\eqref{eq:calibrationfunc:block01loss:breakingSymmetries} because the summands corresponding to the unselected blocks are set to zero.
    All the constraints stay feasible and the summands corresponding to the selected blocks keep their values.
    
    The probability mass within the block~$\blockOfLabel{i}$ can be safely moved to $\q_i^*$ without changing the objective or violating any constraints.
    Analogously, the probability mass within the block~$\blockOfLabel{j}$ can be safely moved to $\q_j^*$.
    By reusing the operation~\eqref{eq:tfproofs:truncatingToEps}, we can now ensure that $\q^*_i = \q^*_j + \eps$ and thus that $\q^*_i = \frac{1}{2} + \frac{\eps}{2}$ and $\q^*_{j} = \frac{1}{2} - \frac{\eps}{2}$.

    At the point defined above, we now minimize the objective~\eqref{eq:calibrationfunc:block01loss:breakingSymmetries} w.r.t.\ $\score_c$, $c \in \labelsOfBlock{ij}$.
    At an optimal point, all values~$\score_c^*$, $c \in \labelsOfBlock{ij}$, belong to the segment $[\Q_{\blockOfLabel{j}}^* - 1, \Q_{\blockOfLabel{i}}^* - 1]$, otherwise we can always truncate the values to the borders of the segment and get an improvement of the objective.
    For all the scores $\score_c^*$, $c \neq j$, the following identity holds
    \begin{equation}
    \label{eq:calibrationfunc:block01loss:scoreInSelBlocks}
    \score_c^* = \begin{cases}
    \Q_{\blockOfLabel{c}}^* - 1, \; \text{if $\Q_{\blockOfLabel{c}}^* - 1 < \score_{j}^*$}, \\
    \score_{j}^*.
    \end{cases}
    \end{equation}
    Combining with the segment constraint, it implies that in the block of the label~$i$, we have $\score_c^* = \score_{j}^*$, $c \in \labelsOfBlock{\blockOfLabel{i}}$, and, in the block of the label~$j$, we have $\score_c^* = \Q_{\blockOfLabel{j}}^* - 1$, $c \in \labelsOfBlock{\blockOfLabel{j}} \setminus j$.
    
    By plugging the obtained values of $\q_c^*$ and $\score_c^*$ into~\eqref{eq:calibrationfunc:block01loss:breakingSymmetries} and denoting the value $\score_{j}^* + 0.5$ with $\tilde{\score}$, we get
    \begin{align}
    \label{eq:block01loss:proofSquare:obj2}
    \calibrationfunc_{ij}(\eps) = \min_{\tilde{\score}} \; &\frac{1}{2\outputvarcard}  \left( \blocksize_{\blockOfLabel{i}} (\tilde{\score} - \frac{\eps}{2})^2 + (\tilde{\score} + \frac{\eps}{2})^2 \right),\\
    \notag
    \mbox{s.t.}\; & \tilde{\score} \in [-\frac{\eps}{2}, \frac{\eps}{2}].
    \end{align}
    By setting the derivative of the objective~\eqref{eq:block01loss:proofSquare:obj2} to zero, we get
    $$
    \tilde{\score} = \frac{\eps}{2} \frac{\blocksize_{\blockOfLabel{i}} - 1}{\blocksize_{\blockOfLabel{i}} + 1},
    $$
    which belongs to the segment $[-\frac{\eps}{2}, \frac{\eps}{2}]$.
    We compute the function value at this point:
    $$
    \calibrationfunc_{ij} (\eps) = \frac{\eps^2}{4\outputvarcard} \frac{2\blocksize_{\blockOfLabel{i}}}{\blocksize_{\blockOfLabel{i}}+1},
    $$
    which finishes the proof.
\end{proof}

\begin{proposition}
    \label{th:calibrationFunction:block01loss:hardConstr}
    Let the scores~$\scorev$ be piecewise constant on the blocks of the loss, i.e.\ belong to the subspace~$\scoresubset_{01,\numblocks} = \colspace(\lossmatrix_{01,\numblocks}) \subseteq \R^k$.
    Then, the calibration function equals the following quadratic function w.r.t.\ $\eps$:
    \[
    \calibrationfunc_{\surrogatelossquad,\lossmatrix_{01,\numblocks},\scoresubset_{01,\numblocks}}(\eps)
    =
    \frac{\eps^2}{4\outputvarcard} \min_{v \neq u} \frac{2 \blocksize_v \blocksize_u}{\blocksize_v + \blocksize_u},
    \quad 0 \leq \eps \leq 1.
    \]
    If all the blocks are of the same size, we have
    $
    \calibrationfunc_{\surrogatelossquad,\lossmatrix_{01,\numblocks},\scoresubset_{01,\numblocks}}(\eps)
    =
    \frac{\eps^2}{4\numblocks}
    $
    where $\numblocks$ is the number of blocks.
\end{proposition}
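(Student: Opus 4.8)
The plan is to reuse the template of the proofs of Proposition~\ref{th:calibrationFunction:01loss} and Proposition~\ref{th:calibrationFunction:block01loss}, specialised to the constrained score subspace $\scoresubset_{01,\numblocks} = \colspace(\lossmatrix_{01,\numblocks})$. First I would invoke Lemma~\ref{th:quadSurrogateExcess} (applicable since $\colspace(\lossmatrix_{01,\numblocks})\subseteq\scoresubset_{01,\numblocks}$, in fact with equality) to write $\excess\surrogateweightedquad(\scorev,\qv) = \tfrac{1}{2\outputvarcard}\|\scorev + \lossmatrix_{01,\numblocks}\qv\|_2^2$ for $\scorev\in\scoresubset_{01,\numblocks}$, and Lemma~\ref{th:breakingSymmetries} to reduce to $\min_{i\neq j}\calibrationfunc_{ij}(\eps)$. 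Since every feasible $\scorev$ is piecewise constant on the blocks, the predictor $\predictor$ can only output the smallest-index label of each block, so $\predictor(\scoresubset_{01,\numblocks})$ has exactly $\numblocks$ elements and the relevant labels $i,j$ always lie in two distinct blocks $v\neq u$. Writing $g_w$ for the common score on block $w$ and $\Q_w = \sum_{c\in\labelsOfBlock{w}}\q_c$, and using $(\lossmatrix_{01,\numblocks}\qv)_c = 1 - \Q_{\blockOfLabel{c}}$, each subproblem $\calibrationfunc_{ij}$ becomes the convex program
\begin{align*}
\calibrationfunc_{ij}(\eps) = \min_{g,\,\qv}\ & \tfrac{1}{2\outputvarcard}\textstyle\sum_{w=1}^\numblocks \blocksize_w (g_w + 1 - \Q_w)^2 \\
\text{s.t.}\ & \Q_v - \Q_u \geq \eps,\quad \Q_v \geq \Q_w,\;\forall w,\quad g_u \geq g_w,\;\forall w,\quad \textstyle\sum_w \Q_w = 1,\quad \Q_w \geq 0,
\end{align*}
which depends on $i,j$ only through the pair $(v,u)$.

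For the upper bound I would check feasibility of, and evaluate the objective at, the point $\Q_v = \tfrac{1}{2} + \tfrac{\eps}{2}$, $\Q_u = \tfrac{1}{2} - \tfrac{\eps}{2}$ (nonnegative because $\eps\leq 1$), $\Q_w = 0$ and $g_w = -1$ for $w\notin\{v,u\}$, and $g_v = g_u = \tilde{\score} - \tfrac{1}{2}$ with $\tilde{\score}$ free. This kills the summands of the unselected blocks and leaves the univariate quadratic $\tfrac{1}{2\outputvarcard}\bigl(\blocksize_v(\tilde{\score} - \tfrac{\eps}{2})^2 + \blocksize_u(\tilde{\score} + \tfrac{\eps}{2})^2\bigr)$, whose minimiser $\tilde{\score} = \tfrac{\eps}{2}\tfrac{\blocksize_v-\blocksize_u}{\blocksize_v+\blocksize_u}$ lies in $[-\tfrac{\eps}{2},\tfrac{\eps}{2}]$ and yields value $\tfrac{\eps^2}{4\outputvarcard}\tfrac{2\blocksize_v\blocksize_u}{\blocksize_v+\blocksize_u}$. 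For the matching lower bound I would run the same chain of reductions as in Proposition~\ref{th:calibrationFunction:block01loss}: (i)~after truncating all scores to the bounded segment used there, transfer any probability mass lying outside blocks $v$ and $u$ onto them (splitting it equally and raising $g_v,g_u$ by the same amount), which zeroes the contribution of all other blocks without changing the rest; (ii)~collapse the mass inside block $v$ (resp.\ $u$) onto $\q_i$ (resp.\ $\q_j$) and use the rescaling step~\eqref{eq:tfproofs:truncatingToEps} to enforce $\Q_v - \Q_u = \eps$, hence $\Q_v = \tfrac{1}{2} + \tfrac{\eps}{2}$, $\Q_u = \tfrac{1}{2} - \tfrac{\eps}{2}$; (iii)~note that the unconstrained targets $\Q_v - 1 > \Q_u - 1$ would violate $g_v \leq g_u$, so the constraint $g_v = g_u$ is active and the remaining $g_w$ can be set to $-1$; the inner minimisation over $g_v = g_u$ is then the weighted least-squares fit of a constant to $\{\Q_v - 1, \Q_u - 1\}$ with weights $\{\blocksize_v,\blocksize_u\}$, with minimal weighted residual $\tfrac{\blocksize_v\blocksize_u}{\blocksize_v+\blocksize_u}(\Q_v - \Q_u)^2 = \tfrac{\blocksize_v\blocksize_u}{\blocksize_v+\blocksize_u}\eps^2$, giving $\calibrationfunc_{ij}(\eps) = \tfrac{\eps^2}{4\outputvarcard}\tfrac{2\blocksize_v\blocksize_u}{\blocksize_v+\blocksize_u}$.

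Taking the minimum over ordered pairs of distinct blocks then gives $\calibrationfunc_{\surrogatelossquad,\lossmatrix_{01,\numblocks},\scoresubset_{01,\numblocks}}(\eps) = \tfrac{\eps^2}{4\outputvarcard}\min_{v\neq u}\tfrac{2\blocksize_v\blocksize_u}{\blocksize_v+\blocksize_u}$, and for equal block sizes $\blocksize$ we have $\tfrac{2\blocksize^2}{2\blocksize} = \blocksize$ and $\outputvarcard = \numblocks\blocksize$, so this collapses to $\tfrac{\eps^2}{4\numblocks}$. I expect the main obstacle to be the lower-bound reductions, specifically verifying that each mass-transfer and truncation operation keeps all of the block-level constraints feasible while not increasing the objective, and that $g_v\leq g_u$ is active at optimality; both are routine given the machinery already set up for Proposition~\ref{th:calibrationFunction:block01loss}, the one genuinely new feature being that the whole block $u$ (rather than just the single label $j$, as in the unconstrained case) now enters the objective, which is exactly what turns the factor $\tfrac{2\blocksize_v}{\blocksize_v+1}$ of Proposition~\ref{th:calibrationFunction:block01loss} into $\tfrac{2\blocksize_v\blocksize_u}{\blocksize_v+\blocksize_u}$.
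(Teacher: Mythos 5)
Your proposal is correct and follows essentially the same route as the paper: Lemmas~\ref{th:quadSurrogateExcess} and~\ref{th:breakingSymmetries}, the same reductions as in Proposition~\ref{th:calibrationFunction:block01loss} to reach $\Q_v=\tfrac12+\tfrac{\eps}{2}$, $\Q_u=\tfrac12-\tfrac{\eps}{2}$ with all other mass zero, and then the observation that the block constraint forces the whole block of $j$ (not just the single label $j$) into the final one-dimensional quadratic, yielding $\tfrac{\eps^2}{4\outputvarcard}\tfrac{2\blocksize_v\blocksize_u}{\blocksize_v+\blocksize_u}$. Your block-level reformulation in terms of $(g_w,\Q_w)$ and the weighted-least-squares residual formula are just cleaner packagings of the paper's explicit derivative computation, and the remark that the tie-breaking rule restricts $\predictor(\scoresubset_{01,\numblocks})$ to one label per block is a valid substitute for the paper's infeasibility argument when $\blockOfLabel{i}=\blockOfLabel{j}$.
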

\begin{proof}
    The constraints on scores $\scorev \in \scoresubset_{01,\numblocks}$ simply imply that the scores within all the blocks are equal.
    Having this in mind, the proof exactly matches the proof of Proposition~\ref{th:calibrationFunction:block01loss} until the argument around Eq.~\eqref{eq:calibrationfunc:block01loss:scoreInSelBlocks}.
    Now we cannot set the scores of the block~$\blockOfLabel{j}$ to different values, and, thus they are all equal to $\score^*$.
    
    By plugging the obtained values of $\q_c^*$ and $\score_c^*$ into~\eqref{eq:calibrationfunc:block01loss:breakingSymmetries} and denoting the value $\score_{j}^* + 0.5$ with $\tilde{\score}$, we get
    \begin{align}
    \notag
    \calibrationfunc_{ij}(\eps) = \min_{\tilde{\score}} \; &\frac{1}{2\outputvarcard}  \left( \blocksize_{\blockOfLabel{i}} (\tilde{\score} - \frac{\eps}{2})^2 + \blocksize_{\blockOfLabel{j}} (\tilde{\score} + \frac{\eps}{2})^2 \right),\\
    \label{eq:block01loss:hardConstr:proofSquare:obj2}
    \mbox{s.t.}\; & \tilde{\score} \in [-\frac{\eps}{2}, \frac{\eps}{2}].
    \end{align}
    By setting the derivative of the objective~\eqref{eq:block01loss:hardConstr:proofSquare:obj2} to zero, we get
    $$
    \tilde{\score} = \frac{\eps}{2} \frac{\blocksize_{\blockOfLabel{i}} - \blocksize_{\blockOfLabel{j}}}{\blocksize_{\blockOfLabel{i}} + \blocksize_{\blockOfLabel{j}}},
    $$
    which belongs to the segment $[-\frac{\eps}{2}, \frac{\eps}{2}]$.
    We now compute the function value at this point:
    $$
    \calibrationfunc_{ij} (\eps) = \frac{\eps^2}{4\outputvarcard} \frac{2\blocksize_{\blockOfLabel{i}}\blocksize_{\blockOfLabel{j}}}{\blocksize_{\blockOfLabel{i}}+\blocksize_{\blockOfLabel{j}}},
    $$
    which finishes the proof.
\end{proof}

\subsection{Hamming loss}
\label{sec:exactTranferProofs:hamming}
Recall that $\lossmatrix_{\hamming,\hamminglen}$ is the Hamming loss defined over $\hamminglen$ binary variables (see Eq.~\eqref{eq:hammingLoss} for the precise definition).
In this section, we compute the calibration function for the case of the scores belonging to the column span of the loss matrix (Proposition~\ref{th:calibrationFunction:hammingLoss:hardConstr}).

\begin{proposition}
    \label{th:calibrationFunction:hammingLoss:hardConstr}
    Assume that the scores~$\scorev$ always belong to the column span of the Hamming loss matrix~$\lossmatrix_{\hamming,\hamminglen}$, i.e., $\scoresubset_{\hamming,\hamminglen} = \colspace(\lossmatrix_{\hamming,\hamminglen}) \subseteq \R^k$.
    Then, the calibration function can be computed as follows:
    \[
    \calibrationfunc_{\surrogatelossquad,\lossmatrix_{\hamming,\hamminglen},\scoresubset_{\hamming,\hamminglen}}(\eps)
    =
    \frac{\eps^2}{8\hamminglen}, \quad 0 \leq \eps \leq 1.
    \]
\end{proposition}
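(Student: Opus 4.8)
The plan is to exploit the fact that both the Hamming loss and the constrained score family $\scoresubset_{\hamming,\hamminglen}$ are \emph{separable across the $\hamminglen$ binary coordinates}, so that the calibration-function optimization decomposes bit by bit. First I would invoke Lemmas~\ref{th:quadSurrogateExcess} and~\ref{th:breakingSymmetries}. Since the decomposition~\eqref{eq:hammingLoss} gives $\colspace(\lossmatrix_{\hamming,\hamminglen}) = \scoresubset_{\hamming,\hamminglen}$, Lemma~\ref{th:quadSurrogateExcess} yields $\excess\surrogateweightedquad(\scorev,\qv) = \frac{1}{2\outputvarcard}\|\scorev + \lossmatrix_{\hamming,\hamminglen}\qv\|_2^2$ with $\outputvarcard = 2^\hamminglen$. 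Every $\scorev \in \scoresubset_{\hamming,\hamminglen}$ is of the form $\score_{\hat{\outputvarv}} = a_0 + \sum_{\hammingindex=1}^\hamminglen a_\hammingindex[\hat{\outputvar}_\hammingindex = 1]$ with $a_0,a_1,\dots,a_\hamminglen$ arbitrary reals (the factor $\tfrac12$ on the first column of $\scorematrix_{\hamming,\hamminglen}$ merely rescales $a_0$). Writing $p_\hammingindex := \P_{\outputvarv\sim\qv}(\outputvar_\hammingindex = 1)$ for the coordinate marginals, one computes $(\lossmatrix_{\hamming,\hamminglen}\qv)_{\hat{\outputvarv}} = \frac1\hamminglen\sum_\hammingindex\bigl(p_\hammingindex + [\hat{\outputvar}_\hammingindex=1](1-2p_\hammingindex)\bigr)$, hence $(\scorev + \lossmatrix_{\hamming,\hamminglen}\qv)_{\hat{\outputvarv}} = b_0 + \sum_\hammingindex b_\hammingindex[\hat{\outputvar}_\hammingindex=1]$ with $b_0 = a_0 + \frac1\hamminglen\sum_\hammingindex p_\hammingindex$ and $b_\hammingindex = a_\hammingindex + \frac{1-2p_\hammingindex}{\hamminglen}$.

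Next I would carry out the (routine) sum over the $2^\hamminglen$ binary configurations: treating the bits as independent uniform on $\{0,1\}$ gives $\sum_{\hat{\outputvarv}\in\{0,1\}^\hamminglen}(b_0 + \sum_\hammingindex b_\hammingindex[\hat{\outputvar}_\hammingindex=1])^2 = 2^\hamminglen\bigl(b_0 + \tfrac12\sum_\hammingindex b_\hammingindex\bigr)^2 + 2^{\hamminglen-2}\sum_\hammingindex b_\hammingindex^2$, so $\excess\surrogateweightedquad(\scorev,\qv) = \tfrac12\bigl(b_0 + \tfrac12\sum_\hammingindex b_\hammingindex\bigr)^2 + \tfrac18\sum_\hammingindex b_\hammingindex^2 \geq \tfrac18\sum_\hammingindex b_\hammingindex^2$, with equality attained by the choice $b_0 = -\tfrac12\sum_\hammingindex b_\hammingindex$, which is admissible because $a_0$ is unconstrained. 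On the task side, $\lossweighted_{\hat{\outputvarv}}(\qv) = \frac1\hamminglen\sum_\hammingindex\bigl(p_\hammingindex + [\hat{\outputvar}_\hammingindex=1](1-2p_\hammingindex)\bigr)$ is minimized bitwise by the predictor $\hat{\outputvar}_\hammingindex^* = [p_\hammingindex > \tfrac12]$, and the $\hammingindex$-th bit of $\predictor(\scorev)$ equals $[a_\hammingindex > 0]$ (up to ties), so $\excess\lossweighted(\scorev,\qv) = \frac1\hamminglen\sum_{\hammingindex\in D}|1-2p_\hammingindex|$, where $D$ is the set of coordinates on which $\predictor(\scorev)$ disagrees with $\hat{\outputvarv}^*$. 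The key sign observation is that $\hammingindex\in D$ forces $a_\hammingindex$ and $1-2p_\hammingindex$ to have opposite signs, whence $|b_\hammingindex| \geq \frac{|1-2p_\hammingindex|}{\hamminglen}$.

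Writing $t_\hammingindex := |1-2p_\hammingindex| \in [0,1]$, the lower bound then follows from Cauchy--Schwarz: for any feasible point ($\excess\lossweighted \geq \eps$), $\excess\surrogateweightedquad \geq \tfrac18\sum_{\hammingindex\in D} b_\hammingindex^2 \geq \tfrac{1}{8\hamminglen^2}\sum_{\hammingindex\in D} t_\hammingindex^2 \geq \tfrac{1}{8\hamminglen^2}\cdot\tfrac{(\sum_{\hammingindex\in D} t_\hammingindex)^2}{|D|} \geq \tfrac{1}{8\hamminglen^2}\cdot\tfrac{(\hamminglen\eps)^2}{\hamminglen} = \tfrac{\eps^2}{8\hamminglen}$, using $|D| \leq \hamminglen$ and $\sum_{\hammingindex\in D} t_\hammingindex = \hamminglen\,\excess\lossweighted \geq \hamminglen\eps$; this re-derives the bound that also follows from Theorem~\ref{th:lowerBoundcalibrationFunction} together with the computation $\max_{i\neq j}\|\proj_{\scoresubset_{\hamming,\hamminglen}}\Delta_{ij}\|_2^2 = 4\hamminglen/2^\hamminglen$ of Section~\ref{sec:hammingLossProps}. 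For the matching upper bound I would exhibit the near-optimal family: take $p_\hammingindex = \tfrac{1-\eps}{2}$ for all $\hammingindex$ (a valid $\qv$, e.g.\ the product distribution, and $\tfrac{1-\eps}{2}\in[0,\tfrac12]$ since $\eps\in[0,1]$), $a_\hammingindex = \delta > 0$ for all $\hammingindex$, and $a_0$ chosen so that $b_0 = -\tfrac12\sum_\hammingindex b_\hammingindex$; then $\hat{\outputvarv}^*$ is all-zeros while $\predictor(\scorev)$ is all-ones, so $D = \{1,\dots,\hamminglen\}$, $\excess\lossweighted = \eps$, and $\excess\surrogateweightedquad = \tfrac18\sum_\hammingindex(\delta + \tfrac{\eps}{\hamminglen})^2 \to \tfrac{\eps^2}{8\hamminglen}$ as $\delta\downarrow 0$ (alternatively, the closure step of Lemma~\ref{th:breakingSymmetries} permits $\delta = 0$ directly). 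Combining the two bounds gives $\calibrationfunc_{\surrogatelossquad,\lossmatrix_{\hamming,\hamminglen},\scoresubset_{\hamming,\hamminglen}}(\eps) = \frac{\eps^2}{8\hamminglen}$ on $[0,1]$.

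I expect the main obstacle to be the careful handling of the $\argmax$ tie-breaking in $\predictor$ and the attendant strict-versus-nonstrict inequalities --- namely, justifying that the disagreement set $D$ and the inequality $|b_\hammingindex| \geq t_\hammingindex/\hamminglen$ are valid, and that the infimum is genuinely attained (or approached); this is exactly what the closure argument built into Lemma~\ref{th:breakingSymmetries} is meant to absorb. The remaining pieces --- the evaluation of the quadratic sum over $\{0,1\}^\hamminglen$ and the single application of Cauchy--Schwarz --- are mechanical.
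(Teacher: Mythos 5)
Your proof is correct in substance but reaches the lower bound by a genuinely different route than the paper. The paper does not re-derive the lower bound inside this proposition at all: it observes that $\frac{\eps^2}{8\hamminglen}$ coincides with the general lower bound of Theorem~\ref{th:lowerBoundcalibrationFunction} once one has computed $\max_{i\neq j}\|\proj_{\scoresubset_{\hamming,\hamminglen}}\Delta_{ij}\|_2^2 = 4\hamminglen/2^{\hamminglen}$ (done in App.~\ref{sec:hammingLossProps} via the explicit inverse of $\scorematrix^\transpose\scorematrix$), so it only needs to exhibit a matching feasible point --- which is exactly your point $\q_t=\tfrac12-\tfrac{\eps}{2}$, $\scorev=-\tfrac12\one_\outputvarcard$, evaluated through binomial identities rather than your variance decomposition. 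Your lower bound instead is a self-contained bitwise argument: separability of both the score family and $\lossmatrix_{\hamming,\hamminglen}\qv$ reduces everything to the marginals $p_\hammingindex$, the sum over $\{0,1\}^\hamminglen$ splits into a mean term (killable via $a_0$) plus $\tfrac18\sum_\hammingindex b_\hammingindex^2$, and a per-coordinate comparison plus Cauchy--Schwarz gives $\tfrac{\eps^2}{8\hamminglen}$. What the paper's route buys is brevity given the KKT machinery already proved in App.~\ref{sec:boundscalibrationFunction:lower}; what yours buys is transparency --- it makes visible \emph{why} the worst case spreads the excess uniformly over the $\hamminglen$ bits, and it does not rely on the projection computation of App.~\ref{sec:hammingLossProps}.

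One local slip to fix: you write that $\hammingindex\in D$ forces $a_\hammingindex$ and $1-2p_\hammingindex$ to have \emph{opposite} signs. It is the reverse: disagreement means $[a_\hammingindex>0]\neq[2p_\hammingindex-1>0]$, i.e., $a_\hammingindex$ and $1-2p_\hammingindex$ are (weakly) of the \emph{same} sign, and it is precisely this that makes the magnitudes add in $b_\hammingindex = a_\hammingindex + \frac{1-2p_\hammingindex}{\hamminglen}$ and yields $|b_\hammingindex|\geq t_\hammingindex/\hamminglen$; with genuinely opposite signs the two terms could cancel and the inequality would fail. The conclusion you draw is the correct one, so this is a one-line repair, and the tie cases $a_\hammingindex=0$ still satisfy the inequality. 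The remaining implicit step --- that every marginal vector $(p_1,\dots,p_\hamminglen)\in[0,1]^\hamminglen$ is realized by some $\qv\in\simplex_\outputvarcard$ and that both objective and constraint depend on $\qv$ only through these marginals --- is true and is stated explicitly in the paper's proof; it is worth saying explicitly in yours as well.
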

\begin{proof}
    We start the proof by applying Lemma~\ref{th:breakingSymmetries} and by studying the vector of the expected losses~$(\lossmatrix_{\hamming,\hamminglen}) \qv$.
    We note that the $\hat{\outputvarv}$-th element~$\lossweighted_{\hat{\outputvarv}}(\qv)$, $\hat{\outputvarv} = (\hat{\outputvar}_t)_{t=1}^\hamminglen$, $\hat{\outputvar}_t \in \{0,1\}$, has a simple form of
    \[
    \lossweighted_{\hat{\outputvarv}}(\qv)
    =
    \sum_{\outputvarv \in \outputdomain} \frac{\q_\outputvarv}{\hamminglen}\sum_{t=1}^\hamminglen [\hat{\outputvar}_t \!\neq\! \outputvar_t]
    =
    1 - \frac{1}{\hamminglen} \sum_{t=1}^\hamminglen  \sum_{\outputvarv \in \outputdomain} \q_\outputvarv [\hat{\outputvar}_t \!=\! \outputvar_t].
    \]
    The quantity~$\sum_{\outputvarv \in \outputdomain} \q_\outputvarv [\hat{\outputvar}_t \!=\! \outputvar_t]$ corresponds to the marginal probability of a variable~$t$ taking a label~$\hat{\outputvar}_t$.
    Note that the expected loss~$\lossweighted_{\hat{\outputvarv}}(\qv)$ only depends on~$\qv$ through marginal probabilities, thus two distributions~$\qv_1$ and $\qv_2$ with the same marginals would be indistinguishable when plugged in the optimization problem for $\calibrationfunc_{ij}(\eps)$~\eqref{eq:calibrationfunc:generalBreakingSymmetries}, given that both the constraints and the objective (by Lemma~\ref{th:quadSurrogateExcess}) only depend on $\qv$ through the expected loss~$\lossweighted_{\hat{\outputvarv}}(\qv)$. 
    Having this in mind, we can consider only separable distributions, i.e., $\q_\outputvarv = \prod_{t=1}^\hamminglen \bigl(q_{t}[\outputvar_t\!=\!1] + (1-q_{t})[\outputvar_t\!=\!0]\bigr)$, where $q_{t} \in [0,1]$, $t=1,\dots,\hamminglen$, are the parameters defining the distribution.
    
    By combining the notation above with Lemmas~\ref{th:quadSurrogateExcess} and~\ref{th:breakingSymmetries}, we arrive at the following optimization problem:
    \begin{align}
    \label{eq:calibrationfunc:hammingloss:breakingSymmetries}
    \calibrationfunc_{\tilde{\outputvarv}\hat{\outputvarv}}(\eps)
    =
    \min_{\scorev, \qv} \;& \frac{1}{2 \outputvarcard}\! \sum_{\outputvarv \in \outputdomain}^{\outputvarcard} \Bigl( \score_\outputvarv \!+\! 1 \!-\! \frac{1}{\hamminglen} \!\!\sum\nolimits_{t=1}^\hamminglen \!\q_{t, \outputvar_t}\Bigr)^2, \\
    \label{eq:calibrationfunc:hammingloss:breakingSymmetries:epsConstr}
    \text{s.t.} \;& \frac{1}{\hamminglen}\!\!\sum\nolimits_{t=1}^{\hamminglen} (\q_{t, \tilde{\outputvar}_t} \!-\! \q_{t, \hat{\outputvar}_t}) \geq \eps,\\
    \label{eq:calibrationfunc:hammingloss:breakingSymmetries:qConstr}
    & \frac{1}{\hamminglen}\!\!\sum\nolimits_{t=1}^{\hamminglen} (\q_{t, \tilde{\outputvar}_t} \!-\! \q_{t, \outputvar_t}) \geq 0,\: \forall \outputvarv \in \outputdomain, \\
    \label{eq:calibrationfunc:hammingloss:breakingSymmetries:fConstr}
    &\score_{\hat{\outputvarv}} \geq \score_\outputvarv,\; \forall \outputvarv \in \outputdomain, \\
    \label{eq:calibrationfunc:hammingloss:breakingSymmetries:simplexConstr}
    & 0\leq \q_t \leq 1, \; t = 1,\dots, \hamminglen,\\
    \label{eq:calibrationfunc:breakingSymmetries:hammingloss:fSet}
    & \scorev \in \scoresubset,
    \end{align}
    where~$q_{t, \outputvar_t}$ is a shortcut to $q_{t}[\outputvar_t\!=\!1] + (1-q_{t})[\outputvar_t\!=\!0]$ and labels $\tilde{\outputvarv}$ and $\hat{\outputvarv}$ serve as the selected labels~$i$ and $j$, respectively.
    
    The calibration function $\calibrationfunc_{\surrogatelossquad,\lossmatrix_{\hamming,\hamminglen},\scoresubset_{\hamming,\hamminglen}}(\eps)=
    \frac{\eps^2}{8\hamminglen}$ in the formulation of this proposition matches the lower bound provided by Theorem~\ref{th:lowerBoundcalibrationFunction} in Section~\ref{sec:bounds:interpretations}.
    Thus, it suffices to construct a feasible w.r.t.~\eqref{eq:calibrationfunc:hammingloss:breakingSymmetries:epsConstr}-\eqref{eq:calibrationfunc:breakingSymmetries:hammingloss:fSet} assignment of variables $\scorev$, $\qv$ and labels~$\tilde{\outputvarv}$, $\hat{\outputvarv}$ such that the objective equals the lower bound.
    
    It suffices to simply set $\tilde{\outputvarv}$ to all zeros and $\hat{\outputvarv}$ to all ones.
    In this case, the constraints~\eqref{eq:calibrationfunc:hammingloss:breakingSymmetries:epsConstr} and~\eqref{eq:calibrationfunc:hammingloss:breakingSymmetries:qConstr} take the simplified form:
    \begin{align}
    \label{eq:calibrationfunc:hammingloss:breakingSymmetries:epsConstr:2}
    \frac{1}{\hamminglen}\!\!\sum\nolimits_{t=1}^{\hamminglen} (1 - 2\q_t) \geq \eps,\\
    \label{eq:calibrationfunc:hammingloss:breakingSymmetries:qConstr:2}
    \q_t \leq \frac12, \; t = 1,\dots, \hamminglen.
    \end{align}
    
    We now set~$\q_t := \frac12 - \frac{\eps}{2}$, $t = 1, \dots, \hamminglen$, and $\scorev := -\frac12\one_{\outputvarcard}$.
    This point is clearly feasible when $0\leq\epsilon\leq 1$, so it remains to compute the value of the objective.
    We complete the proof by writing (let~$w$ be the count of ones in an assignment $\outputvarv$):
    \begin{align*}
    &\frac{1}{2 \outputvarcard}\! \sum_{\outputvarv \in \outputdomain}^{\outputvarcard} \Bigl( \score_\outputvarv \!+\! 1 \!-\! \frac{1}{\hamminglen} \!\!\sum\nolimits_{t=1}^\hamminglen \!\q_{t, \outputvar_t}\Bigr)^2
    =\\
    &\frac{1}{2 \outputvarcard} \sum_{w=0}^{\hamminglen} \tbinom{\hamminglen}{w} \bigl(\frac12 - \frac{1}{\hamminglen} (w(\frac12 - \frac{\eps}{2}) + (\hamminglen - w)(\frac12 + \frac{\eps}{2})) \bigr)^2 =
    \\
    & \frac{1}{2 \outputvarcard} \sum_{w=0}^{\hamminglen} \tbinom{\hamminglen}{w} (\frac{\eps}{2} - \frac{w\eps}{\hamminglen})^2 = \frac{\eps^2}{2 \outputvarcard} \sum_{w=0}^{\hamminglen} \tbinom{\hamminglen}{w} (\frac14 - \frac{w}{\hamminglen} + \frac{w^2}{\hamminglen^2}) =\\
    &\frac{\eps^2}{2 \outputvarcard} (\frac14 2^\hamminglen - \frac{1}{\hamminglen} \hamminglen 2^{\hamminglen-1} + \frac{1}{\hamminglen^2}\hamminglen(\hamminglen+1)2^{\hamminglen-2}) = \frac{\eps^2}{8\hamminglen},
    \end{align*}
    where we use the equality~$\outputvarcard = 2^\hamminglen$ and the identities $\sum_{t=0}^\hamminglen \binom{\hamminglen}{t} = 2^\hamminglen$, $\sum_{t=0}^\hamminglen t \binom{\hamminglen}{t} = \hamminglen 2^{\hamminglen-1}$, $\sum_{t=0}^\hamminglen t^2 \binom{\hamminglen}{t} = \hamminglen (\hamminglen+1) 2^{\hamminglen-2}$.
\end{proof}

\subsection{Mixed 0-1 and block 0-1 loss}
\label{sec:exactTranferProofs:mixedloss}
Recall that $\lossmatrix_{01,\numblocks,\consbreakpoint}$ is the convex combination of the 0-1 loss and the block 0-1 loss with $\numblocks$~blocks, i.e., $\lossmatrix_{01,\numblocks,\consbreakpoint} = \consbreakpoint \lossmatrix_{01} + (1 - \consbreakpoint) \lossmatrix_{01,\numblocks}$, $0 \leq \consbreakpoint \leq 1$.
Let all the blocks be of the same size $\blocksize = \frac{\outputvarcard}{\numblocks} \geq 2$.
In this section, we compute the calibration functions for the case of unconstrained scores (Proposition~\ref{th:calibrationFunction:mixedLoss}) and for the case when scores belong to the column span of the loss matrix (Proposition~\ref{th:calibrationFunction:mixedLoss:hardConstr}).

\begin{proposition}
    \label{th:calibrationFunction:mixedLoss}
    If there are no constraints on scores~$\scorev$ then the calibration function
    \[
    \calibrationfunc_{\surrogatelossquad,\lossmatrix_{01,\numblocks,\consbreakpoint},\R^\outputvarcard}(\eps)
    =
    \begin{cases}
    \frac{\eps^2}{4\outputvarcard}, \qquad \text{$\eps \leq \frac{\consbreakpoint}{1-\consbreakpoint}$}, \\
    \frac{\eps^2 \blocksize}{2\outputvarcard(\blocksize+1)} \!-\! \frac{\consbreakpoint(\eps+1)(\blocksize-1)}{4\outputvarcard(\blocksize+1)}(2\eps \!-\! \eps \consbreakpoint \!-\! \consbreakpoint)
    	\quad \text{$\frac{\consbreakpoint}{1-\consbreakpoint} \leq \eps \leq 1$}
    \end{cases}
    \]
    shows that the surrogate is consistent.
\end{proposition}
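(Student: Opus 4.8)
The plan is to follow the three-step template of the proofs of Propositions~\ref{th:calibrationFunction:01loss} and~\ref{th:calibrationFunction:block01loss}. First I would use Lemmas~\ref{th:quadSurrogateExcess} and~\ref{th:breakingSymmetries} to write $\calibrationfunc_{\surrogatelossquad,\lossmatrix_{01,\numblocks,\consbreakpoint},\R^\outputvarcard}(\eps)=\min_{i\neq j}\calibrationfunc_{ij}(\eps)$, where each $\calibrationfunc_{ij}$ is now a \emph{convex} program: $\scoresubset=\R^\outputvarcard$ contains $\colspace(\lossmatrix_{01,\numblocks,\consbreakpoint})$ (indeed $\lossmatrix_{01,\numblocks,\consbreakpoint}=\one_\outputvarcard\one_\outputvarcard^\transpose-\consbreakpoint\id_\outputvarcard-(1-\consbreakpoint)UU^\transpose$ has full rank for $\consbreakpoint>0$), so Lemma~\ref{th:quadSurrogateExcess} gives the objective $\tfrac{1}{2\outputvarcard}\|\scorev+\lossmatrix_{01,\numblocks,\consbreakpoint}\qv\|_2^2$. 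The computation that starts everything is that the $c$-th entry of $\lossmatrix_{01,\numblocks,\consbreakpoint}\qv$ equals $1-\consbreakpoint \q_c-(1-\consbreakpoint)Q_{B(c)}$, where $B(c)$ is the block of $c$ and $Q_v$ its total probability mass, so the excess-risk constraint of~\eqref{eq:calibrationfunc:block01loss:breakingSymmetries} becomes $\consbreakpoint(\q_i-\q_j)+(1-\consbreakpoint)(Q_{B(i)}-Q_{B(j)})\geq\eps$.

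Next I would split according to whether $i$ and $j$ lie in the same block. If they do, the constraint reads $\consbreakpoint(\q_i-\q_j)\geq\eps$, infeasible for $\eps>\consbreakpoint$; for $\eps\leq\consbreakpoint$ the same kind of point as in Proposition~\ref{th:calibrationFunction:01loss} (mass $\tfrac12\pm\tfrac{\eps}{2\consbreakpoint}$ on $i,j$, equalized scores) is feasible with objective $\tfrac{\eps^2}{4\outputvarcard}$, already matching the lower bound of Theorem~\ref{th:lowerBoundcalibrationFunction}. The real work is the different-block case. Mirroring the proof of Proposition~\ref{th:calibrationFunction:block01loss}, I would argue by mass-moving that an optimum puts all mass on the two relevant blocks and then on $i$ and on $j$, forcing $Q_{B(i)}=\q_i=\tfrac{1+\eps}2$, $Q_{B(j)}=\q_j=\tfrac{1-\eps}2$; and after clipping all other scores to $\score_j$, the new feature (vs.\ block $0$-$1$) is that inside $B(i)$ the label $i$ has target $Q_{B(i)}-1$ whereas the other $\blocksize-1$ labels of that block have the strictly smaller target $(1-\consbreakpoint)Q_{B(i)}-1$, because the $\lossmatrix_{01}$ part separates labels within a block. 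Writing $h:=\score_j+1$, this reduces $\calibrationfunc_{ij}(\eps)$ to minimizing over $h$ the convex, piecewise-quadratic function $\tfrac{1}{2\outputvarcard}\bigl[(h-\tfrac{1+\eps}2)^2+(\blocksize-1)\,(\max\{0,\,(1-\consbreakpoint)\tfrac{1+\eps}2-h\})^2+(h-\tfrac{1-\eps}2)^2\bigr]$ --- a $\consbreakpoint$-deformation of~\eqref{eq:block01loss:proofSquare:obj2}.

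Solving this one-dimensional problem is a short computation with a single case split: whether the unconstrained minimizer $h^\star=\tfrac{1}{\blocksize+1}\bigl(\tfrac{1+\eps}2(1+(\blocksize-1)(1-\consbreakpoint))+\tfrac{1-\eps}2\bigr)$ of the three-term objective lies below $(1-\consbreakpoint)\tfrac{1+\eps}2$ (the $\blocksize-1$ middle terms active) or not, and one checks this happens exactly when $\eps>\tfrac{\consbreakpoint}{1-\consbreakpoint}$. For $\eps\leq\tfrac{\consbreakpoint}{1-\consbreakpoint}$ the middle terms vanish, the objective collapses to $(h-\tfrac{1+\eps}2)^2+(h-\tfrac{1-\eps}2)^2$, minimized at $h=\tfrac12$ with value $\tfrac{\eps^2}{4\outputvarcard}$; for $\eps>\tfrac{\consbreakpoint}{1-\consbreakpoint}$ one gets $\tfrac{1}{2\outputvarcard(\blocksize+1)}\bigl[\eps^2+\tfrac{\blocksize-1}{4}(\consbreakpoint^2(1+\eps)^2+P^2)\bigr]$ with $P:=2\eps-\consbreakpoint(1+\eps)$, which equals the claimed $\tfrac{\eps^2\blocksize}{2\outputvarcard(\blocksize+1)}-\tfrac{\consbreakpoint(\eps+1)(\blocksize-1)}{4\outputvarcard(\blocksize+1)}(2\eps-\eps\consbreakpoint-\consbreakpoint)$ via the identity $\consbreakpoint^2(1+\eps)^2+P^2+2\consbreakpoint(1+\eps)P=4\eps^2$. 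Finally I would assemble the minimum over $i\neq j$ (for $\eps\leq\tfrac{\consbreakpoint}{1-\consbreakpoint}$ both cases contribute $\tfrac{\eps^2}{4\outputvarcard}$; for $\eps>\tfrac{\consbreakpoint}{1-\consbreakpoint}$ only the different-block case is feasible and gives the second branch), check continuity at $\eps=\tfrac{\consbreakpoint}{1-\consbreakpoint}$ (there $P=\eps$, so both branches equal $\tfrac{\eps^2}{4\outputvarcard}$), and note that both branches are strictly positive for $\eps>0$, which yields consistency in the sense of Definition~\ref{def:consistency}.

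The step I expect to be the main obstacle is making the different-block reduction rigorous: proving that concentrating all mass on $i$ and $j$ with exactly those block and single-label masses is \emph{optimal} (the mass-moving arguments of Proposition~\ref{th:calibrationFunction:block01loss} have to be re-derived for the mixed loss, since moving mass inside a block now changes the objective), and then carefully tracking which score-clipping regime is active as $\eps$, $\consbreakpoint$ and $\blocksize$ vary --- in particular verifying that the $\blocksize-1$ middle terms switch on precisely at $\eps=\tfrac{\consbreakpoint}{1-\consbreakpoint}$ --- while handling the usual tie-breaking subtlety at points with $\score_i=\score_j$ through the closure/continuity argument of Lemma~\ref{th:breakingSymmetries}. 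As with the other exact calibration functions, I would cross-check the algebra with the authors' symbolic and numerical scripts.
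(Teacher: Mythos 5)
Your proposal is correct and follows essentially the same route as the paper's proof: the same reduction via Lemmas~\ref{th:quadSurrogateExcess} and~\ref{th:breakingSymmetries}, the same split into same-block and different-block pairs, the same concentration of mass onto $i$ and $j$ with $\q_i=\frac{1+\eps}{2}$, and the same one-dimensional piecewise-quadratic minimization whose regime changes exactly at $\eps=\frac{\consbreakpoint}{1-\consbreakpoint}$; your algebra in both branches checks out, and your $h^\star$ in fact corrects a typo in the paper's displayed $\score^*$ (the factor $(1-\eps)$ there should read $(1+\eps)$, as the paper's own subsequent computation of $\score^*-\score_j'^*$ confirms). The one detail to add is the $(\blocksize-1)$ clipping terms for the non-selected labels of the block containing $j$, which you drop from the reduced objective and which must be verified to be inactive at the optimum (the paper's check $\score^*\geq\score_j'^*$), but this is precisely the regime-tracking you flag at the end, and the remaining obstacle you identify (re-deriving the mass-moving operations for the mixed loss, since moving mass within a block now changes the objective) is indeed where the paper spends most of its effort.
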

Note that when $\consbreakpoint=0$, we have $\calibrationfunc(\eps) = \frac{\eps^2}{4\outputvarcard}\frac{2\blocksize}{\blocksize+1}$ as in Proposition~\ref{th:calibrationFunction:block01loss}.
When $\consbreakpoint \geq 0.5$ we have $\calibrationfunc(\eps) = \frac{\eps^2}{4\outputvarcard}$, which matches Proposition~\ref{th:calibrationFunction:01loss}.

\begin{proof}
    This proof is very similar to the proof of Proposition~\ref{th:calibrationFunction:block01loss}, but technically more involved.
    
    We start by noting that the $i$-th element of the vector~$(\lossmatrix_{01,\numblocks,\consbreakpoint}) \qv$ equals
    \begin{equation}
    \label{eq:calibrationFunction:mixedLoss:expectedLoss}
    \sum_{j:\;\blockOfLabel{j}\neq \blockOfLabel{i}} \!\!\!\!\!\! (1-\consbreakpoint)\q_j\; + \sum_{j:\:j\neq i}\!\! \consbreakpoint \q_j  
    = 
    \consbreakpoint(1-\q_i) + (1-\consbreakpoint)(1-\Q_{\blockOfLabel{i}}),
    \end{equation}
    where for $\blockOfLabel{i}$ and $\Q_v$ we reuse the notation defined in the proof of Proposition~\ref{th:calibrationFunction:block01loss}.
    By combining this with Lemmas~\ref{th:quadSurrogateExcess} and~\ref{th:breakingSymmetries}, we get
    \begin{align}
    \label{eq:calibrationFunction:mixedLoss:proofSquare:obj1}
    \calibrationfunc_{ij}(\eps) \!\!=\!\! \min_{\scorev, \qv} \; & \frac{1}{2\outputvarcard} \sum_{v=1}^\numblocks \;\sum_{c \in\labelsOfBlock{v}} (\score_c + 1 - \consbreakpoint \q_c - (1-\consbreakpoint)\Q_{\blockOfLabel{c}})^2, \\
    \notag \mbox{s.t.} \;& \consbreakpoint (\q_i - \q_j) + (1-\consbreakpoint)(\Q_{\blockOfLabel{i}} - \Q_{\blockOfLabel{j}}) \geq \eps, \\
    \notag & \consbreakpoint (\q_i - \q_c) + (1-\consbreakpoint)(\Q_{\blockOfLabel{i}} - \Q_{\blockOfLabel{c}}) \geq 0, \forall c \\
    \notag & \score_{j} \geq \score_c, \; \forall c, \\
    \notag & \sum_{c=1}^\outputvarcard \q_c = 1, \quad \q_c \geq 0, \; \forall c.
    \end{align}
    
    The blocks are all of the same size so we need to consider just the two cases: 1) the selected labels belong to the same block, i.e., $\blockOfLabel{i} = \blockOfLabel{j}$; 2) the selected labels belong to the two different blocks, i.e., $\blockOfLabel{i} \neq \blockOfLabel{j}$.
    
    The first case can be proven by a straight forward generalization of the proof of Proposition~\ref{th:calibrationFunction:01loss}.
    Given that the loss value is bounded by~$1$, the maximal possible value of~$\eps$ when the constraints can be feasible equals~$\consbreakpoint$.
    Thus, we have $\calibrationfunc_{ij}(\eps) = \frac{\eps^2}{4\outputvarcard}$ for $\eps \leq \consbreakpoint$ and $+\infty$ otherwise.
    
    We will now proceed to the second case $\blockOfLabel{i} \neq \blockOfLabel{j}$.
    We show that
    \[
    \calibrationfunc_{ij}(\eps) \!\!=\!\! \begin{cases}
    \frac{\eps^2}{4\outputvarcard}, \quad \text{for $\eps \leq \frac{\consbreakpoint}{1-\consbreakpoint}$}, \\
    \frac{\eps^2 \blocksize}{2\outputvarcard(\blocksize+1)} - \frac{\consbreakpoint(\eps+1)(\blocksize-1)}{4\outputvarcard(\blocksize+1)}(2\eps - \eps \consbreakpoint - \consbreakpoint), \text{otherwise.} \\
    \end{cases}
    \]
    
    Similarly to the arguments used in Propositions~\ref{th:calibrationFunction:01loss} and~\ref{th:calibrationFunction:block01loss}, we claim that there is an optimal point of~\eqref{eq:calibrationFunction:mixedLoss:proofSquare:obj1} such that $\q^*_c=0$, $c \not\in \{i,j\}$; $\q^*_i = 0.5 + \frac{\eps}{2}$;  $\q^*_{j} = 0.5 - \frac{\eps}{2}$; and $\score^*_c = -1$ for $c \not\in \labelsOfBlock{ij} := \labelsOfBlock{\blockOfLabel{i}} \cup \labelsOfBlock{\blockOfLabel{j}}$.
    
    First, we will show that we can consider only configurations with all the probability mass on the two selected blocks $\blockOfLabel{i}$ and $\blockOfLabel{j}$.
    Given any optimal point $\scorev^*$ and $\qv^*$, the operation (with $\delta = \sum_{c \not\in \labelsOfBlock{ij}} \q_c^*$)
    \begin{align*}
    \score_i^* &:= \score_i^* + \frac{\delta}{2}, &
    &\q_i^* := \q_i^* + \frac{\delta}{2}, \\
    \score_j^* &:= \score_j^* + \frac{\delta}{2}, &
    &\q_{j}^* := \q_{j}^* + \frac{\delta}{2},\\
    \score_c^* &:= -1, c \not\in\labelsOfBlock{ij} &
    &\q_c^* := 0, c \not\in\labelsOfBlock{ij} \\
    \score_c^* &:= \score_c^* + (1-\consbreakpoint)\frac{\delta}{2}, \mathrlap{\;\;c \in \;\labelsOfBlock{ij} \setminus \{i,j\}}
    \end{align*}
    can only decrease the objective of~\eqref{eq:calibrationFunction:mixedLoss:proofSquare:obj1} because the summands corresponding to the unselected $\numblocks-2$ blocks are set to zero.
    All the constraints stay feasible and the values corresponding to the blocks $\blockOfLabel{i}$ and $\blockOfLabel{j}$ do not change.
    The last operation is required, because the values~$\Q_{\blockOfLabel{i}}$, $\Q_{\blockOfLabel{j}}$ change when we change $\q_i$ and $\q_j$.
    Adding $(1-\consbreakpoint)\frac{\delta}{2}$ to some scores compensates this and cannot violate the constraints because $\score_j^*$ goes up by $\frac{\delta}{2} \geq (1-\consbreakpoint)\frac{\delta}{2}$.
    
    Now we will show that it is possible to move all the mass to the two selected labels $i$ and $j$.
    We cannot simply move the mass within one block, but need to create some overflow and move it to another block in a specific way.
    Consider $\delta:=q_a^*$, which is some non-zero mass on a non-selected label of the block~$\blockOfLabel{i}$.
    Then, the operation
    \begin{align*}
    \score_i^* &:= \score_i^* + \delta\frac{\consbreakpoint}{2}, &
    \q_i^* &:= \q_i^* + \delta(1-\frac{\consbreakpoint}{2}), \\
    \score_j^* &:= \score_j^* + \delta\frac{\consbreakpoint}{2}, &
    \q_{j}^* &:= \q_{j}^* + \delta\frac{\consbreakpoint}{2},\\
    \score_a^* &:= \score_a^* + \delta\frac{\consbreakpoint}{2}(\consbreakpoint - 3), &
    \q_{a}^* &:= \q_{a}^* - \delta = 0,\\
    \score_c^* &:= \score_c^* - \delta\frac{\consbreakpoint}{2} (1-\consbreakpoint), \mathrlap{\;\;c \in \;\labelsOfBlock{i} \setminus \{i,a\}} &&\\
    \score_c^* &:= \score_c^* +  \delta\frac{\consbreakpoint}{2}(1-\consbreakpoint), \mathrlap{\;\;c \in \;\labelsOfBlock{j} \setminus \{j\}} &&
    \end{align*}
    does no change the objective value of~\eqref{eq:calibrationFunction:mixedLoss:proofSquare:obj1} because the quantities $\score_c + 1 -  \consbreakpoint \q_c - (1-\consbreakpoint)\Q_{\blockOfLabel{c}}$, $c \in \labelsOfBlock{ij}$, stay constant and all the constraints of~\eqref{eq:calibrationFunction:mixedLoss:proofSquare:obj1} stay feasible.
    We repeat this operation for all $a \in \labelsOfBlock{\blockOfLabel{i}}\setminus \{i\}$ and, thus, move all the probability mass within the block~$\blockOfLabel{i}$ to the label~$i$.
    In the block~$\blockOfLabel{j}$, an analogous operation can move all the mass to the label~$j$.
    
    It remains to show that $\q_i^* - \q_j^* = \eps$.
    Indeed, if $\q^*_i - \q^*_j = \delta' >  \eps$, the operation analogous to~\eqref{eq:tfproofs:truncatingToEps}
    \begin{align*}
    \score_i^* &:= \score_i^* - \frac{\delta'-\eps}{2}, &
    \q_i^* &:= \q_i^* - \frac{\delta'-\eps}{2}, \\
    \score_j^* &:= \score_j^* + \frac{\delta'-\eps}{2},&
    \q_{j}^* &:= \q_{j}^* + \frac{\delta'-\eps}{2}, \\
    \score_c^* &:= \score_c^* - (1-\consbreakpoint)\frac{\delta'-\eps}{2}, \mathrlap{c \in  \labelsOfBlock{\blockOfLabel{i}}\setminus \{i\},} \\
    \score_c^* &:= \score_c^* + (1-\consbreakpoint)\frac{\delta'-\eps}{2}, \mathrlap{c \in \labelsOfBlock{\blockOfLabel{j}}\setminus \{j\}}
    \end{align*}
    can always set  $\q_i^* - \q_j^* = \eps$, and thus  $\q^*_i = 0.5 + \frac{\eps}{2}$ and $\q^*_{j} = 0.5 - \frac{\eps}{2}$.
    After this operation, all the scores of the block~$\blockOfLabel{i}$ go down and all the scores of the block~$\blockOfLabel{j}$ go up at most as much as~$\score_j^*$, so the constraints $\score_{j} \geq \score_c$ cannot get violated.
    
    We now proceed with the computation of $\calibrationfunc_{ij}(\eps)$.
    First, we note that convexity and symmetries of~\eqref{eq:calibrationFunction:mixedLoss:proofSquare:obj1} implies that all the non-selected scores within each block are equal.\footnote{If these optimal scores are not equal, by symmetry, one can obtain the same objective and feasibility by permuting their corresponding values. By taking a uniform convex combination on all permutations, we obtain a point where all the scores are equal, and by convexity, would yield a lower objective value.}
    Denote the scores of the non-selected labels of the block~$\blockOfLabel{i}$ by~$\score_i'$, and the scores of the non-selected labels of the block~$\blockOfLabel{j}$ by~$\score_j'$.
    
    Analogous to all the previous propositions, the truncation argument gives us that all the values $\score_c^*$ belong to the segment~$[-1, -0.5 + \frac{\eps}{2}]$.
    For all the optimal values $\score_c^*$, $c \neq j$, the following identity holds:
    \[
    \score_c^* = \begin{cases}
    \score_j^*, \quad \text{if\;\; $\consbreakpoint \q_c^* + (1-\consbreakpoint)\Q_{\blockOfLabel{c}}^*-1 \geq \score_j^*$},\\
    \consbreakpoint \q_c^* + (1-\consbreakpoint)\Q_{\blockOfLabel{c}}^*-1, \quad \text{otherwise}.
    \end{cases}
    \]
    Given that $\score_i^*$ wants to equal the maximal possible value $-0.5 + \frac{\eps}{2}$, it implies that $\score_i^* = \score_j^*$.
    Denote this value by~$\score$.
    
    By, plugging the values of~$\qv^*$ and $\scorev^*$ provided above into the objective of~\eqref{eq:calibrationFunction:mixedLoss:proofSquare:obj1}, we get
    \begin{align}
    \notag
    \frac{1}{2\outputvarcard}\Bigl(  
    &(\score \!+\! 0.5 \!-\! \frac{\eps}{2})^2
    \!+\!
    (\blocksize\!-\!1)(\score_i' \!+\! 1\!-\!(1\!-\!\consbreakpoint)(0.5\!+\!\frac{\eps}{2}))^2
    + \\
    \label{eq:calibrationFunction:mixedLoss:proofSquare:case2:obj2}
    &(\score \!+\! 0.5 \!+\! \frac{\eps}{2})^2
    \!+\!
    (\blocksize\!-\!1)(\score_j' \!+\! 1\!-\!(1\!-\!\consbreakpoint)(0.5\!-\!\frac{\eps}{2}))^2
    \Bigr).
    \end{align}
    By minimizing~\eqref{eq:calibrationFunction:mixedLoss:proofSquare:case2:obj2} without constraints, we get $\score^*=-0.5$, $\score_i'^*=\frac12(1+\eps)(1-\consbreakpoint)-1$, $\score_j'^*=\frac12(1-\eps)(1-\consbreakpoint)-1$.
    We now need to compare $\score_i'^*$ and $\score_j'^*$ with $\score^*$ to satisfy the constraints $\score^* \geq \score_i'^*$ and $\score^* \geq \score_j'^*$.
    First, we have that
    \[
    \score^*\!-\!\score_j'^* = \frac12(\consbreakpoint+\eps-\consbreakpoint\eps) \geq 0,\; \text{for $0\!\leq\!\eps\!\leq\!1$ and $0\!\leq\!\consbreakpoint\!\leq\!1$}.
    \]
    Second, we have 
    \[
    \score^*-\! \score_i'^* \!=\! \frac12(\consbreakpoint-\eps+\consbreakpoint\eps) \geq 0,\; \text{for $0\!\leq\!\eps\!\leq\!\frac{\consbreakpoint}{1-\consbreakpoint}$ and $0\!\leq\!\consbreakpoint\!\leq\! 1$}.
    \]
    We can now conclude that when $\eps\leq\frac{\consbreakpoint}{1-\consbreakpoint}$ we have both $\score_i'$ and $\score_j'$ equal to their unconstrained minimum points leading to $\calibrationfunc_{ij}(\eps)=\frac{\eps^2}{4\outputvarcard}$.
    
    Now, consider the case $\eps>\frac{\consbreakpoint}{1-\consbreakpoint}$.
    We have the constraint $\score \geq \score_i'$ violated, so at the minimum we have $\score_i'= \score$. The new unconstrained minimum w.r.t.~$\score$ equals $\score^* = \frac{1}{\blocksize+1}(-1-(\blocksize-1)(1- \frac12(1-\consbreakpoint)(1-\eps)))$.
    We now show that the inequality $\score^* \geq \score_j'^*$ still holds. 
    We have
    \[
    \score^* - \score_j'^* = \frac{\consbreakpoint+\eps \blocksize - \consbreakpoint \eps \blocksize}{\blocksize+1} \geq 0, \; \text{for \:$0\leq\eps\leq 1$ \:and \:$0\leq \consbreakpoint \leq 1$}.
    \]
    Substitution of $\score_i'^*=\score^*$ and $\score_j'^*$ into~\eqref{eq:calibrationFunction:mixedLoss:proofSquare:case2:obj2} gives us 
    \[
    \frac{1}{\outputvarcard} \Bigl( \frac{\eps^2 \blocksize}{2(\blocksize+1)} - \frac{\consbreakpoint(\eps+1)(\blocksize-1)}{4(\blocksize+1)}(2\eps - \eps \consbreakpoint - \consbreakpoint)\Bigr),
    \]
    which equals $\calibrationfunc_{ij}(\eps)$ for $1\geq \eps>\frac{\consbreakpoint}{1-\consbreakpoint}$.
    
    Comparing cases~1 and~2, we observe that $\calibrationfunc_{ij}(\eps)$ from case~2 is never larger than the one of case~1, thus case~2 provides the overall calibration function~$\calibrationfunc_{ij}(\eps)$.
\end{proof}

\begin{proposition}
    \label{th:calibrationFunction:mixedLoss:hardConstr}
    If the scores~$\scorev$ are constrained to be equal inside the blocks, i.e.\ belong to the subspace~$\scoresubset_{01,\numblocks} = \colspace(\lossmatrix_{01,\numblocks}) \subseteq \R^k$, then the calibration function
    \[
    \calibrationfunc_{\surrogatelossquad,\lossmatrix_{01,\numblocks,\consbreakpoint},\scoresubset_{01,\numblocks}}(\eps)
    =
    \begin{cases}
    \frac{(\eps-\frac{\consbreakpoint}{2})^2}{4\numblocks}\frac{(\frac{\consbreakpoint \numblocks}{\outputvarcard}+1-\consbreakpoint)^2}{(1-\frac{\consbreakpoint}{2})^2}, \;&\text{$\frac{\consbreakpoint}{2} \leq \eps \leq 1$}, \\
    0,\;&\text{$0\leq \eps \leq \frac{\consbreakpoint}{2}$}
    \end{cases}
    \]
    shows that the surrogate is consistent up to level~$\frac{\consbreakpoint}{2}$.
\end{proposition}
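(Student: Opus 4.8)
The plan is to follow the template of the proofs of Propositions~\ref{th:calibrationFunction:block01loss} and~\ref{th:calibrationFunction:mixedLoss}, with one genuinely new ingredient coming from the restriction $\scorev \in \scoresubset_{01,\numblocks}$.

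First I would record the form of the excess surrogate risk in this constrained setting. Since $\colspace(\lossmatrix_{01,\numblocks,\consbreakpoint}) = \R^{\outputvarcard}$ is \emph{not} contained in $\scoresubset_{01,\numblocks}$, Lemma~\ref{th:quadSurrogateExcess} does not apply verbatim; minimizing $\surrogateweightedquad(\scorev,\qv)$ over $\scorev \in \scoresubset$ instead gives the obvious generalization $\excess\surrogateweightedquad(\scorev,\qv) = \tfrac{1}{2\outputvarcard}\| \scorev + \proj_{\scoresubset_{01,\numblocks}}(\lossmatrix_{01,\numblocks,\consbreakpoint}\qv) \|_2^2$ (using $\scorev^\transpose \lossmatrix\qv = \scorev^\transpose \proj_\scoresubset \lossmatrix\qv$ for $\scorev \in \scoresubset$). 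The projection onto $\scoresubset_{01,\numblocks}$ replaces each coordinate by the average over its block, so combining this with the expression~\eqref{eq:calibrationFunction:mixedLoss:expectedLoss} for $\lossmatrix_{01,\numblocks,\consbreakpoint}\qv$, the value of $\proj_{\scoresubset_{01,\numblocks}}(\lossmatrix_{01,\numblocks,\consbreakpoint}\qv)$ on block $v$ equals $1 - \alpha \Q_v$ with $\alpha := \tfrac{\consbreakpoint}{\blocksize} + 1 - \consbreakpoint = \tfrac{\consbreakpoint\numblocks}{\outputvarcard} + 1 - \consbreakpoint$. Writing $\score^{(v)}$ for the common score value on block $v$, the objective becomes $\tfrac{1}{2\numblocks}\sum_{v=1}^{\numblocks} (\score^{(v)} + 1 - \alpha \Q_v)^2$.

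Next I would invoke Lemma~\ref{th:breakingSymmetries}, this time \emph{keeping} the constraints $i,j \in \predictor(\scoresubset_{01,\numblocks})$. The key new observation is that a block-constant score vector is maximized on entire blocks, after which the tie-breaking rule selects the smallest index, so $\predictor(\scoresubset_{01,\numblocks})$ contains exactly one label per block; hence the selected labels $i \neq j$ necessarily lie in two \emph{distinct} blocks. (The ``same-block'' case, available in the unconstrained setting and responsible for level-$\consbreakpoint$ behavior there, is simply ruled out: with $q_i = 1$ one would otherwise get value $0$ up to $\eps = \consbreakpoint$.) By the block symmetries all admissible pairs $(i,j)$ yield the same value, so $\calibrationfunc_{\surrogatelossquad,\lossmatrix_{01,\numblocks,\consbreakpoint},\scoresubset_{01,\numblocks}}(\eps) = \calibrationfunc_{ij}(\eps)$ for $\blockOfLabel{i} \neq \blockOfLabel{j}$. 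Repeating the mass-moving and within-block concentration arguments of Proposition~\ref{th:calibrationFunction:mixedLoss} (which get simpler here, the within-block scores being already forced equal), one reduces to configurations with all mass on blocks $\blockOfLabel{i}$ and $\blockOfLabel{j}$, with all of block $\blockOfLabel{i}$'s mass on $i$ and none of block $\blockOfLabel{j}$'s mass on $j$ (possible since $\blocksize \geq 2$). This leaves one free scalar $P := \Q_{\blockOfLabel{i}}$ (so $\Q_{\blockOfLabel{j}} = 1 - P$), for which $\lossweighted_i(\qv) = 1 - P$ and $\lossweighted_j(\qv) = \consbreakpoint + (1-\consbreakpoint)P$; label $i$ is the loss-minimizer among the block representatives once $P \geq \tfrac{1-\consbreakpoint}{2-\consbreakpoint}$, and the excess-loss constraint $\lossweighted_j - \lossweighted_i \geq \eps$ is exactly $P \geq P_0 := \tfrac{\eps + 1 - \consbreakpoint}{2-\consbreakpoint}$.

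Finally I would carry out the remaining minimization over the block scores and over $P \geq P_0$. Taking $\score^{(v)} = \alpha\Q_v - 1 = -1$ on the empty blocks kills their summands and is compatible with $\score^{(\blockOfLabel{j})} \geq \score^{(v)}$; on the two active blocks the unconstrained optimal values are $\alpha P - 1$ and $\alpha(1-P) - 1$, and the predictor constraint $\score^{(\blockOfLabel{j})} \geq \score^{(\blockOfLabel{i})}$ is slack precisely when $P \leq \tfrac12$. Since $P_0 \leq \tfrac12 \iff \eps \leq \tfrac{\consbreakpoint}{2}$, for $\eps \leq \tfrac{\consbreakpoint}{2}$ the choice $P = P_0$ makes the objective vanish, giving $\calibrationfunc = 0$; for $\eps > \tfrac{\consbreakpoint}{2}$ the constraint binds, so $\score^{(\blockOfLabel{i})} = \score^{(\blockOfLabel{j})}$, and minimizing $(\score + 1 - \alpha P)^2 + (\score + 1 - \alpha(1-P))^2$ over this common value gives $2\alpha^2(P - \tfrac12)^2$, hence objective $\tfrac{\alpha^2}{\numblocks}(P - \tfrac12)^2$, minimized at $P = P_0$. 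Substituting $P_0 - \tfrac12 = \tfrac{\eps - \consbreakpoint/2}{2-\consbreakpoint}$, $2 - \consbreakpoint = 2(1 - \tfrac{\consbreakpoint}{2})$ and $\alpha = \tfrac{\consbreakpoint\numblocks}{\outputvarcard} + 1 - \consbreakpoint$ turns $\tfrac{\alpha^2(P_0 - 1/2)^2}{\numblocks}$ into the claimed expression. The main obstacle is the bookkeeping in the reduction step --- the mass-moving operations with their score compensations, exactly as in Proposition~\ref{th:calibrationFunction:mixedLoss}; the only conceptually new point, and the reason the consistency threshold drops from $\consbreakpoint$ to $\tfrac{\consbreakpoint}{2}$, is that restricting to $\scoresubset_{01,\numblocks}$ collapses $\predictor(\scoresubset)$ to one representative per block, forcing $i$ and $j$ into different blocks.
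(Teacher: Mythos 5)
Your proposal is correct and follows essentially the same route as the paper's proof: re-deriving the excess surrogate on the restricted subspace (your projection/block-averaging viewpoint gives exactly the paper's $\frac{1}{2\numblocks}\sum_v(\scoreparam_v+1-\alpha\Q_v)^2$ with $\alpha=\frac{\consbreakpoint}{\blocksize}+1-\consbreakpoint$), using the tie-breaking argument to collapse $\predictor(\scoresubset_{01,\numblocks})$ to one label per block so that $i$ and $j$ must lie in distinct blocks, reducing to two active blocks with $\q_j=0$, and solving the resulting one-dimensional quadratic problem. The only (cosmetic) difference is that you treat both regimes uniformly through the single parameter $P$ and the equivalence $P_0\leq\tfrac12\iff\eps\leq\tfrac{\consbreakpoint}{2}$, whereas the paper handles $\eps\leq\tfrac{\consbreakpoint}{2}$ separately by exhibiting an explicit zero-objective feasible point.
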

When $\consbreakpoint=0$, we have $H(\eps) = \frac{\eps^2}{4 \numblocks}$ as in Proposition~\ref{th:calibrationFunction:block01loss:hardConstr}.
When $\consbreakpoint > 0$ we have $H(\eps) = 0$ for small~$\eps$, which corresponds to the case of inconsistent surrogate (0-1 loss and constrained scores).
\begin{proof}
    This proof combines ideas from Proposition~\ref{th:calibrationFunction:mixedLoss} and Proposition~\ref{th:calibrationFunction:block01loss:hardConstr}.
    
    Note that contrary to all the previous results, Lemma~\ref{th:quadSurrogateExcess} is not applicable, because, for $\numblocks < \outputvarcard$, we have that $\colspace(\lossmatrix_{01,\numblocks,\consbreakpoint}) = \R^\outputvarcard \not\subset \scoresubset_{01,\numblocks} = \colspace(\lossmatrix_{01,\numblocks})$.
    
    We now derive an analog of Lemma~\ref{th:quadSurrogateExcess} for this specific case.
    We define the subspace of scores~$\scoresubset_{01,\numblocks} = \{ \scorematrix \scoreparamv \mid \scoreparamv \in \R^{\numblocks} \}$ with a matrix $\scorematrix : = \scorematrix_{01,\numblocks} \in \R^{\outputvarcard \times \numblocks}$ with columns containing the indicator vectors of the blocks.
    We have $\scorematrix^\transpose \scorematrix = \blocksize \id_{\numblocks}$ and thus $(\scorematrix^\transpose \scorematrix)^{-1} = \frac{1}{\blocksize} \id_{\numblocks}$.
    We shortcut the loss matrix $\lossmatrix_{01,\numblocks,\consbreakpoint}$ to $\lossmatrix$ and rewrite it as
    \[
    \lossmatrix = \consbreakpoint \lossmatrix_{01} + (1-\consbreakpoint)\lossmatrix_{01,\numblocks} =
    \one_\outputvarcard \one_\outputvarcard^\transpose - \consbreakpoint \id_\outputvarcard - (1-\consbreakpoint)\scorematrix\scorematrix^\transpose.
    \]
    By redoing the derivation of Lemma~\ref{th:quadSurrogateExcess}, we arrive at a different excess surrogate:
    \begin{align*}
    \surrogateweighted(\scorev(\scoreparamv), \qv) &= \frac{1}{2\outputvarcard}(\blocksize \scoreparamv^\transpose \scoreparamv + 2 \scoreparamv^\transpose \scorematrix^\transpose \lossmatrix \qv) + r(\qv), \\
    \scoreparamv^* &:= \argmin\nolimits_\scoreparamv \surrogateweighted(\scorev(\scoreparamv), \qv) =  -\frac{1}{\blocksize} \scorematrix^\transpose \lossmatrix \qv, \\
    \excess\surrogateweighted(\scorev(\scoreparamv), \qv) &= \frac{1}{2\outputvarcard}(\blocksize \scoreparamv^\transpose \scoreparamv + 2 \scoreparamv^\transpose \scorematrix^\transpose \lossmatrix \qv + \frac{1}{\blocksize} \qv^\transpose \lossmatrix^\transpose \scorematrix \scorematrix^\transpose \lossmatrix \qv)\\
    &=
    \frac{\blocksize}{2\outputvarcard} \| \scoreparamv +  \frac{1}{\blocksize} \scorematrix^\transpose \lossmatrix \qv\|_2^2\\
    &= 
    \frac{\blocksize}{2\outputvarcard} \sum_{v=1}^\blocksize
    (\scoreparam_v + 1 - (1-\consbreakpoint) \Q_v - \frac{\consbreakpoint}{\blocksize} \Q_v)^2,
    \end{align*}
    where $\Q_v = \sum_{c \in \labelsOfBlock{v}} \q_c$ is the total probability mass on block~$v$ and $\labelsOfBlock{v} \subset \outputdomain$ denotes the set of labels of block~$v$.
    
    Analogously to Proposition~\ref{th:calibrationFunction:mixedLoss} we can now apply Lemma~\ref{th:breakingSymmetries} and obtain $\calibrationfunc_{ij}(\eps)$.
    \begin{align}
    \label{eq:calibrationFunction:mixedLoss:hardConstr:proofSquare:obj1}
    \calibrationfunc_{ij}(\eps) \!\!=\!\! \min_{\scoreparamv, \qv} \; & \frac{\blocksize}{2\outputvarcard} \sum_{v=1}^\numblocks (\scoreparam_v + 1 - (1\!-\!\consbreakpoint) \Q_v - \frac{\consbreakpoint}{\blocksize} \Q_v)^2, \\
    \notag \mbox{s.t.} \;& \consbreakpoint (\q_i - \q_j) + (1-\consbreakpoint)(\Q_{\blockOfLabel{i}} - \Q_{\blockOfLabel{j}}) \geq \eps, \\
    \notag & \consbreakpoint (\q_i - \q_c) + (1-\consbreakpoint)(\Q_{\blockOfLabel{i}} - \Q_{\blockOfLabel{c}}) \geq 0, \forall c \\
    \notag & \scoreparam_{\blockOfLabel{j}} \geq \scoreparam_u, \; \forall u = 1,\dots,\numblocks, \\
    \notag & \sum_{c=1}^\outputvarcard \q_c = 1, \quad \q_c \geq 0, \; \forall c.
    \end{align}
    The main difference to~\eqref{eq:calibrationFunction:mixedLoss:proofSquare:obj1} consists in the fact that we now minimize w.r.t.\ $\scoreparamv$ instead of~$\scorev$.
    
    Note that because of the way the predictor~$\predictor(\scorev(\scoreparamv))$ resolves ties (among the labels with maximal scores it always picks the label with the smallest index), not all labels can be predicted.
    Specifically, only one label from each block can be picked.
    This argument allows us to assume that $\blockOfLabel{i} \neq \blockOfLabel{j}$ in the remainder of this proof.
    
    First, let us prove the case for $\eps \leq \frac{\consbreakpoint}{2}$.
    We explicitly provide a feasible assignment of variables where the objective equals zero.
    We set $\q_i = \frac12$ and $\q_c = \frac{1}{2(\blocksize-1)}$, $c \in \labelsOfBlock{\blockOfLabel{j}} \setminus \{j\}$.
    All the other labels (including~$j$ and the unselected labels of the block~$\blockOfLabel{i}$) receive zero probability mass.
    This assignment of~$\qv$ implies~$\Q_{\blockOfLabel{i}} = \Q_{\blockOfLabel{j}} = \frac12$ and the zero mass on the other blocks.
    We also set $\scoreparam_{\blockOfLabel{i}}$ and $\scoreparam_{\blockOfLabel{j}}$ to $(1\!-\!\consbreakpoint) \frac12 + \frac{\consbreakpoint}{\blocksize} \frac12 - 1$ to ensure zero objective value.
    Verifying other feasibility constraints we have 
    $\consbreakpoint (\q_i - \q_j) + (1-\consbreakpoint)(\Q_{\blockOfLabel{i}} - \Q_{\blockOfLabel{j}}) = \frac{\consbreakpoint}{2} \geq \eps$ and $\consbreakpoint (\q_i - \q_c) + (1-\consbreakpoint)(\Q_{\blockOfLabel{i}} - \Q_{\blockOfLabel{c}}) = \consbreakpoint(\frac12-\frac{1}{2(\blocksize-1)}) \geq 0$, $c \in \labelsOfBlock{\blockOfLabel{j}} \setminus \{j\}$.
    Other constraints are trivially satisfied.
    
    Now, consider the case of $\eps > \frac{\consbreakpoint}{2}$.
    As usual, we claim  the following values of the variables~$\scorev$ and~$\qv$ result in an optimal point. We have $\q_c^*=0$, $c\not\in \labelsOfBlock{ij}$; $\scoreparam_v^* = -1$, $v \not\in\{\blockOfLabel{i}, \blockOfLabel{j}\}$; and $\q_i^* =\Q_{\blockOfLabel{i}}^* = \frac{1+\eps-\consbreakpoint}{2-\consbreakpoint}$; $\q_c^*=0$, $c \in \labelsOfBlock{\blockOfLabel{i}} \setminus \{i\}$ (other labels in the block~$\blockOfLabel{i}$);  $\q_{j}^* = 0$, $\q_c^* = \frac{1-\eps}{(2-\consbreakpoint)(\blocksize-1)}$, $c \in \labelsOfBlock{\blockOfLabel{j}} \setminus \{j\}$ (other labels in the block~$\blockOfLabel{j}$).
    
    First, we will show that we can consider only configurations with all the probability mass on the two selected blocks~$\blockOfLabel{i}$ and $\blockOfLabel{j}$.
    Given some optimal variables $\scorev^*$ and $\qv^*$, the operation (with $\delta = \sum_{c \in \outputdomain \setminus \labelsOfBlock{ij}} \q_c^*$)
    \begin{align*}
    \q_c^* &:= 0, \; c \in \outputdomain \setminus \labelsOfBlock{ij}, &
    \q_i^* &:= \q_i^* + \frac{\delta}{2}, &
    \q_{j}^* &:= \q_{j}^* + \frac{\delta}{2}, \\
    \scoreparam_v^* &:= -1, \; \mathrlap{v \notin \{\blockOfLabel{i}, \blockOfLabel{j}\},} \\
    \scoreparam_{\blockOfLabel{i}}^* &:= \scoreparam_{\blockOfLabel{i}}^* + \mathrlap{\frac{\delta}{2}(1-\consbreakpoint+\frac{\consbreakpoint}{\blocksize}),} \\
    \scoreparam_{\blockOfLabel{j}}^* & := \scoreparam_{\blockOfLabel{j}}^* + \mathrlap{\frac{\delta}{2}(1-\consbreakpoint+\frac{\consbreakpoint}{\blocksize})}
    \end{align*}
    can only decrease the objective of~\eqref{eq:calibrationFunction:mixedLoss:hardConstr:proofSquare:obj1} because the summands corresponding to the unselected $\numblocks - 2$ blocks are set to zero.
    All the constraints stay feasible and the values corresponding to the blocks $\blockOfLabel{i}$ and $\blockOfLabel{j}$ do not change.
    
    Now, we move the mass within the two selected blocks.
    To start with, moving the mass within one block does not change the objective, because it depends only on $\Q_{\blockOfLabel{c}}$ and not on $\qv$ directly.
    In the block~$\blockOfLabel{i}$, it is safe to increase $\q_i$ and decrease the mass on the other labels, because $\q_i$ enters the constraints with the positive sign and while the others enter with the negative sign. So we let $\q_c = 0$ for $c \in \labelsOfBlock{\blockOfLabel{i}}  / \{i \}$ and $ \Q_{\blockOfLabel{i}}=\q_i$. We also have $\Q_{\blockOfLabel{j}}=1-\q_i$ as the mass on all other blocks is zero.
    
    Moving mass within the block~$\blockOfLabel{j}$ is more complicated, as moving mass to some label~$c$ of this block might violate the constraints of \eqref{eq:calibrationFunction:mixedLoss:hardConstr:proofSquare:obj1} 
    on $\q_i$. We start by considering the first constraint in~$\eqref{eq:calibrationFunction:mixedLoss:hardConstr:proofSquare:obj1}$, using $\Q_{\blockOfLabel{j}}=1-\q_i$, we get:
    \begin{equation}
    \label{eq:mixedLoss:firstConstraint}
    \q_i \geq \eps + \consbreakpoint \q_j + (1-\consbreakpoint)(1-\q_i).
    \end{equation}
    By using $\q_j \geq 0$ and $\eps \geq \frac{\consbreakpoint}{2}$, the inequality~\eqref{eq:mixedLoss:firstConstraint} implies that $\q_i \geq \frac{1}{2}$ and thus that 
    \begin{equation}
    \label{eq:mixedLossQcProperty}
    \q_c \leq \Q_{\blockOfLabel{j}} \leq \frac{1}{2} \quad \forall c \in  \labelsOfBlock{\blockOfLabel{j}} \, .
    \end{equation}
    Now the second constraint of~\eqref{eq:calibrationFunction:mixedLoss:hardConstr:proofSquare:obj1} that we want to satisfy is:
    \begin{equation}
        \label{eq:mixedLoss:secondConstraint}
        \q_i \geq \consbreakpoint \q_c + (1-\consbreakpoint)\Q_{\blockOfLabel{j}} \quad \forall c \in  \labelsOfBlock{\blockOfLabel{j}} \, .
    \end{equation}
    Using~\eqref{eq:mixedLossQcProperty}, we have that the RHS of~\eqref{eq:mixedLoss:secondConstraint} is $\leq 1/2$, and so since $\q_i \geq 1/2$, we have that~\eqref{eq:mixedLoss:secondConstraint} is satisfied for any valid mass distribution on block~$\blockOfLabel{j}$ (i.e. such that $\Q_{\blockOfLabel{j}} \leq 1/2$). Using $\q_j = 0$ gives the most possibilities for the value of~$\q_i$ in the constraint~\eqref{eq:mixedLoss:firstConstraint}. Moreover, the constraint~\eqref{eq:mixedLoss:firstConstraint} is more stringent than the constraint~\eqref{eq:mixedLoss:secondConstraint}, i.e. if it is satisfied, the second one is also satisfied; so we focus only on the first constraint.
    
    As in the proof of all other propositions, we can make the constraint~\eqref{eq:mixedLoss:firstConstraint} an equality for the optimum by generalizing the transformation of~\eqref{eq:tfproofs:truncatingToEps} which makes the constraint tight without changing the objective and maintaining feasibility. So~\eqref{eq:mixedLoss:firstConstraint} as an equality with $\q_j = 0$ yields the value
    $$
    \q_i^* = \frac{1+\eps-\consbreakpoint}{2-\consbreakpoint}.
    $$
    So to summarize at this point, we have $\q_j^* = 0$; $\q_c^* = 0$, $c \in \labelsOfBlock{\blockOfLabel{i}} \setminus \{i\}$; $\q_c^* = 0$, $\labelsOfBlock{\blockOfLabel{i}} \not\in \{\blockOfLabel{i}, \blockOfLabel{j}\}$.  $\q_i^* = \frac{1+\eps-\consbreakpoint}{2-\consbreakpoint}$ and $\Q_{\blockOfLabel{j}}=1-\q_i^*$. The precise distribution of mass for $c \in \labelsOfBlock{\blockOfLabel{j}}/\{j\}$ does not matter (any distribution is feasible and does not influence the objective, only the total mass matters), but for concreteness, we can choose them to all have the same mass yielding $\q_c^* = \frac{1-\eps}{(2-\consbreakpoint)(\blocksize-1)}$, $c \in \labelsOfBlock{\blockOfLabel{j}} \setminus \{j\}$.
    
    We now finish the computation of $\calibrationfunc_{ij}(\eps)$.
    First, we note that, due to the truncation argument similar to the one mentioned in the paragraph after~\eqref{eq:tfproofs:truncatingToEps}, we have  both $\scoreparam_i^*$ and $\scoreparam_j^*$ in the segment $[(1-\consbreakpoint)\Q_{\blockOfLabel{j}}^* + \frac{\consbreakpoint}{\blocksize} \Q_{\blockOfLabel{j}}^*-1, (1-\consbreakpoint)\Q_{\blockOfLabel{i}}^* + \frac{\consbreakpoint}{\blocksize} \Q_{\blockOfLabel{i}}^*-1]$ and since $\scoreparam_j^* \geq \scoreparam_i^*$, we have $\scoreparam_j^* = \scoreparam_i^* =: \scoreparam$ at the optimum.
    
    Substituting the values~$\Q_{\blockOfLabel{i}}^*$ and $\Q_{\blockOfLabel{j}}^*$ provided above into the objective of~\eqref{eq:calibrationFunction:mixedLoss:hardConstr:proofSquare:obj1} and performing unconstrained minimization w.r.t. $\scoreparam$ (we use the help of MATLAB symbolic toolbox to set the derivative to zero) we get
    \[
    \scoreparam^* = -\frac{\blocksize-\consbreakpoint+\consbreakpoint \blocksize}{2\blocksize}
    \]
    and, consequently, 
    \[
    \calibrationfunc_{ij}(\eps) =
    \frac{\blocksize(\eps-\frac{\consbreakpoint}{2})^2(\frac{\consbreakpoint}{\blocksize}+1-\consbreakpoint)^2}{4\outputvarcard(1-\frac{\consbreakpoint}{2})^2},
    \]
    which finishes the proof.
\end{proof}

\section{Constants in the SGD rate}
\label{sec:tranferAndSgd}
To formalize the learning difficulty by bounding the required number of iterations to get a good value of the risk (Theorem~\ref{th:calibrationSGD:kernels}), we need to bound the constants~$D$ and~$M$.
In this section, we provide a way to bound these constants for the quadratic surrogate~$\surrogatelossquad$~\eqref{eq:quadrLoss} under a simplifying assumption slightly stronger than the well-specified model Assumption~\ref{th:well_specification}.

Consider the family of score functions~$\scorefuncset_{\scorematrix, \hilbertspace}$ defined via an explicit feature map~$\featuremap(\inputvarv) \in \hilbertspace$, i.e., $\scorefunc_\parammatrix(\inputvarv) = \scorematrix \parammatrix \featuremap(\inputvarv)$, where a matrix $\scorematrix \in \R^{\outputvarcard \times \scoresubspacedim}$ defines the structure and an operator (which we think of as a matrix with one dimension being infinite) $\parammatrix : \hilbertspace \to \R^{\scoresubspacedim}$ contains the learnable parameters.
Then the surrogate risk can be written as
\[
\risk_\surrogateloss(\scorefunc_\parammatrix)
=
\E_{(\inputvarv, \outputvarv) \sim \data} \frac{1}{2\outputvarcard} \| \scorematrix \parammatrix \featuremap(\inputvarv) + \lossmatrix(:, \outputvarv) \|^2_{\R^\outputvarcard}
\]
and its stochastic w.r.t. $(\inputvarv, \outputvarv)$ gradient as
\begin{equation}
\gv_{\inputvarv, \outputvarv}(\parammatrix) = \frac{1}{\outputvarcard} \scorematrix^\transpose (\scorematrix \parammatrix \featuremap(\inputvarv) + \lossmatrix(:, \outputvarv)) \featuremap(\inputvarv)^\transpose
\end{equation}
where $\lossmatrix(:, \outputvarv)$ denotes the column of the loss matrix corresponding to the correct label~$\outputvarv$.
Note that computing the stochastic gradient requires performing products $\scorematrix^\transpose \scorematrix$ and $\scorematrix^\transpose \lossmatrix(:, \outputvarv)$ for which direct computation is intractable when $\outputvarcard$ is exponential, but which can be done in closed form for the structured losses we consider (the Hamming and block 0-1 loss).
More generally, these operations require suitable inference algorithms.

To derive the constants, we use a simplifying assumption stronger than Assumption~\ref{th:well_specification} in the case of quadratic surrogate: we assume that the conditional~$\q_c(\inputvarv)$, seen as a function of $\inputvarv$, belongs to the RKHS~$\hilbertspace$, which by the reproducing property implies that for each~$c=1,\dots,\outputvarcard$, there exists~$v_c \in \hilbertspace$ such that $\q_c(\inputvarv) = \langle v_c, \featuremap(\inputvarv) \rangle_\hilbertspace$ for all~$\inputvarv\in\inputdomain$.
Concatenating all~$v_c$, we get an operator $V: \hilbertspace \to \R^\outputvarcard$.
To derive the bound, we also assume that $\sum_{c=1}^{\outputvarcard} \|v_c\|_\hilbertspace \leq \qbound$ and $\|\featuremap(\inputvarv)\|_{\hilbertspace} \leq \rkhsbound$ for all~$\inputvarv \in \inputdomain$. In the following, we use the notation $\qv_\inputvarv$ to denote the vector in $\R^\outputvarcard$ with components $\q_c(\inputvarv)$, $c=1,\dots,\outputvarcard$, for a fixed $\inputvarv$, and thus $\qv_\inputvarv = V\featuremap(\inputvarv)$.

Under these assumptions, we can write the theoretical minimum of the surrogate risk.
The gradient of the surrogate risk gives
\begin{align*}
\outputvarcard \gradient_{\parammatrix} \risk_\surrogateloss(\scorefunc_\parammatrix)
&=
\scorematrix^\transpose \scorematrix \parammatrix \E_{\inputvarv \sim \data_{\inputdomain}}(\featuremap(\inputvarv) \featuremap(\inputvarv)^\transpose) + \scorematrix^\transpose \lossmatrix \E_{\inputvarv \sim \data_{\inputdomain}}(\qv_\inputvarv \featuremap(\inputvarv)^\transpose ) \\
&=
\scorematrix^\transpose \scorematrix \parammatrix \E_{\inputvarv \sim \data_{\inputdomain}}(\featuremap(\inputvarv) \featuremap(\inputvarv)^\transpose) + \scorematrix^\transpose \lossmatrix V \E_{\inputvarv \sim \data_{\inputdomain}}(\featuremap(\inputvarv) \featuremap(\inputvarv)^\transpose) \\
&= 
\left( \scorematrix^\transpose \scorematrix \parammatrix + \scorematrix^\transpose \lossmatrix V \right)  \E_{\inputvarv \sim \data_{\inputdomain}}(\featuremap(\inputvarv) \featuremap(\inputvarv)^\transpose) .
\end{align*}
Setting the content of the parenthesis to zero gives that $\parammatrix^* = -(\scorematrix^\transpose \scorematrix)^\pinv \scorematrix^\transpose \lossmatrix V$ is a solution to the stationary condition equation $\gradient_{\parammatrix} \risk_\surrogateloss(\scorefunc_\parammatrix) = 0$. 

We can now bound the Hilbert-Schmidt norm of this choice of optimal parameters~$\parammatrix^*$ as
\begin{align*}
\|\parammatrix^*\|_{HS}
&=
\| (\scorematrix^\transpose \scorematrix)^\pinv \scorematrix^\transpose \lossmatrix V \|_{HS}
&&
\\
&\leq
\| (\scorematrix^\transpose \scorematrix)^\pinv \scorematrix^\transpose \|_{HS} \|  \lossmatrix V \|_{HS} 
&&
\quad\text{//submultiplicativity of $\|\cdot\|_{HS}$}
\\
&\leq
\sqrt{\scoresubspacedim} \| (\scorematrix^\transpose \scorematrix)^\pinv \scorematrix^\transpose \|_2 \|  \lossmatrix V \|_{HS}
&&
\quad\text{//connection of $\|\cdot\|_{HS}$ and $\|\cdot\|_2$ via $\scoresubspacedim = \rank(\scorematrix)$}
\\
&=
\sqrt{\scoresubspacedim} \sigmamin^{-1}(\scorematrix) \|  \lossmatrix V \|_{HS}
&&
\quad\text{//rotation invariance of $\|\cdot\|_2$}
\\
&\leq 
\sqrt{\scoresubspacedim} \sigmamin^{-1}(\scorematrix) \sqrt{\outputvarcard} \Lmax \qbound
=: D &&
\quad\text{//the definition of $\|\cdot\|_{HS}$ and triangular inequality}
\end{align*}
where $\|\cdot\|_{HS}$ and $\| \cdot \|_2$ denote the Hilbert-Schmidt and spectral norms, respectively, and $\sigmamin^{-1}(\scorematrix)$ stands for the smallest singular value of the matrix~$\scorematrix$.
The last inequality follows from the definition of the Hilbert-Schmidt norm $\|  \lossmatrix V \|_{HS}^2 = \sum_{i=1}^\outputvarcard \|\sum_{c=1}^{\outputvarcard} \lossmatrix(i, c) v_c\|_{\hilbertspace}^2$ and from the triangular inequality $\|\sum_{c=1}^{\outputvarcard} \lossmatrix(i, c) v_c\|_{\hilbertspace} \leq \sum_{c=1}^{\outputvarcard} |\lossmatrix(i, c)|  \|v_c\|_{\hilbertspace} \leq \Lmax \qbound$ thus giving $\|  \lossmatrix V \|_{HS} \leq \sqrt{\outputvarcard} \Lmax \qbound$. 

Analogously, we now bound the Hilbert-Schmidt norm of the stochastic gradient~$\gv_{\inputvarv, \outputvarv}(\parammatrix)$.
\begin{align*}
\|\gv_{\inputvarv, \outputvarv}(\parammatrix)\|_{HS}
&\leq
\frac{1}{\outputvarcard}\| \scorematrix^\transpose \scorematrix \parammatrix \featuremap(\inputvarv) + \scorematrix^\transpose \lossmatrix(:, \outputvarv)) \|_2 \|\featuremap(\inputvarv)\|_{\hilbertspace}
\\
&\leq
\frac{1}{\outputvarcard}(\| \scorematrix^\transpose \scorematrix \parammatrix \featuremap(\inputvarv)\|_2 + \|\scorematrix^\transpose \lossmatrix(:, \outputvarv)) \|_2) \|\featuremap(\inputvarv)\|_{\hilbertspace}
\\
&\leq
\frac{1}{\outputvarcard}(\| \scorematrix^\transpose \scorematrix\|_2 \|\parammatrix\|_{HS} \|\featuremap(\inputvarv)\|_{\hilbertspace} + \|\scorematrix\|_2 \|\lossmatrix(:, \outputvarv)) \|_2) \|\featuremap(\inputvarv)\|_{\hilbertspace}
\\
&\leq
\frac{1}{\outputvarcard}\sigmamax^2(\scorematrix) D \rkhsbound^2 + \frac{1}{\outputvarcard}\sigmamax(\scorematrix) \sqrt{\outputvarcard}  \Lmax \rkhsbound =: M
\end{align*}
where $\rkhsbound$ is an upper bound on $\|\featuremap(\inputvarv)\|_{\hilbertspace}$ and $\sigmamax(\scorematrix)$ is a maximal singular value of~$\scorematrix$.
Here the first inequality follows from the fact that the rank of~$\gv_{\inputvarv, \outputvarv}(\parammatrix)$ equals 1 and from submultiplicativity of the spectral norm.
We also use the inequality~$\|\parammatrix \featuremap(\inputvarv)\|_{2} \leq \|\parammatrix\|_{HS} \|\featuremap(\inputvarv)\|_{\hilbertspace}$, which follows from the properties of the Hilbert-Schmidt norm.

The bound of Theorem~\ref{th:rkhsSgdConvergence} contains the quantity $DM$ and the step size of ASGD depends on $\frac{D}{M}$, so, to be practical, both quantities cannot be exponential (for numerical stability; but the important quantity is the number of iterations from Theorem~\ref{th:calibrationSGD:kernels}).
We have
\begin{align*}
DM
& =
\condnum^2(\scorematrix) \rkhsbound^2 \scoresubspacedim \Lmax^2 \qbound^2 + \condnum(\scorematrix) \rkhsbound \sqrt{\scoresubspacedim} \Lmax^2 \qbound
=
\Lmax^2 \xi(\condnum(\scorematrix) \sqrt{\scoresubspacedim} \rkhsbound  \qbound), \quad \xi(z) = z^2 + z,
\\
\frac{M}{D}
&=\frac{\sigmamax^2(\scorematrix)}{\outputvarcard} \rkhsbound^2 + \frac{\sigmamax(\scorematrix)\sigmamin(\scorematrix)}{\outputvarcard} \frac{\rkhsbound}{\qbound\sqrt{\scoresubspacedim}}
\end{align*}
where $\condnum(\scorematrix) = \frac{\sigmamax}{\sigmamin}$ is the condition number of $\scorematrix$.
Note that the quantity~$DM$ is invariant to the scaling of the matrix~$\scorematrix$.
The quantity~$\frac{D}{M}$ scales proportionally to the square of the scale of~$\scorematrix$ and thus rescaling~$\scorematrix$ can always bring it to~$\bigO(1)$. For the rest of the analysis, we consider $\rkhsbound$ and $\qbound$ to be well-behaved constants and thus focus on the dependence of the quantity~$DM$ on $\scorematrix$ and $\lossmatrix$. 

\subsection{Constants for specific losses}
\label{sec:tranferAndSgd:constants}
We now estimate the product~$DM$ from~\eqref{eq:sgdConstant} for the 0-1, block 0-1 and Hamming losses.
For the definition of the losses and the corresponding matrices~$\scorematrix$, we refer to Section~\ref{sec:bounds:interpretations}.

\textbf{0-1 loss.}
For the 0-1 loss~$\lossmatrix_{01}$ and $\scorematrix = \id_\outputvarcard$, we have $\Lmax=1$, $\scoresubspacedim=\outputvarcard$, $\sigmamin=\sigmamax=1$, thus $DM = \bigO(\outputvarcard)$ is very large leading to very slow convergence of ASGD.

\textbf{Block 0-1 loss.}
For the block 0-1 loss~$\lossmatrix_{01,\numblocks}$ and matrix~$\scorematrix_{01,\numblocks}$, we have $\Lmax=1$, $\scoresubspacedim=\numblocks$, $\sigmamin=\sigmamax=\sqrt{\blocksize}$, thus $DM = \bigO(\numblocks)$.

\textbf{Hamming loss.} For the Hamming loss, we have $\Lmax=1$, $\scoresubspacedim=\log_2 \outputvarcard + 1$, $\condnum(\scorematrix_{\hamming,\hamminglen}) \leq \log_2{\outputvarcard}+2$ (see the derivation in Section~\ref{sec:hammingLossProps}). Finally, we have $DM = \bigO(\log_2^3 \outputvarcard)$.

\section{Properties of the basis of the Hamming loss}
\label{sec:hammingLossProps}
As defined in~\eqref{eq:hammingLoss}, the matrix~$\lossmatrix_{\hamming,\hamminglen} \in \R^{\outputvarcard \times \outputvarcard}$ is the matrix of the Hamming loss between tuples of $\hamminglen$~binary variables, and the number of labels equals $\outputvarcard = 2^\hamminglen$.
Also recall that
$
\scorematrix_{\hamming,\hamminglen} := [ \frac12 \one_{2^\hamminglen}, \hv^{(1)}, \dots, \hv^{(\hamminglen)} ],
$
$
(\hv^{(\hammingindex)})_{\hat{\outputvarv}} := [\hat{\outputvar}_\hammingindex = 1],
$
$\hammingindex = 1,\dots,\hamminglen$.
We have $
\scoresubset_{\hamming,\hamminglen}
=
\colspace(\scorematrix_{\hamming,\hamminglen})
=
\colspace(\lossmatrix_{\hamming,\hamminglen})
$
 and $\rank(\lossmatrix_{\hamming,\hamminglen}) \!=\! \rank(\scorematrix_{\hamming,\hamminglen}) \!=\! \hamminglen\!+\!1$.

We now explicitly compute $\max_{i\neq j}\|\proj_{\scoresubset_{\hamming,\hamminglen}} \Delta_{ij}\|_2^2$.
We shortcut~$\scorematrix_{\hamming,\hamminglen}$ by~$\scorematrix$ and compute
\begin{equation}
\label{eq:hammingLoss:gramMatrix}
\scorematrix^\transpose \scorematrix = 2^{\hamminglen-2} \begin{bmatrix}
1 & 1 & \cdots  & 1 \\
1 & 2 & 1 & \cdots \\
1 & 1 & 2 & \cdots \\
\cdots & \cdots & \cdots & 1\\
1 & \cdots& 1  & 2
\end{bmatrix}.
\end{equation}
We can compute the inverse matrix explicitly as well:
\begin{equation}
\label{eq:hammingLoss:gramMatrixInv}
(\scorematrix^\transpose \scorematrix)^{-1} = 2^{2-\hamminglen} \begin{bmatrix}
1+\hamminglen & -1 & \cdots  & -1 \\
-1 & 1 & 0 & \cdots \\
-1 & 0 & 1 & \cdots \\
\cdots & \cdots & \cdots & 0\\
-1 & \cdots & 0  & 1
\end{bmatrix}.
\end{equation}
The vector $\scorematrix^\transpose \Delta_{ij}$ equals the difference of the two rows of~$\scorematrix$, i.e., $[0, c_1, \dots, c_\hamminglen]^\transpose \in \R^{\hamminglen+1}$ with each~$c_\hammingindex \in \{-1, 0, +1\}$. We explicitly compute the square norm~$\|\proj_{\scoresubset_{\hamming,\hamminglen}} \Delta_{ij}\|_2^2$: 
\[
\|\proj_{\scoresubset_{\hamming,\hamminglen}} \Delta_{ij}\|_2^2
=
\Delta_{ij}^\transpose \scorematrix (\scorematrix^\transpose \scorematrix)^{-1} \scorematrix^\transpose \Delta_{ij}
=
[0, c_1, \dots, c_\hamminglen] (\scorematrix^\transpose \scorematrix)^{-1} [0, c_1, \dots, c_\hamminglen]^\transpose
= 2^{2-\hamminglen} \sum_{t=1}^\hamminglen c_t^2,
\]
where the last equality follows from the identity submatrix of~\eqref{eq:hammingLoss:gramMatrixInv} and from the zero in the first position of the vector~$\scorematrix^\transpose \Delta_{ij}$.
The quantity $\|\proj_{\scoresubset_{\hamming,\hamminglen}} \Delta_{ij}\|_2^2$ is maximized when none of~$c_t$ equals zero, which is achievable, e.g., when the label~$i$ corresponds to all zeros and the label~$j$ to all ones.
We now have $\max_{i\neq j}\|\proj_{\scoresubset_{\hamming,\hamminglen}} \Delta_{ij}\|_2^2 = \frac{4\hamminglen}{2^\hamminglen}$.

We now compute the smallest and largest eigenvalues of the Gram matrix~\eqref{eq:hammingLoss:gramMatrix} for $\scorematrix_{\hamming,\hamminglen}$.
Ignoring the scaling factor~$2^{\hamminglen-2}$, we see by Gaussian elimination that the determinant and thus the product of all eigenvalues equals~$1$.
If we subtract~$\id_{\hamminglen+1}$ the matrix becomes of rank~$2$, meaning that $\hamminglen-1$ eigenvalues equal~$1$.
The trace, i.e., the sum of the eigenvalues of~\eqref{eq:hammingLoss:gramMatrix}, without the scaling factor~$2^{\hamminglen-2}$ equals~$2\hamminglen+1$.
Summing up, we have~$\lambdamin \lambdamax = 1$ and $\lambdamin + \lambdamax = \hamminglen + 2$.
We can now compute~$\lambdamin = \frac12(\hamminglen+2 - \sqrt{\hamminglen^2 + 4\hamminglen}) \in [\frac{1}{\hamminglen+2}, \frac{1}{\hamminglen}]$ and~$\lambdamax = \frac12(\hamminglen+2 + \sqrt{\hamminglen^2 + 4\hamminglen}) \in [\hamminglen+1, \hamminglen+2]$.
By putting back the multiplicative factor, we get 
$
\sigmamin = \sqrt{\lambdamin} \geq \frac{\sqrt{\outputvarcard}}{2\sqrt{\log_2{\outputvarcard}+2}}
$
and
$
\sigmamax = \sqrt{\lambdamax} \leq \frac{\sqrt{\outputvarcard}}{2}\sqrt{\log_2{\outputvarcard}+2}
$,
and thus the condition number is~$\condnum \leq \log_2{\outputvarcard}+2$.

\end{document}